%% 
%% Copyright 2007-2025 Elsevier Ltd
%% 
%% This file is part of the 'Elsarticle Bundle'.
%% ---------------------------------------------
%% 
%% It may be distributed under the conditions of the LaTeX Project Public
%% License, either version 1.3 of this license or (at your option) any
%% later version.  The latest version of this license is in
%%    http://www.latex-project.org/lppl.txt
%% and version 1.3 or later is part of all distributions of LaTeX
%% version 1999/12/01 or later.
%% 
%% The list of all files belonging to the 'Elsarticle Bundle' is
%% given in the file `manifest.txt'.
%% 
%% Template article for Elsevier's document class `elsarticle'
%% with numbered style bibliographic references
%% SP 2008/03/01
%% $Id: elsarticle-template-num.tex 272 2025-01-09 17:36:26Z rishi $
%%
\documentclass[preprint,12pt]{elsarticle}
\makeatletter
\def\ps@pprintTitle{%
  \let\@oddhead\@empty
  \let\@evenhead\@empty
  \let\@oddfoot\@empty
  \let\@evenfoot\@oddfoot}
\makeatother
%% Use the option review to obtain double line spacing
%% \documentclass[authoryear,preprint,review,12pt]{elsarticle}

%% Use the options 1p,twocolumn; 3p; 3p,twocolumn; 5p; or 5p,twocolumn
%% for a journal layout:
%% \documentclass[final,1p,times]{elsarticle}
%% \documentclass[final,1p,times,twocolumn]{elsarticle}
%% \documentclass[final,3p,times]{elsarticle}
%% \documentclass[final,3p,times,twocolumn]{elsarticle}
%% \documentclass[final,5p,times]{elsarticle}
%% \documentclass[final,5p,times,twocolumn]{elsarticle}

%% For including figures, graphicx.sty has been loaded in
%% elsarticle.cls. If you prefer to use the old commands
%% please give \usepackage{epsfig}

%% The amssymb package provides various useful mathematical symbols
\usepackage{amssymb}
%% The amsmath package provides various useful equation environments.
\usepackage{amsmath}
%% The amsthm package provides extended theorem environments
%% \usepackage{amsthm}

%% The lineno packages adds line numbers. Start line numbering with
%% \begin{linenumbers}, end it with \end{linenumbers}. Or switch it on
%% for the whole article with \linenumbers.
%% \usepackage{lineno}

\journal{}

\usepackage[utf8]{inputenc} % allow utf-8 input
\usepackage[T1]{fontenc}    % use 8-bit T1 fonts
\usepackage{hyperref}       % hyperlinks
\usepackage{url}            % simple URL typesetting
\usepackage{booktabs}       % professional-quality tables
\usepackage{amsfonts}       % blackboard math symbols
\usepackage{nicefrac}       % compact symbols for 1/2, etc.
\usepackage{microtype}      % microtypography
\usepackage{xcolor}         % colors

\usepackage{algorithmic}
\usepackage{algorithm}
\usepackage[english]{babel}
\usepackage{amsmath}
\usepackage{amsthm}
\newtheorem{theorem}{Theorem}
\newtheorem{lemma}{Lemma}
\newtheorem{corollary}{Corollary}
\newtheorem{remark}{Remark}

\newtheorem{definition}{Definition}
\usepackage{amssymb}
\usepackage{graphicx}
\usepackage{epstopdf}
\usepackage{subfigure}
\usepackage{makecell}
\usepackage{wrapfig}
\usepackage{enumitem}

\begin{document}

\begin{frontmatter}

%% Title, authors and addresses

%% use the tnoteref command within \title for footnotes;
%% use the tnotetext command for theassociated footnote;
%% use the fnref command within \author or \affiliation for footnotes;
%% use the fntext command for theassociated footnote;
%% use the corref command within \author for corresponding author footnotes;
%% use the cortext command for theassociated footnote;
%% use the ead command for the email address,
%% and the form \ead[url] for the home page:
%% \title{Title\tnoteref{label1}}
%% \tnotetext[label1]{}
%% \author{Name\corref{cor1}\fnref{label2}}
%% \ead{email address}
%% \ead[url]{home page}
%% \fntext[label2]{}
%% \cortext[cor1]{}
%% \affiliation{organization={},
%%             addressline={},
%%             city={},
%%             postcode={},
%%             state={},
%%             country={}}
%% \fntext[label3]{}

\title{When a Reinforcement Learning Agent Encounters Unknown Unknowns}

%% use optional labels to link authors explicitly to addresses:
%% \author[label1,label2]{}
%% \affiliation[label1]{organization={},
%%             addressline={},
%%             city={},
%%             postcode={},
%%             state={},
%%             country={}}
%%
%% \affiliation[label2]{organization={},
%%             addressline={},
%%             city={},
%%             postcode={},
%%             state={},
%%             country={}}

\author[uoe,ustc]{Juntian Zhu%\corref{thanks}
} %% Author name
\ead{%zjt1229@mail.ustc.edu.cn; 
%s2760647@ed.ac.uk
J.Zhu-98@sms.ed.ac.uk}
%\cortext[thanks]{This work was completed when J. Zhu was visiting the University of Edinburgh.}
\author[uoe]{Miguel de Carvalho}
% \ead{Miguel.deCarvalho@ed.ac.uk}
\author[ustc]{Zhouwang Yang}
% \ead{yangzw@ustc.edu.cn}
\author[uoe]{Fengxiang He}
% \ead{F.He@ed.ac.uk}

%% Author affiliation

\affiliation[uoe]{organization={University of Edinburgh},%Department and Organization
            % addressline={}, 
%            city={Edinburgh},
            % postcode={}, 
            % state={},
            country={United Kingdom}
}
            
\affiliation[ustc]{organization={University of Science and Technology of China},%Department and Organization
            % addressline={}, 
%            city={Hefei},
            % postcode={}, 
            % state={},
            country={China}
}

%% Abstract
\begin{abstract}
%% Text of abstract
An AI agent might surprisingly find she has reached an unknown state which she has never been aware of -- an unknown unknown. 
We mathematically ground this scenario in reinforcement learning: an agent, after taking an action calculated from value functions $Q$ and $V$ defined on the {\it {aware domain}}, reaches a state out of the domain.
To enable the agent to handle this scenario, we propose an {\it episodic Markov decision {process} with growing awareness} (EMDP-GA) model, taking a new {\it noninformative value expansion} (NIVE) approach to expand value functions to newly aware areas: when an agent arrives at an unknown unknown, value functions $Q$ and $V$ whereon are initialised by noninformative beliefs -- the averaged values on the aware domain. This design is out of respect for the complete absence of knowledge in the newly discovered state. The upper confidence bound momentum Q-learning is then adapted to the growing awareness for training the EMDP-GA model. We prove that (1) the regret of our approach is asymptotically consistent with the state of the art (SOTA) without exposure to unknown unknowns in an extremely uncertain environment,
and (2) our computational complexity and space complexity are comparable with the SOTA
-- these collectively suggest that though an unknown unknown is surprising, it will be asymptotically properly discovered with decent speed and an affordable cost.
\end{abstract}

%%Graphical abstract
% \begin{graphicalabstract}
% %\includegraphics{grabs}
% \end{graphicalabstract}

%%Research highlights
% \begin{highlights}
% \item We propose pisodic Markov decision process with growing awareness (EMDP-GA) to model the unknown unknowns in reinforcement learning.
% \item We design a noninformative value expansion (NIVE) to expand the domain of value functions.
% \item We embed NIVE into upper confidence bound momentum Q-learning (UCBMQ) for training EMDP-GA model, achieving a sublinear regret bound with respect to the number of training episodes $T$.
% \end{highlights}

%% Keywords
\begin{keyword}
%% keywords here, in the form: keyword \sep keyword
unknown unknown \sep episodic Markov decision process \sep safe reinforcement learning
%% PACS codes here, in the form: \PACS code \sep code

%% MSC codes here, in the form: \MSC code \sep code
%% or \MSC[2008] code \sep code (2000 is the default)

\end{keyword}

\end{frontmatter}

\section{Introduction}
\label{sec::introduction}
Reinforcement learning aims to train an agent that takes sequential actions, moving her state in an environment, 
%In reinforcement learning, an agent interacts with its environment and optimizes a policy to 
for maximising her cumulative rewards received from interactions with the environment \cite{sutton1998reinforcement}. 
A major model in reinforcement learning is Markov decision process (MDP), assuming the state transitions and rewards from actions do not depend on the history given the current state  \cite{white1989markov}.
Further, episodic MDP (EMDP) allows varying rewards from executing the same action at the same state but in different timings \cite{zimin2013online}.
%Existing research typically assumes that the agent possesses complete knowledge of the environment from the outset.
%However, in many practical applications, it is unrealistic to have full information prior to extensive exploration.
In parallel, value-based algorithms %, a large family of reinforcement learning algorithms, 
learn value functions $Q$ and $V$ over the space of $(\text{state}, \text{action})$-pairs and the state space, respectively, which then deliver a policy model for the agent, taking the actions that maximise $Q$ \cite{sutton1998reinforcement}. The domain of value functions is naturally restricted by the awareness of all possible states -- mathematically, they are defined over all aware states, termed as {\it aware domain} in this paper. Correspondingly, the exterior of the aware domain is termed as {\it unaware domain}.

%\begin{wrapfigure}{r}{0.4\textwidth} \label{sail}
%    \centering
%    \includegraphics[width=0.4\textwidth]{figures/sail.pdf}
    
    % \caption{A ship departs from a harbor, only aware of the surroundings -- the light area. The ship knows nothing about the dark area and needs to explore the world.}
%\end{wrapfigure}

Consider a scenario that is completely untouched in the literature of reinforcement learning -- an agent might surprisingly find she has reached {an unknown state} where she is even unaware of. The former US Secretary of Defense Donald Rumsfeld referred to this scenario as an unknown unknown in a news conference \cite{press}. %\jt{?}. %This scenario has been found in broad, security-critical areas, including, %enhancing decision-making under uncertainty and improving the performance of diverse systems, ranging from autonomous navigation and intelligent sensor networks to adaptive control systems.
Unknown unknowns broadly exist in security-critical applications, such as autonomous vehicles \cite{van2018autonomous}, %smart-home devices \cite{yang2021towards}, 
quantitative finance \cite{liu2021finrl} and economics \cite{chang2020decentralized}, when the environment is extremely under-explored and uncertain.
%An example of UU is shown in the figure. Consider a ship departing from a harbor with awareness limited to its immediate vicinity—marked as the light area. The vessel has no knowledge of the dark area, which is an UU, and must explore to acquire information.
% \fh{This scenario has been found in broad, security-critical areas, including xxx}
% \jt{addressed}
%In this work, we consider a setting in which the agent continuously gathers information during exploration—a phenomenon we term 'growing awareness'.
%During exploration, the agent may encounter an unknown state that it is unaware of - an unknown unknown (UU).
%Consequently, a natural question arises:
%\begin{quote}
%\textit{Can we design an algorithm that effectively handles scenarios involving UU states?}
%\end{quote}
% Consequently, the agent must dynamically adapt its policy based on this evolving understanding, presenting a novel challenge.

%Value-based methods are widely employed in reinforcement learning, where the policy is derived from the value functions $Q$ and $V$.
%In our setting, the agent has no information regarding UU states and is even unaware of their existence.
%The agent's awareness of the state space determines the domains of $Q$ and $V$; for UU states, we provide prior estimates for these functions.
%Thus, we can view $Q$ and $V$ as being defined over the union of their known domains and the set of UU states.
%At each state, the agent chooses an action  that maximizes the expected reward based on $Q$ and $V$ and it may subsequently transition into a UU state outside these domains.

\begin{figure}[t]
\centering
\includegraphics[scale=0.45]{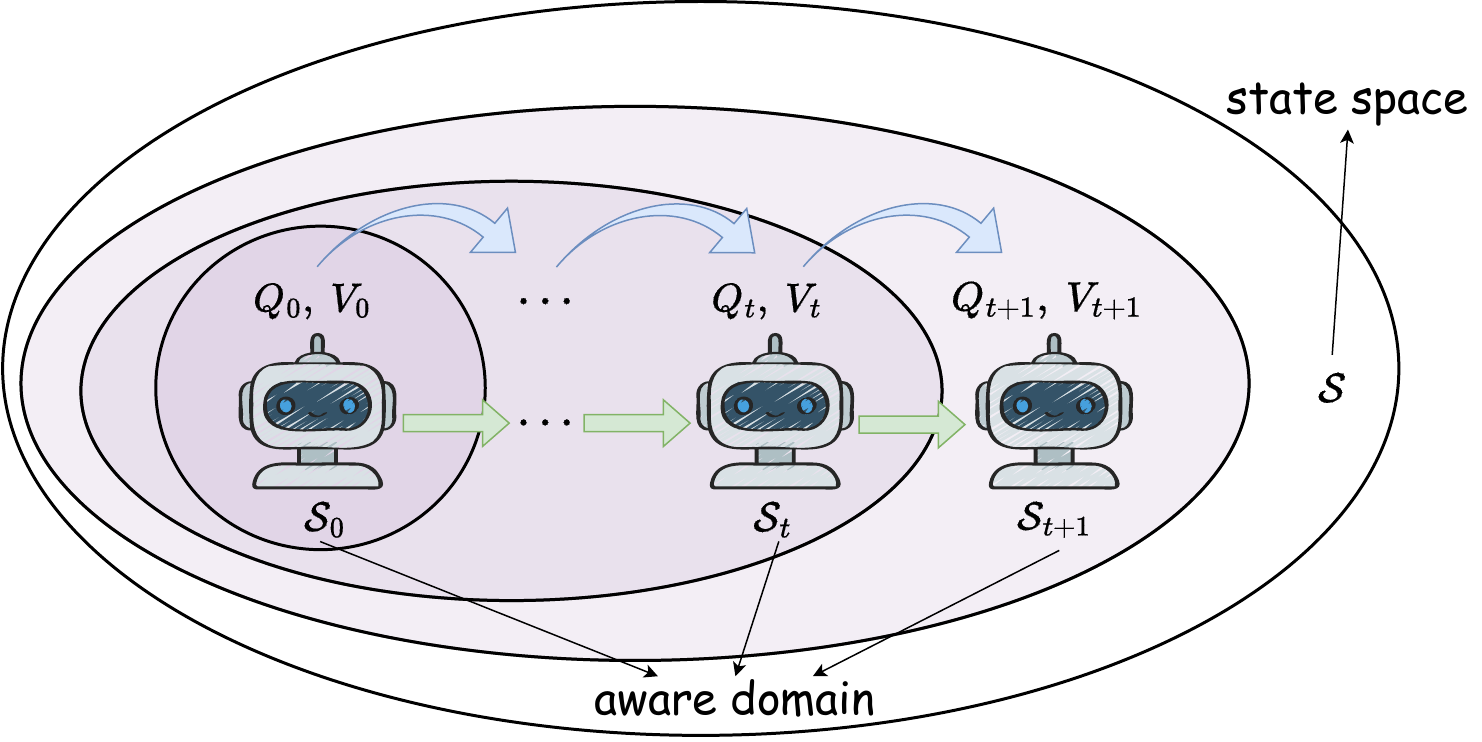}
\caption{Growing awareness}
\label{growing awareness}
\end{figure}

This paper proposes the first mathematical model to ground unknown unknowns in reinforcement learning. Although actions are calculated by maximising value functions $Q$ and $V$ that are {\it defined} on the aware domain, some of them could move the agent to {unknown unknown states in the unaware domain}.
%To empower the agent to handle this surprise, 
Based upon this, this paper designs a novel model, {\it episodic Markov decision {process} with growing awareness} (EMDP-GA), which %, that facilitates the awareness in an extremely under-explored world.
%When the agent goes to a UU state, outside of the aware domains, EMDP-GA 
employs a newly designed {\it noninformative value expansion} (NIVE) approach to expand value functions to the newly aware states, when the agent surprisingly reaches there. %goes to a UU state, outside of the aware domains, EMDP-GA. %, through which expands the definition domains of value functions.
{Figure \ref{growing awareness} illustrates the expansion of the aware domain, where $\mathcal{S}$ is the whole state space and $\mathcal{S}_t$ is the aware domain after exploring $t$ episodes.}
The initial values are chosen as a noninformative prior \cite{tibshirani1989noninformative}, such as the {averaged} values on the previous aware domain, out of respect for the extreme lack of knowledge in this unknown unknown state.
%Consequently, the policy—based on the current knowledge—must be continuously adapted to incorporate newly discovered states.
%In order to address the challenges posed by EMDP-GA, we initialize the value functions for UU states using average values, as no information is available for these states.
%Based upon Upper Confidence Bound Momentum Q-learning (UCBMQ), w
We further adapt %a new algorithm for training EMDP-GA, named 
%{\it upper confidence bound momentum Q-learning with growing awareness} (UCBMQ-GA) generalised upon 
upper confidence bound momentum Q-learning (UCBMQ) \cite{menard2021ucb} to the growing awareness for training the EMDP-GA, termed as UCBMQ-GA.
UCBMQ %is an adaption of Q-learning, using the 
employs a momentum term to correct the bias in Q-learning, but relies on the upper bounds of value functions $Q$ and $V$. 
{We prove that our NIVE can preserve this `upper‑bound' property when expanding the value functions, which will be invalidated by multiplying NIVE's expansion by any constant $d < 1$.} This confirms the efficiency of NIVE as well as our adaptation made in EMDP-GA and UCBMQ-GA.

%mitigating the negative impact on the regret. 
%\fh{This adaptation is non-trivial, because xxx.}
%UCBMQ-GA is a value-based method and the value functions $Q$ and $V$ are defined on the aware domain.
%The main idea %of UCBMQ-GA 
%is when the aware domains grow, the average values over the old aware domains are used to initialise $Q$ and $v$ over new states.
%The main idea of UCBMQ-GA is combining NIVE and UCBMQ to train EMDP-GA.
%UCBMQ-GA adapts an iterative two-step approach: (1) expanding value functions to newly discovered area for updating awareness, and (2) Updating functions using average values over known states.
%The core innovation of UCBMQ-GA is the use of NIVE for function updates, providing stability and reliability.
%For the training process, UCBMQ-GA follows the same methodology as UCBMQ.

%UCBMQ-GA extends the original Upper Confidence Bound Momentum Q-learning (UCBMQ)—designed for traditional reinforcement learning settings with complete state space information—to accommodate the progressive expansion of the agent’s awareness.
% \fh{describing your UCBMQ-GA. don't be too abstract}\jt{addressed}
%This UCBMQ-GA is inspired by upper confidence bound momentum Q-learning (UCBMQ), but can
%Compared to UCBMQ, the primary advantage of UCBMQ-GA is its capacity to 
%automatically adapt the policy as the agent’s aware domain expands.

Extensive theoretical analysis is conducted for verifying our approach.
%
% \begin{itemize}[leftmargin=12pt]
%     \item \textbf{Uncertainty in exposure to unknown unknown.}
% We define an {\it aware confidence} measure for quatifying the uncertainty of our agent. It is defined as %the estimates to values and optimal policy model, as 
% the gap between the estimators and the hidden optimal models on the awareness domain.
% The gap quantifies the margin available for policy improvement and thus serves as a measure of the agent’s confidence.
% Because confidence is inherently subjective and limited by the agent’s awareness, it is defined only over the current awareness domain.
% We prove that encountering an unknown unknown, an agent's aware confidence to her understanding over the whole universe suddenly drops. This is inevitable, but we theoretically justify that our NIVE has a decent property. 
%\begin{itemize}[leftmargin=12pt]
%    \item \textbf{Decent property of NIVE.}
%\jt{We theoretically justify that our NIVE has a decent property.
%In our framework, we maintain an upper bound on the optimal value function, and NIVE preserves this upper‑bound property when expanding the functions.
%Furthermore, we show that using any scalar multiple $d <1$ of the average values to perform the expansion may violate the upper‑bound guarantee.
%Therefore, employing the average values directly offers a robust and principled expansion strategy.}
%
%\item \textbf{Asymptotically consistent regret.} 
We analyse the regret over the whole course, which is defined as { the cumulative reward loss {over the whole learning process} against the hidden optimal policy}. The regret has an upper bound in order of 
%Afterwards, the  regret 
%Notably, the regret bound of UCBMQ-GA remains sublinear in $T$, i.e. 
$\tilde{\mathcal{O}}(\sqrt{ H^3SAT}+ H^4SA+H^2S\sqrt{T})$, where $S$ is the size of the state space, $A$ is the size of the action space, $T$ is the number of episodes, and $H$ is the length of an episode. %, and an episode is a sequence of interaction with the environment. 
This regret bound is sublinear with respect to $T$, and comparable with the state of the art (SOTA), including upper confidence bound-advantage \cite{zhang2020almost}, upper confidence bound momentum Q-learning \cite{menard2021ucb} and monotonic value propagation \cite{zhang2024settling}, despite our exposure to unknown unknowns, while they are not. %\fh{episode is defined as xxx}\jt{addressed} %and $T$ is the number of episodes.
%This result is consistent with the regret bound of UCBMQ.
%\item \textbf{Decent speed and affordable cost.} 
{The computing complexity and space complexity of employing the UCBMQ-GA {approach} for training the EMDP-GA model are of order $\mathcal{O}(H(S+A)T)$ and $\mathcal{O}(HS^2A)$, respectively, which are comparable with the SOTA \cite{menard2021ucb}.} %without exposure to unknown unknowns.
%\end{itemize}
 %so it dramatically hurts confidence and certainty of an agent, 
%
These results collectively suggest that although unknown unknowns are surprising, they can nonetheless be discovered asymptotically properly with a decent speed and an affordable cost.

\section{Related Works}
%\label{sec::related_works}

\paragraph{Episodic Markov decision process (EMDP)}
A large volume of literature has been on developing EMDP over the last decade.
Neu et al. study the adversarial stochastic shortest path problem in EMDPs \cite{neu2012adversarial}.
Zimin and Neu investigate EMDPs with a layered structure \cite{zimin2013online} .
Dann and Brunskill study EMDPs from the perspective of probably approximately correct (PAC) learning \cite{dann2015sample}.
% Typically, we note the well-known minimax regret lower bound, $\mathop{\min} \{ \sqrt{SAH^3T},HT \}$, established by \cite{domingues2021episodic} to offer a theoretical benchmark.
%We now review several related theoretical works on 
EMDP algorithms can be roughly classified into model-based and model-free approaches.
Model-based approaches learn the EMDP model prior to or concurrently with policy optimisation and require $\Omega(S^2AH)$ memory space to store the value function estimators \cite{moerland2023model}, in which recent developments have achieved lower regret and burn-in costs \cite{azar2017minimax, dann2019policy, zanette2019tighter, efroni2019tight, zhang2020almost, zhang2021reinforcement, zhang2024settling, li2024breaking}.
Here, burn-in costs are the
minimum sample size needed for an algorithm to operate sample-optimally.
In contrast, model-free approaches estimate the optimal value function or optimise the policy directly, without explicit model estimation, and require only $O(SAH)$ memory to store these estimators.
Extensive theoretical analyses of model-free approaches have also been conducted \cite{zhang2020almost, menard2021ucb, mao2021near, dann2021provably, tiapkin2022dirichlet, li2023breaking, agrawal2024optimistic}.
Amongst them, a remarkable result is a minimax regret lower bound, $\mathop{\min} \{ \sqrt{SAH^3T},HT \}$, established by Domingues et al. \cite{domingues2021episodic}, offering a theoretical benchmark.

\paragraph{Awareness and reverse Bayes}
% Awareness has been used to describe the knowledge about the decision universe, including acts and preference relations.
% Reverse Bayesianism has been proposed and developed to model the expanding universe of decision as the awareness grows, which has good theoretical results and lay the theoretical basis of later works about awareness(\cite{karni2013reverse}; \cite{karni2017awareness}; \cite{karni2015probabilistic}).
% Belief formation and reactions to unforeseen events according to reverse Bayesianism has also been experimentally studied (\cite{becker2020reverse}).
% This framework has been applied in some traditional economic scenarios, such as intertemporal asset pricing problem (\cite{viero2021intertemporal}).
% However, awareness has not been introduced in reinforcement learning.

Similar terms are seen in the literature of %Awareness has been used to characterize the knowledge of the decision universe, encompassing both actions and preference relations.
%Considering the growing awareness, there have been many works about 
choice theory \cite{grant2015preference,  piermont2017introspective, kochov2018behavioral}. 
Reverse Bayesianism is proposed to model the expanding decision universe as awareness grows %, yielding robust theoretical results that underpin subsequent work on awareness 
\cite{karni2013reverse,  karni2015probabilistic, karni2017awareness}.
Belief formation and responses to unforeseen events, as predicted by reverse Bayesianism, are experimentally studied \cite{becker2020reverse}.
This framework was then applied in economic theory \cite{viero2021intertemporal, chakravarty2022reverse, karni2021reverse} and epistemology \cite{zednik2016bayesian}.
However, no practical {approach} has been proposed to train an agent to be able to handle unknown unknowns.
%\jt{However, the concept of awareness has not yet been introduced in reinforcement learning.
%Methods for modeling growing awareness in reinforcement learning differ fundamentally from those in Reverse Bayesianism.
%Reverse Bayesianism does not learn the data gathered from interaction with the environment, while reinforcement learning methods concerning growing awareness are supposed to train an agent with gathered data to make decisions.
%Furthermore, the state space and the action space in reinforcement learning are independent of each other, which helps describe the characteristics of the environment and the agent.}

% \jt{However, the concept of awareness has not yet been introduced in reinforcement learning.
% Growing awareness in reinforcement learning differs technically from that in Reverse Bayesianism due to the distinct definitions of action in these frameworks.
% In Reverse Bayesianism, an action is a function mapping states to states, whereas in reinforcement learning, the action space is a set independent of the state space.
% Consequently, as awareness of the state space expands, it does not imply a corresponding increase in awareness of the action space.
% }
\section{Notations and Preliminaries}
\label{sec::preliminaries}

\paragraph{Episodic Markov decision process (EMDP)}
An EMDP is defined by a tuple $M=(\mathcal{S},\mathcal{A},P,r,H,T)$ \cite{fiechter1994efficient, kaufmann2021adaptive}, described below.  %which models an agent’s interaction with its environment.
{$\mathcal{S}$ and $\mathcal{A}$ are finite 
state space and finite action space, respectively; mathematically, they %$\mathcal{S}$ and $\mathcal{A}$ 
are subsets of Euclidean spaces.}
If an agent at a state $s \in \mathcal{S}$ takes an action $a \in \mathcal{A}$, it will be transited to a new state $s' \in \mathcal{S}$. 
An EMDP agent takes {\it episodic} actions for learning -- after an episode, the agent comes back to the initial state $s_1$, and starts the next episode.
%We call a sequence of state–action pairs as an episode; 
Usually, episodes involve the same number of time steps, termed as {\it horizon} $H \in \mathbb{N}$. 
{$T \in \mathbb{N}$ is the number of episodes in a learning process.}
% Suppose there are $T \in \mathbb{N}$ episodes in a learning process. 
We can thus denote the episode $t$ by the trajectory $(s_1^t, a_1^t, \cdots, s_H^t, a_H^t, s_{H+1}^t)$.
For any episode, at time step $h \in [H] \triangleq \{1, \cdots, H\}$, a transition function $P_h: \mathcal{S} \times \mathcal{A} \times \mathcal{S} \rightarrow [0,1] $ defines the probability $P_h(s'|s,a)$ of transition from state $s$ to state $s'$ when taking action $a$. This interaction leads to an immediate reward $r_h: \mathcal{S} \times \mathcal{A} \rightarrow \mathbb{R}$. %assigns a reward $r_h(s,a)$ for taking action $a$ at state $s$.
%If the initial state $s_1$ is drawn from a distribution $\mu$, an artificial state $s_0$ can be introduced with a transition function defined as $p(s|s_0,a)=\mu(s)$. 
%Thus, we analyse the case of fixed initial state without loss of generality.
Let $\pi_h(a|s)$ denote the probability of taking action $a$ at state $s$. A policy model $\pi=(\pi_1, \cdots, \pi_H) \in \Pi$ characterises the agent's action course. All possible policy models constitute a  policy space $\Pi= \left \{ \pi=(\pi_1, \cdots, \pi_H)| \pi_h : \mathcal{S} \times \mathcal{A} \rightarrow [0,1] \right \}$. We denote the policy used to get episode $t$ by $\pi^t \in \Pi$. %, where $\pi_h(\cdot|s)$ {is a distribution over} $\mathcal{A} $.
%For a policy $\pi=(\pi_1, \cdots, \pi_H) \in \Pi$, $\pi_h$ is the policy at time step $h$ and $\pi_h(a|s)$ denotes the probability of taking action $a$ at state $s$.
%In the learning process, an agent interacts with the environment over $T$ episodes.
%Episode $t$ is the trajectory $(s_1^t, a_1^t, \cdots, s_H^t, a_H^t, s_{H+1}^t)$ and the policy used during this episode is denoted by $\pi^t \in \Pi$.
%After observing episode $t$, the policy is updated from $\pi^t$ to $\pi^{t+1}$, which is then employed in episode $t+1$.
%This process repeats until $T$ episodes have been completed.

\paragraph{Value functions and regret}
At time step $h \in [H]$, for any state $s \in \mathcal{S}$ and action $a \in \mathcal{A}$, the Q-value function of a policy $\pi$ is defined as
$
Q_h^{\pi}(s,a)=r_h(s,a)+\mathbb{E}_{p,\pi}\left [\sum_{l=h+1}^H r_l(s_l,a_l)|s_h=s,a_h=a \right ]
$,
%where the expectation is taken over the randomness induced by the environment and the policy.
and the value function is defined as
$
V_h^{\pi}(s)=\mathbb{E}_{p,\pi}\left [ \sum_{l=h}^H r_l(s_l,a_l)|s_h=s \right ]
$.
Their relationship %between $Q_h^{\pi}$ and $V_h^{\pi}$, known as 
is governed by the Bellman equation %, is given by 
$$
V_h^{\pi}(s)=\sum_{a \in A} \pi(a|s)Q_h^{\pi}(s,a),\ \ 
Q_h^{\pi}(s,a)=r_h(s,a)+\sum_{s'\in S} p(s'|s,a)V_h^{\pi}(s')
.$$
% \paragraph{Agent}
%\paragraph{Optimal policy and regret}
The objective of reinforcement learning is to learn a policy that maximises the cumulative rewards, denoted by $V_1^{\pi}(s)$. 
Shreve and Bertsekas prove that, under certain conditions, there exists an optimal deterministic policy $\pi$ %such that, for any $s \in \mathcal{S}, a\in \mathcal{A}, h\in [H]$, %the corresponding Q-value function and value function 
satisfy 
$
V_h^{*}(s) \triangleq V_h^{\pi^*}(s)=\mathop{\max}_{\pi} V_h^{\pi}(s),\ 
Q_h^*(s,a) \triangleq Q_h^{\pi^*}(s,a)=\mathop{\max}_{\pi} Q_h^{\pi}(s,a)
$ \cite{shreve1978alternative}.
This result leads to the Bellman optimality equation:
$$
V_h^{*}(s)=\mathop{\max}_{a \in A} Q_h^*(s,a),\ \ 
Q_h^*(s,a)=r_h(s,a)+\sum_{s'\in S} p(s'|s,a)V_h^{*}(s').
$$
The performance of an algorithm is often evaluated by its regret against the optimal policy 
%We define the regret as
$
\mathcal{R}^T=\sum_{t=1}^T \left [V^*_1(s_1)-V_1^{\pi^t}(s_1) \right ]
$.
Note that $V_1^{\pi^t}(s_1) \le V^*_1(s_1)$ for any $t \in [T]$.

% \paragraph{Learning process}
% In the learning process, an agent interacts with the environment over $T$ episodes
% Episode $t$ is the trajectory $(s_1^t, a_1^t, \cdots, s_H^t, a_H^t, s_{H+1}^t)$ and the policy used during this episode is denoted by $\pi^t \in \Pi$.
% After observing episode $t$, the policy is updated from $\pi^t$ to $\pi^{t+1}$, which is then employed in episode $t+1$.
% This process repeats until $T$ episodes have been completed.

% \begin{figure}[h]
% \centering
% \includegraphics[scale=0.47]{figures/UCBMQ update.pdf}
% \caption{Updating value functions in UCBMQ}
% \label{UCBMQ update}
% \end{figure}

\paragraph{Upper confidence bound momentum Q-learning (UCBMQ)}

%Our approach builds upon the algorithm UCBMQ proposed by M\'{e}nard et al. .
UCBMQ is designed for training an EMDP model \cite{menard2021ucb}. %, on the basis of Q-learning
%A key technical ingredient of UCBMQ is the momentum term, which is employed to correct bias.
Its objective is to approximate the optimal policy by controlling the regret. It optimises four functions: {estimated}  Q-value function $Q_h^t(s,a)$,  bias-value function $V_{h,s,a}^{t}(s')$,  upper bound on the optimal Q-value function $\overline{Q}_h^t(s,a)$, and  upper bound on the optimal value function $\overline{V}_h^t(s)$.
In iteration $t$, the agent interacts with the environment to collect episode $t$, updating the visitation counts $n_h^t(s,a)$ for each state-action pair $(s,a)$.
Next, we update the count‐dependent parameters, including the bonus term $\beta_h^t(s,a)$.
Finally, we update the four value functions.
The core method of updating $Q_h^t(s,a)$ retains the standard Q-learning update, which is inspired by the Bellman optimality equation, and adds a momentum correction to remove the bias.
$V_{h,s,a}^{t}(s')$ is updated as a convex combination of the previous bias‐value and the optimistic next‐step value, $\overline{Q}_h^t(s,a)$ is updated by adding $\beta_h^t(s,a)$ to $Q_h^t(s,a)$ and $\overline{V}_h^t(s)$ is updated according to Bellman optimality equation.
The procedure of UCBMQ is described in \ref{details_of_UCBMQ}.
UCBMQ can guarantee a regret bound of $\tilde{\mathcal{O}}(\sqrt{ H^3SAT}+ H^4SA)$, where $H$, $S$, $A$ and $T$ are the horizon, the size of the state space, the size of the action space and the number of learning episodes, respectively.
This regret bound matches the lower bound $\Omega(\sqrt{SAH^3T})$ \cite{domingues2021episodic} for large enough $T$.

\section{Growing Awareness in Extremely Under-explored, Uncertain Environment}
\label{sec::awareness_of_agent}

The awareness of an agent can be considerably restricted and problematic in an extremely under-explored and uncertain environment. %the extent of an agent's knowledge about its environment, which 
We are aiming to grow the awareness along the agent's course of exploring the environment. %This section mathematically grounds the awareness, starting with the definitions of the aware domain and unaware domain.

\subsection{Mathematically Grounding Awareness in Reinforcement Learning}

\paragraph{An example of a spacecraft}

Consider a spacecraft controlled by a reinforcement learning agent. She has explored her homeland (or home planet), which is not necessarily fully completed. 
%Here, the earth can be viewed as the homeland of the spacecraft.
%It is now tasked with cleaning the entire house.
The agent's value functions are naturally defined on her homeland, which is her current aware domain.
% Some day, value functions tell the agent she needs to go beyond her homeland and explore the space.
{After taking an action calculated from the value functions defined on the aware domain}, the spacecraft may go out of her home planet. %, the {\it aware domain}. 
The spacecraft then starts to explore the previously unaware domain. %, using its sensors to gather information.
The exploration requires an initialisation for the value functions to kick off. This calls for an expansion strategy for the value functions to newly discovered states. %-- a noninformative guess is the most `reasonable' estimator given all accessible information, and has been sufficient. 
The value functions are optimised afterwards. %using further gathered new information.
After sufficient exploration, a good spacecraft is expected to develop an (near-)optimal policy -- this is exactly the goal of this paper.
Figure \ref{explore space} illustrates this example. %, where %the earth is the home planet, the red line indicates the spacecraft's trajectory, the grey area represents the unaware domain and the light area represents the aware domain.}

% \begin{wrapfigure}{r}{0.45\textwidth}
%     \centering
%     \includegraphics[width=0.43\textwidth]{figures/awareness set.pdf}
% \end{wrapfigure}

Mathematically, we translate awareness to the domain of value functions, as the definitions below.

\begin{definition}[aware domain]
The value functions are defined over the awareness domain.
In this domain, the agent is aware of each state’s existence but needs to fully explore it to ascertain its value.
\end{definition}

\begin{definition}[unaware domain]
The unaware domain comprises the states whose existence and values are both unknown to the agent.
\end{definition}

\begin{remark}
% The existence and values of UU states are unavailable to the agent.
% KU states, by contrast, are recognized but their values remain uncertain until fully explored.
In our framework, once a state in the unaware domain is explored, it moves to the aware domain and never reverts to the unaware domain.
States in the awareness domain remain there permanently.
Hence, the agent’s awareness monotonically increases without forgetting previously discovered states.
\end{remark}

\subsection{Episodic Markov Decision Processes with Growing Awareness}
\label{sec:EMDP-GA}

We now define our {\it episodic Markov decision processes with growing awareness} (EMDP-GA) model as a tuple $M_{GA}=(\mathcal{S},\mathcal{S}_0,\mathcal{A},P,r,H,T)$, where $\mathcal{S}_0 \subseteq \mathcal{S}$ is the initial aware domain.
% , while
% \fh{$\mathcal{S}$, $\mathcal{A}$, $P$, $r$, %defines the infinite state space and 
% $H$, and $T$ are the infinite state space, horizon, and the number of learning episodes, respectively, the same as EMDP.}
% \jt{?}
Note that the initial state of each learning episode is fixed as $s_1 \in \mathcal{S}_0$.
%Now we are ready to model awareness in reinforcement learning.
%Our model is named  
% Suppose the agent is modelled as an EMDP, %with growing awareness (EMDP-GA).
% %defined as a tuple 
% $M =(\mathcal{S},\mathcal{A},P,r,H,T)$. %where $\mathcal{S}$, $\mathcal{A}$, $P$, $r$, $H$, and $T$ are the infinite state space, infinite action space, transition function, immediate reward function, horizon, and the number of learning episodes, respectively.
% %$\mathcal{A}$ is the finite action space.
% %The transition function $P_h(s'|s,a)$ defines the probability of transitioning from state $s$ to $s'$ when taking action $a$ at time step $h$.
% %The immediate reward function $r_h: \mathcal{S} \times \mathcal{A} \rightarrow \mathbb{R}$ defines the immediate reward $r_h(s,a)$ for taking action $a$ in state $s$ at time step $h$. 
% We denote by $\mathcal{S}_0 \subseteq \mathcal{S}$ her initial aware domain. %about the state space.
% Note that the initial state is naturally located in this initial aware domain; i.e., $s_1 \in \mathcal{S}_0$.
If the agent reaches an unknown unknown state $s$ after taking an action, she will be aware of the existence of $s$, so that the aware domain grows by the state $s$.
Let $\mathcal{S}_t$ denote the aware domain after exploring $t$ episodes.
At episode $t+1$, the agent's trajectory is $ \left ( s^{t+1}_1,\cdots,s^{t+1}_H \right )$, and the aware domain is updated accordingly as $\mathcal{S}_{t+1}=\mathcal{S}_{t} \cup \left\{ s_{1}^{t+1},  \cdots,  s_H^{t+1} \right\}$.
Note that
$
s_1 \in \mathcal{S}_0 \subseteq \cdots \subseteq \mathcal{S}_T \subseteq \mathcal{S}
$.

\begin{figure}[h]
\centering
\includegraphics[scale=0.175]{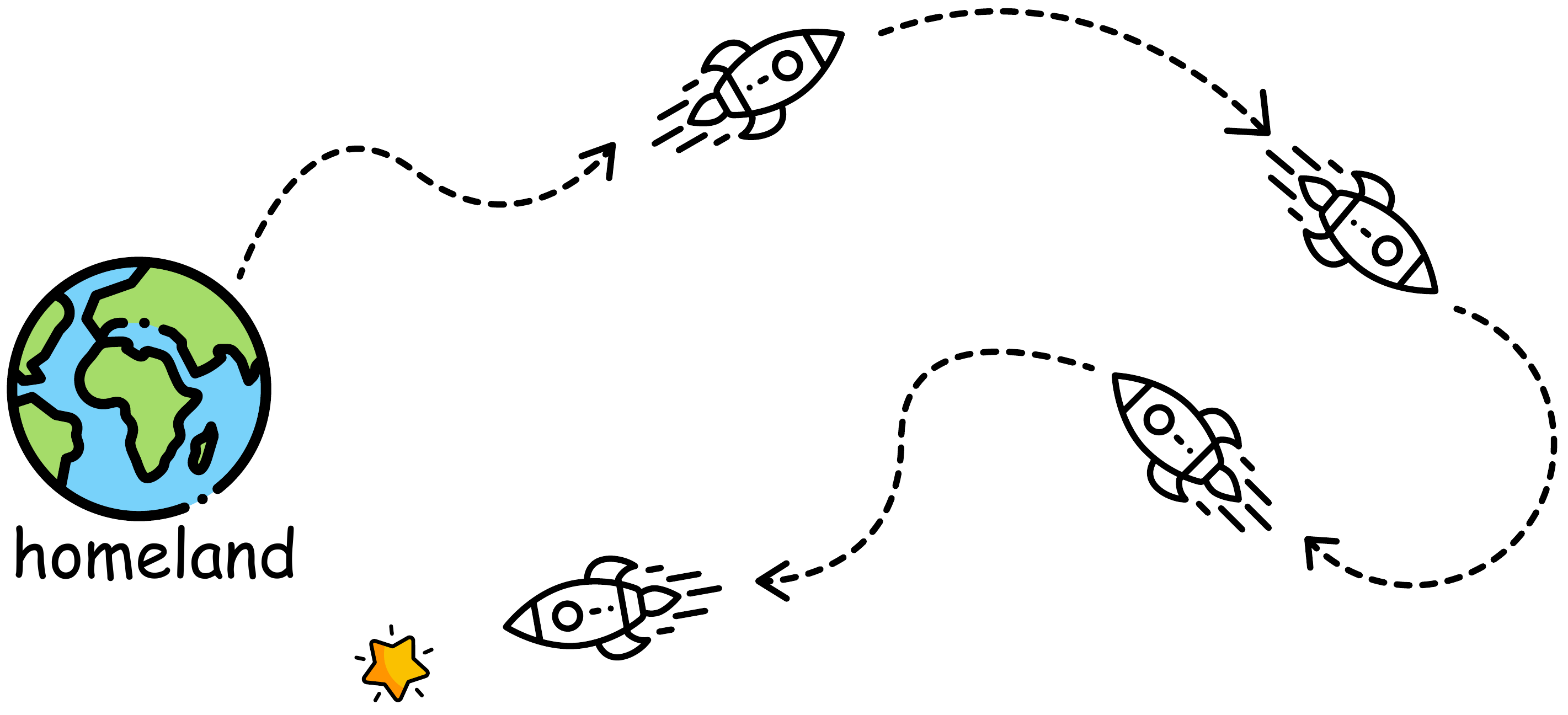}
\caption{Illustration of a spacecraft}
\label{explore space}
\end{figure}

{%The agent acknowledges the existence of unknown unknown states.
After $t$ episodes, we denote by $U_t$ the unknown unknown states.
%This means when the agent encounters any state outside the aware domain $S_t$, it will make decision according to the policy defined on $U_t$.
The extended aware domain is thus $\tilde{\mathcal{S}}_t=\mathcal{S}_t \cup U_t$. %, where $U_t$ generally represents any unknown unknown state.
}
% We denote by $U_t$ the set of unknown unknown states after exploring $t$ episodes; i.e., $U_t=\mathcal{S}-\mathcal{S}_t$.
% \jt{The agent recognises that there are some unknown unknown states.
% The extended aware domain is defined as $\tilde{\mathcal{S}}_t=\mathcal{S}_t \cup U_t$, where $U_t$ generally represents any unknown unknown state.}
% The growing awareness is then defined as \fh{$\tilde{\mathcal{S}}_t=\mathcal{S}_t \cup U_t$}.
%Note that while $U_t$ represents the states currently unknown to the agent, it also serves as a placeholder for any new state that may be discovered.
% \fh{In our model, the agent selects actions from the action space $\mathcal{A}$ even when it is placed at new states.}
{EMDP usually defines state space and action space independently. Inheriting this convention, in this paper, the agent selects actions from a fixed action space $\mathcal{A}$, despite the fact that the aware domain is growing, %like in an EMDP, 
even when the agent is placed at new states.}
{Note that the policy $\pi^t$, which is used to get episode $t$, is defined on the aware domain after exploring $t-1$ episodes.
Thus, the policy $\pi^t=(\pi_1^t,\cdots,\pi_H^t)$ is defined on $\tilde{\mathcal{S}}_{t-1}$; i.e., $\pi_h^t: \tilde{\mathcal{S}}_{t-1} \times \mathcal{A} \rightarrow[0,1]$ for each $h$.
Here, $\pi_h^t(\cdot|s)$ is a distribution over $\mathcal{A}$.}
Consequently, $\pi^t$ effectively constitutes a policy over the entire state space $\mathcal{S}$.

{
% We mathematically formulate {\it homeland} as the following definition.

% \begin{definition}[homeland]
% \label{optimal_policy_assumption}
% We assume the agent has a homeland which is fully explored.
% For the optimal policy, we assume that, with probability at least $1-\frac{\delta}{3}$, the average value of the known states exceeds that of any UU state:
% $$\forall s \in \mathcal{S}-\mathcal{S}_t, Q^*_h(s,a) \le \frac{1}{|\mathcal{S}_{t}|}\sum_{\hat{s} \in \mathcal{S}_{t}} Q^*_h(\hat{s},a),\ V^*_h(s) \le \frac{1}{|\mathcal{S}_{t}|}\sum_{\hat{s} \in \mathcal{S}_{t}} V^*_h(\hat{s}).$$
% \end{definition}

%\begin{remark}
\paragraph{Homeland condition}
% \label{homeland_condition}
%To note, we do not aim to explore a completely 
Exploring in an extremely unexplored, uncertain world requires a `homeland'. 
Space exploration needs an earth.
%The vision is an agent needs a safe base for an extremely high-risk mission.
We assume that with high probability, unknown unknowns are no more valuable than the average on known states; a very high-value target still exists but with a high risk. Mathematically, with probability at least $1-\frac{2\delta}{3}$, the average value of the known states exceeds that of any unknown unknown state:
$$\forall s \in \mathcal{S}-\mathcal{S}_t, Q^*_h(s,a) \le \frac{1}{|\mathcal{S}_{t}|}\sum_{\hat{s} \in \mathcal{S}_{t}} Q^*_h(\hat{s},a),\ V^*_h(s) \le \frac{1}{|\mathcal{S}_{t}|}\sum_{\hat{s} \in \mathcal{S}_{t}} V^*_h(\hat{s}).$$
It coincides with the fact that exploring in an extremely under-explored, uncertain environment is a high-risk, high-gain mission.
%For example, exploring an unknown planet is much riskier than remaining on the earth but may also yield high rewards. Consequently, we assume that for the optimal policy, 

% We mathematically formulate this assumption.
% We mathematically formulate this assumption as the following assumption.

% Definition \ref{optimal_policy_assumption} suggests that, with high probability, unknown unknowns are no more valuable than the average of known states; a very high-value target still exists but with a high risk.
% It suggests that exploring in an extremely under-explored, uncertain environment is a high-risk, high-gain mission.

% \begin{assumption}
% \label{optimal_policy_assumption}
% We assume the agent has a homeland which is fully explored.
% For the optimal policy, we assume that, with probability at least $1-\frac{\delta}{3}$, the average value of the known states exceeds that of any UU state:
% $$\forall s \in \mathcal{S}-\mathcal{S}_t, Q^*_h(s,a) \le \frac{1}{|\mathcal{S}_{t}|}\sum_{\hat{s} \in \mathcal{S}_{t}} Q^*_h(\hat{s},a),\ V^*_h(s) \le \frac{1}{|\mathcal{S}_{t}|}\sum_{\hat{s} \in \mathcal{S}_{t}} V^*_h(\hat{s}).$$
% \end{assumption}

}

%$\mathcal{A}$ is the finite action space.
%The transition function $P_h(s'|s,a)$ defines the probability of transitioning from state $s$ to $s'$ when taking action $a$ at time step $h$.
%The immediate reward function $r_h: \mathcal{S} \times \mathcal{A} \rightarrow \mathbb{R}$ defines the immediate reward $r_h(s,a)$ for taking action $a$ in state $s$ at time step $h$. 
%Notably, the sole difference between EMDP and EMDP-GA is the inclusion of an additional component—the initial aware domain $S_0$.
% Figure \ref{EMDP} illustrates the interaction between the agent and EMDP.

% \fh{xxx}

% \subsection{Oracle Agent}

%\section{Learning an Agent with Growing Awareness}
%\label{sec::UCBMQ-GA_algorithm}
% We introduce UCBMQ and then present our algorithm UCBMQ-GA.
%In this section, we introduce our framework for EMDP-GA.
%We first describe our domain-expansion method and then present our framework, Upper Confidence Bound Momentum Q-learning with Growing Awareness (UCBMQ-GA).

% \begin{figure}[t]
% \centering
% \includegraphics[scale=0.45]{figures/EMDP.pdf}
% \caption{{Episodic Markov decision processes with growing awareness by noninformative value expansion}.}
% \label{EMDP}
% \end{figure}

\subsection{Noninformative Prior for Domain Expansion to Newly Aware States}
% The value functions are defined on the aware domain since the agent has no information about the space outside the aware domain, even the existence.
% When the agent reaches a newly discovered state that is not in the domain of value functions, we need to expand the domain to this state.
% The newly discovered state and it may be very different from any known state and we have no prior information about it.
% Consider the case that taking an action $a$ can always lead to a high reward, whatever the state is, we naturally tend to think $a$ is a valuable action and can lead to a high reward in the newly discovered state with high probability.
% So we use the average values over the aware domain to estimate the value functions on newly discovered state.

%As discussed, the domains of value functions $Q$ and $V$ are restricted by the awareness to the environment -- the value functions are defined on the agent’s currently aware domain. %, since the agent has no information, even of existence, about states outside this space.

When an agent encounters an unknown unknown state that lies outside the currently aware domain, the reinforcement learning algorithm becomes invalid. %we must extend the domain to include this state.
The unknown unknown status implies that it exhibits properties that significantly differ from any known state, %(including known unknown states) 
and prior information is completely absent.
Out of respect for this extreme lack of knowledge,
%For example, if an action $a$ consistently yields high rewards across known states, it is reasonable to infer that $a$ will also likely yield high rewards in the new state.
%Therefore, we estimate 
we design a {\it noninformative value expansion} (NIVE) approach to expand value functions to the newly discovered states -- the value functions are initialised using the averages over the current aware domain.
We introduce NIVE with more details below.

\paragraph{Algorithm of NIVE}
NIVE updates three value functions: 
%\begin{itemize}[leftmargin=12pt]
%    \item 
%\end{itemize}
estimated Q-value function $Q_h^t(s,a)$, bias-value function $V_{h,s,a}^{t}(s')$, and upper bound on the optimal value function $\overline{V}_h^t(s)$.
When the aware domain grows from $\mathcal{S}_{t-1}$ to $\mathcal{S}_t$, we would like to expand the domains of three value functions: $Q^{t-1}_h(s,a)$, $\overline{V}^{t-1}_h(s)$ and $V^{t-1}_{h,s,a}(s')$, accordingly.
%For every state in $\mathcal{S}_t-\mathcal{S}_{t-1}$, no direct information is available; however, prior estimates can be inferred from known data.
%Consequently, the framework 
NIVE calculates the average values over 
% \fh{xxx}
{the aware domain} $\mathcal{S}_{t-1}$ as  estimators for states in $\mathcal{S}_t-\mathcal{S}_{t-1}$:
%In UCBMQ-GA, average values for $Q_h^t(s,a)$, $\overline{V}_h^t(s)$ and $V_{h,s,a}^t(s')$ over $\mathcal{S}_t$ are defined as: for $t \in [T]$, 
\begin{gather*}
    Q_{h,avg}^t(a)=\frac{1}{|\mathcal{S}_t|} \sum_{s \in \mathcal{S}_t} Q_h^t(s,a),\ \ \ 
\overline{V}_{h,avg}^t=\frac{1}{|\mathcal{S}_t|} \sum_{s \in \mathcal{S}_t} \overline{V}_h^t(s),\\ 
V_{h,s,a,avg}^t=\frac{1}{|\mathcal{S}_t|} \sum_{s' \in \mathcal{S}_t} V_{h,s,a}^t(s'), \ \ \ 
V_{h,a,avg}^t(s')=\frac{1}{|\mathcal{S}_t|} \sum_{s \in \mathcal{S}_t} V_{h,s,a}^t(s'),\\ \text{and}\ \ \ 
V_{h,a,avg}^t=\frac{1}{|\mathcal{S}_t|^2} \sum_{s,s' \in \mathcal{S}_t} V_{h,s,a}^t(s').
\end{gather*}

%When using the UCBMQ-GA to train an EMDP-GA agent, %the agent selects 
%actions are determined according to 
%We define 
% \fh{xxx}
%{an upper-bound function} $\overline{Q}_h^t(s,a)$ for %, which upper-bounds 
%the optimal Q-value function $Q^*_h(s,a)$.
% \fh{The functions $\overline{Q}_h^t(s,a)$ and $\overline{V}_h^t(s)$ thus serve as estimators to optimal value functions $Q^*_h(s,a)$ and $\overline{V}_h^t(s)$.}
%In our approach, and also 
In many reinforcement learning algorithms, such as UCBMQ \cite{menard2021ucb} used later, upper bounds $\overline{Q}_h^t(s,a)$ and $\overline{V}_h^t(s)$ are used to calculate actions. 
% We thus view them as estimators of the optimal value functions $Q^*_h(s,a)$ and ${V}^*(s)$.}
%Our domain‐expansion technique, noninformative value expansion (
After the surprising encounter, the NIVE expands 
% \fh{value functions (and their estimators)} 
{the estimators of value functions} to newly discovered states by assigning them the average values computed over known states.
Crucially, the expanded functions $\tilde{\overline{Q}}_h^t(s,a)$ and $\tilde{\overline{V}}_h^t(s)$ remain valid upper bounds for $Q^*_h(s,a)$ and $V_h^*(s)$, respectively, as the following lemma.  The proof is given in   \ref{optimism} (implied in Lemma \ref{optimism_lemma}). 
%\fh{We fix $\delta \in (0,1)$.}

\begin{lemma}
{For any time step $t \in [T]$ in episode $h \in [H]$,} the following inequality holds with probability at least $1-\frac{\delta}{2}$,
$$
\tilde{\overline{Q}}_h^t(s,a) \ge Q^*_h(s,a),\ \tilde{\overline{V}}_h^t(s) \ge V_h^*(s).
$$
\end{lemma}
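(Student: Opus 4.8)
The plan is to split on whether the state $s$ lies in the previously aware domain $\mathcal{S}_{t-1}$ or has just been discovered, and to reduce both cases to two ingredients: (i) the standard optimism guarantee on the aware domain, asserting that $\overline{Q}_h^t(\hat s,a)\ge Q_h^*(\hat s,a)$ and $\overline{V}_h^t(\hat s)\ge V_h^*(\hat s)$ for every $\hat s\in\mathcal{S}_{t-1}$, and (ii) the homeland condition. For $s\in\mathcal{S}_{t-1}$ the expanded functions $\tilde{\overline{Q}}_h^t$ and $\tilde{\overline{V}}_h^t$ coincide with $\overline{Q}_h^t$ and $\overline{V}_h^t$, so the claim is exactly ingredient (i), which I would cite from the referenced optimism lemma. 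The genuinely new content is therefore the case $s\in\mathcal{S}_t-\mathcal{S}_{t-1}$, i.e. an unknown unknown that NIVE has just initialised.

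For such a new state $s$, NIVE assigns it the average of the upper bounds over the aware domain, so that
$$
\tilde{\overline{Q}}_h^t(s,a)=\frac{1}{|\mathcal{S}_{t-1}|}\sum_{\hat s\in\mathcal{S}_{t-1}}\overline{Q}_h^t(\hat s,a),\qquad \tilde{\overline{V}}_h^t(s)=\frac{1}{|\mathcal{S}_{t-1}|}\sum_{\hat s\in\mathcal{S}_{t-1}}\overline{V}_h^t(\hat s).
$$
I would then chain the two ingredients: applying ingredient (i) termwise lower-bounds each average by $\frac{1}{|\mathcal{S}_{t-1}|}\sum_{\hat s\in\mathcal{S}_{t-1}}Q_h^*(\hat s,a)$ (resp. the $V$ analogue), while the homeland condition, applied to $s\in\mathcal{S}-\mathcal{S}_{t-1}$, upper-bounds $Q_h^*(s,a)$ (resp. $V_h^*(s)$) by exactly this same average of optimal values. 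Concatenating the two inequalities yields $\tilde{\overline{Q}}_h^t(s,a)\ge Q_h^*(s,a)$ and $\tilde{\overline{V}}_h^t(s)\ge V_h^*(s)$, which is the claim. This chain also explains, as a sanity check, why scaling the NIVE assignment by any constant $d<1$ would destroy the property: the first inequality would be multiplied by $d$ and could then fall below $Q_h^*(s,a)$.

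For the probability bookkeeping I would use a union bound: the optimism event of ingredient (i) is arranged to hold with probability at least $1-\delta/6$ and the homeland condition holds with probability at least $1-\delta/3$ by assumption, so both hold simultaneously, uniformly over $t\in[T]$ and $h\in[H]$, with probability at least $1-\delta/2$ once the bonus radii $\beta_h^t$ are calibrated for this confidence level. The main obstacle is ingredient (i) itself, the full UCBMQ-GA concentration argument: one must establish by backward induction on $h$ that the momentum-corrected estimate $Q_h^t$ deviates from $Q_h^*$ by no more than the bonus $\beta_h^t$, via a Freedman/Bernstein-type martingale concentration uniform over all aware state-action pairs, and verify that the momentum correction does not break the upper bound. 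The extension to unknown unknowns is comparatively light once the homeland condition is in hand; the one subtlety to check is the absence of circularity, since the average that initialises a new state in $\mathcal{S}_t-\mathcal{S}_{t-1}$ relies on optimism over $\mathcal{S}_{t-1}$. Because the awareness sets grow monotonically, $\mathcal{S}_0\subseteq\cdots\subseteq\mathcal{S}_T$, this is resolved by an outer induction on the episode index $t$, so that optimism on $\mathcal{S}_{t-1}$ is already available before the extension to $\mathcal{S}_t$ is invoked.
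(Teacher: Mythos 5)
Your proposal is correct and follows essentially the same route as the paper: Lemma \ref{optimism_lemma} proves optimism for the expanded functions by an outer induction on the episode index $t$ (with backward induction on $h$ inside), handles a newly discovered state via exactly your chain --- the NIVE average of upper bounds dominates the average of optimal values by the induction hypothesis, which dominates the optimal value at the new state by the homeland condition --- and obtains the $1-\frac{\delta}{2}$ confidence from the same union bound of the concentration event ($\frac{\delta}{6}$) with the homeland condition ($\frac{\delta}{3}$). The only cosmetic difference is that the paper's expanded upper bound on $Q$ at an unaware state is the average of the \emph{estimated} $Q$-values plus the bonus $H$ rather than the average of $\overline{Q}$ itself, but the resulting inequality chain is the same.
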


%\begin{remark}
    % This lemma justifies that our NIVE preserves the crucial upper-bound property of the estimators, required for UCBMQ and its variants. 
%\end{remark}

\begin{figure}[t]
\centering
\includegraphics[scale=0.5]{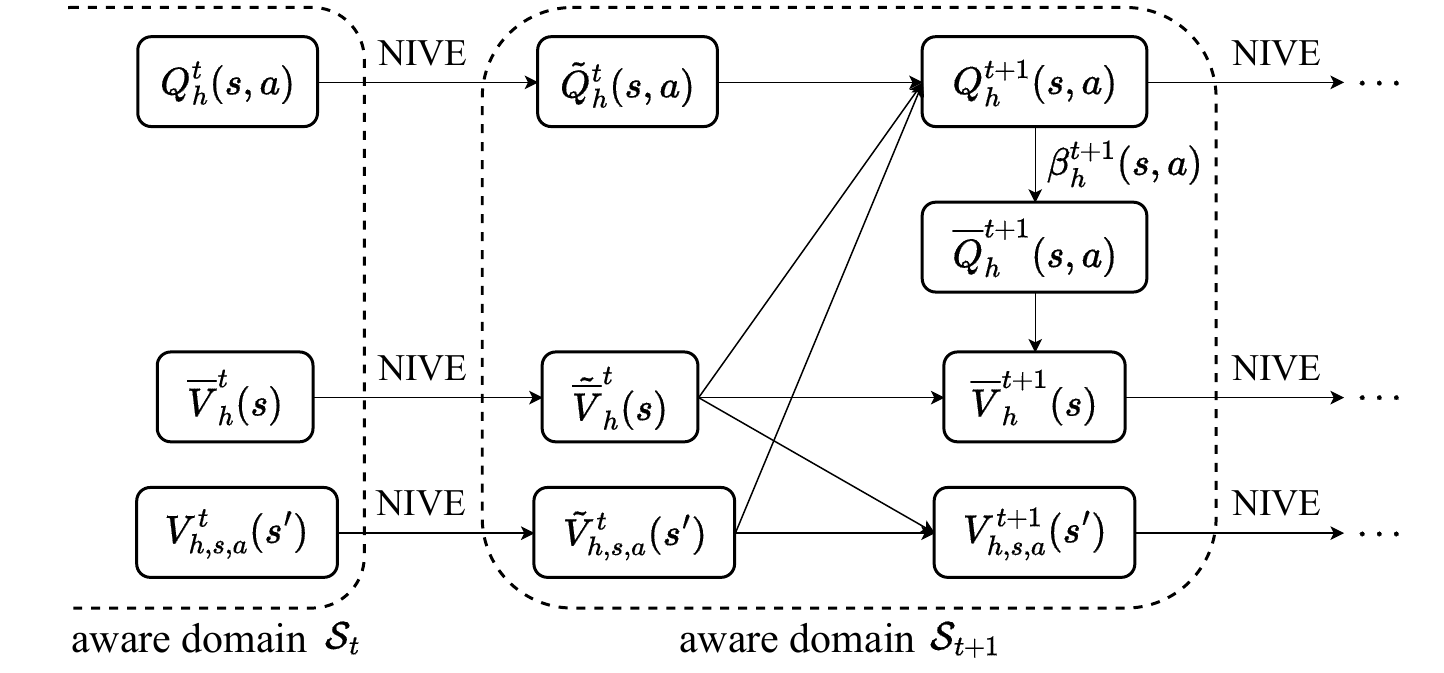}
\caption{Illustration of updating functions in UCBMQ-GA}
\label{UCBMA-GA update}
\end{figure}

Further, we may prove the following corollary. The proof is given in  \ref{proof_of_scalar}.
% \fh{The proof is given in xxx.}\jt{need to add}

\begin{corollary}
\label{corollary:scalar}
If one were to use any scalar multiple $d < 1$ of the average values to expand the functions, the upper‐bound property could become invalid.
In some cases, assigning $d \overline{V}_{h,avg}^t,\ (d \neq 1)$ to a newly discovered state would produce an expanded function that no longer upper bounds $V_h^*(s)$.
\end{corollary}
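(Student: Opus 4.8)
The plan is to prove this as an existence (``could fail'') statement by exhibiting a scenario in which the optimism chain underlying the preceding Lemma is \emph{tight}, so that any contraction by $d<1$ breaks it. The natural starting point is the inequality chain that certifies the $d=1$ expansion as a valid upper bound: for a newly discovered state $s \in \mathcal{S}_t - \mathcal{S}_{t-1}$, the NIVE estimator satisfies
\[
\tilde{\overline{V}}_h^t(s) = \overline{V}_{h,avg}^t = \frac{1}{|\mathcal{S}_{t}|}\sum_{\hat s \in \mathcal{S}_{t}} \overline{V}_h^t(\hat s) \ge \frac{1}{|\mathcal{S}_{t}|}\sum_{\hat s \in \mathcal{S}_{t}} V_h^*(\hat s) \ge V_h^*(s),
\]
where the first inequality is optimism on the aware domain ($\overline{V}_h^t(\hat s)\ge V_h^*(\hat s)$, as used in the preceding Lemma) and the second is exactly the homeland condition. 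The corollary follows once both inequalities are made simultaneously tight, because then $\tilde{\overline{V}}_h^t(s)=V_h^*(s)$, and replacing the expansion value by $d\,\overline{V}_{h,avg}^t$ with $d<1$ immediately gives $d\,\overline{V}_{h,avg}^t < V_h^*(s)$, i.e. a failure of the upper-bound property.

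Next I would construct an explicit EMDP-GA instance realising this tightness. I would take a flat configuration at step $h$ in which every known state $\hat s\in\mathcal{S}_t$ shares the same optimal value $V_h^*(\hat s)=v>0$ (arranged via identical rewards and transitions), so that the average of the known optimal values equals $v$; and I would let the new state $s$ attain the homeland condition with equality, $V_h^*(s)=v$. This forces the second inequality to be tight by design. For the first inequality, I would push the aware-domain estimates down to their true values: since $\overline{V}_h^t(\hat s)$ exceeds $V_h^*(\hat s)$ only by a bonus slack governed by $\beta_h^t$, and this slack vanishes as the visitation counts $n_h^t(\hat s,a)$ grow, the average $\overline{V}_{h,avg}^t$ can be driven arbitrarily close to $v$.

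Finally I would close the argument quantitatively. For any fixed $d<1$, the scaled expansion fails to dominate $V_h^*(s)$ precisely when $\overline{V}_{h,avg}^t - V_h^*(s) < (1-d)\,\overline{V}_{h,avg}^t$. The left-hand side can be made smaller than any positive threshold by the exploration argument above, while $\overline{V}_{h,avg}^t$ stays bounded below by a quantity near $v>0$; hence the inequality holds in the constructed instance, giving $d\,\overline{V}_{h,avg}^t = \tilde{\overline{V}}_h^t(s) < V_h^*(s)$ and thereby invalidating optimism and the conclusion of the preceding Lemma.

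The main obstacle I anticipate is justifying that the first inequality can actually be made tight, i.e. that the algorithmic bound $\overline{V}_h^t$ can be pushed down to the true optimal value rather than treated as a free parameter; this requires controlling UCBMQ's bonus $\beta_h^t$ and arguing it becomes negligible under sufficient visitation. A cleaner route that avoids the full dynamics is to note that the statement only asserts ``in some cases,'' so it suffices to posit any admissible estimates that are consistent with optimism and have vanishing slack while the homeland condition is tight; the counterexample then reduces to the purely algebraic fact that $dv<v$ whenever $d<1$ and $v>0$.
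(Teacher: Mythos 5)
Your overall strategy -- exhibit an instance where the optimism chain $\tilde{\overline{V}}_h^t(s)=\overline{V}_{h,avg}^t \ge \frac{1}{|\mathcal{S}_t|}\sum_{\hat s}V_h^*(\hat s) \ge V_h^*(s)$ is tight, so that any contraction by $d<1$ breaks it -- is the right one and matches the spirit of the paper's proof. But the way you make the chain tight has a genuine gap. You propose a ``flat'' MDP with common optimal value $v$ and then argue that $\overline{V}_{h,avg}^t$ can be \emph{driven down} to $v$ because the bonus slack vanishes under sufficient visitation. Within the algorithm this is not something you can simply assert: $\beta_h^t(s,a)$ contains the term $53H^3\zeta\log(T)/n_h^t(s,a)$ and is strictly positive for every finite count, $Q_h^t$ is a stochastic estimate whose proximity to $Q_h^*$ only holds on the concentration events, and $\overline{V}_h^t$ is merely guaranteed to be non-increasing and optimistic -- showing it actually descends to within $(1-d)v$ of $V_h^*$ during the run is essentially a piece of the regret analysis, not a one-line observation. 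Your fallback of ``positing admissible estimates with vanishing slack'' does not repair this, because the corollary is a claim about the quantity $\overline{V}_{h,avg}^t$ that the algorithm actually produces, and no finite run produces zero slack when $V_h^*<H$.

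The fix is available inside your own construction and is exactly what the paper does: take the flat value $v$ to be the \emph{maximum} attainable value by setting $r_h(s,a)\equiv 1$, so that $V_h^*(s)=H$ (at $h=1$; the paper is slightly loose about $h>1$, where the value is $H-h+1$, but $h=1$ already suffices for an ``in some cases'' claim). Then Lemma \ref{V_bound} gives the hard cap $\overline{V}_{h,avg}^t \le H$ \emph{unconditionally} -- no exploration, no vanishing-bonus argument, no high-probability event -- and $d\,\overline{V}_{h,avg}^t \le dH < H = V_h^*(s)$ follows immediately for every $t$ and every $d<1$. In other words, the tightness you were trying to engineer asymptotically is obtained for free at the boundary of the value range; the cap $\overline{V}_h^t\le H$ replaces your entire second and third paragraphs.
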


This corollary confirms the efficiency of NIVE -- multiplying any constant $d < 1$ to NIVE will make it invalid.

\subsection{Learning an Agent with Growing Awareness to Unknown Unknowns}

We now design an optimisation method for training our EMDP-GA model for obtaining an agent with growing awareness to handle the unknown unknowns.
We adapt upper confidence bound momentum Q-learning (UCBMQ) \cite{menard2021ucb} to the growing awareness, %for training the EMDP-GA model. This training algorithm is 
termed as UCBMQ-GA.
%In its current form, UCBMQ does not address the problem of growing awareness.
%However, UCBMQ-GA is able to deal with EMDP-GA and is more broadly applicable.

\paragraph{Algorithm of UCBMQ-GA}
{%Like UCBMQ, 
UCBMQ-GA {optimises} four functions: {estimated}  Q-value function $Q_h^t(s,a)$,  bias-value function $V_{h,s,a}^{t}(s')$,  upper bound on the optimal Q-value function $\overline{Q}_h^t(s,a)$, and  upper bound on the optimal value function $\overline{V}_h^t(s)$.}

Let $\chi_h^t(s,a)=\mathbb{I}(s_h^t=s, a_h^t=a)$ denote the indicator function for the occurrence of $(s,a)$ at time step $h$ in episode $t$. Thus, $n_h^t(s,a)=\sum_{l=1}^t \chi_h^l(s,a)$ as the visitation count for the pair $(s,a)$ {appearing at time step $h$ over $t$ episodes}.
Define $\chi_h^t(s,a)=n_h^t(s,a)=0$ if state $s \notin \mathcal{S}_t$. Following the convention, and to the brevity, we define $0 \times \infty =0$ and $\frac{1}{0}=\infty$.

For any transition function $p$ and any function $f: S \rightarrow \mathbb{R}$, we define 
$
pf(s,a)=\mathbb{E}_{s'\sim p(\cdot|s,a)}[f(s')].
$
We denote by $p_h^t$ the Dirac distribution  concentrated at $(s_{h+1}^t)$, and then, $p_h^tf(s,a)=f(s_{h+1}^t)$.
The procedure of UCBMQ-GA for training EMDA-GA is also explained below. More details are shown in  \ref{UCBMQ-GA_appendix}.

\paragraph{(0) Initialisation} %and data collection}
%Line 1-10 describe this step.
%The initialisation of UCBMQ-GA is identical to that of UCBMQ: f
For state $s \in \mathcal{S}_0 \subseteq \mathcal{S}$, we set $V^0_{h,s,a}=\overline{V}^0_h=H$, and $Q^0_h=0$. Then, go to (1).

\paragraph{(1) Decision making}
In episode $t$, if the current state is aware (i.e., $s \in \mathcal{S}_{t-1}$), the agent takes an action that maximises {$\overline{Q}_{h}^{t-1}(s,a)$.
Conversely, if the agent encounters an unknown unknown state (i.e., $s \notin \mathcal{S}_{t-1}$), it chooses an action that maximises ${Q}_{h,avg}^{t-1}(a)$.}

{The algorithm terminates here if the termination conditions have been met. Otherwise, if the agent go beyond the aware domain, go to (2); otherwise, go to (3). }

\paragraph{(2) Embedded NIVE for growing awareness} %and expanding upper bound}
%Line 11-12 describe this step.
After completing an episode, the awareness set is updated from $\mathcal{S}_{t-1}$ to $\mathcal{S}_t$.
%The key technique that makes our framwork able to deal with EMDP-GA is that our framework 
UCBMQ-GA then employs NIVE to expand $\tilde{Q}^{t-1}_h(s,a)$, $\tilde{\overline{V}}^{t-1}_h(s)$ and  $\tilde{V}^{t-1}_{h,s,a}(s')$ to the newly discovered area, 
%This is the primary difference between UCBMQ and UCBMQ-GA.
%The expanded functions employed in our framework are defined 
as follows,

\begin{align*}
\tilde{Q}^{t-1}_h(s,a)=
    \begin{cases}
        Q^{t-1}_h(s,a), & s \in \mathcal{S}_{t-1},\\
        Q^{t-1}_{h,avg}(a), & s \in \mathcal{S}_t-\mathcal{S}_{t-1},
    \end{cases} 
\end{align*}
\begin{align*}
\tilde{\overline{V}}^{t-1}_h(s)=
    \begin{cases}
        \overline{V}^{t-1}_h(s), & s\in \mathcal{S}_{t-1},\\
        \overline{V}_{h,avg}^{t-1}, & s \in \mathcal{S}_t-\mathcal{S}_{t-1},
    \end{cases}
% \label{expanded_2}
\end{align*}
\begin{align*}
\text{and  }
\tilde{V}^{t-1}_{h,s,a}(s')=
    \begin{cases}
        V^{t-1}_{h,s,a}(s'), & s,s' \in \mathcal{S}_{t-1},\\
        V_{h,s,a,avg}^{t-1}, & s\in \mathcal{S}_{t-1}, s'\in \mathcal{S}_t-\mathcal{S}_{t-1},\\
        V_{h,a,avg}^{t-1}(s'), & s\in \mathcal{S}_t-\mathcal{S}_{t-1}, s'\in \mathcal{S}_{t-1},\\
        V_{h,a,avg}^{t-1}, & s,s'\in \mathcal{S}_t-\mathcal{S}_{t-1}.
    \end{cases}
% \label{expanded_3}
\end{align*}

The exploration bonus is then calculated with the expanded functions:
%$$
%\text{if } n_h^t(s,a)=0 \text{ or } s \notin \mathcal{S}_t\text{, we have } \beta_h^t(s,a)=H;
%$$
%otherwise
if $n_h^t(s,a)=0$ or $s \notin \mathcal{S}_t$, $\beta_h^t(s,a)=H$; otherwise,
\begin{align*}
\beta_h^t(s,a)=&2 \sqrt{\frac{\zeta W_h^t(s,a)}{n_h^t(s,a)}}+53H^3\frac{\zeta log(T)}{n_h^t(s,a)}\\
&+\sum_{k=1}^t\frac{\chi_h^k(s,a)\mathring{\gamma}_h^k(s,a)}{Hlog(T)n_h^t(s,a)}p_h^k \left (\tilde{V}_{h,s,a}^{k-1}-\tilde{\overline{V}}_{h+1}^{k-1} \right )(s,a),
\end{align*}
where $W_h^t(s,a)=\sum_{k=1}^t \frac{\chi_h^k(s,a)}{n_h^k(s,a)}p_h^k \left (\tilde{\overline{V}}_{h+1}^{k-1}-\sum_{l=1}^t \frac{\chi_h^l(s,a)}{n_h^l(s,a)}p_h^l\tilde{\overline{V}}_{h+1}^{l-1} \right )^2(s,a)$.

Then, go to (3).

{\paragraph{(3) Optimising value functions}
% describe this step.
The value functions $Q_h^t(s,a)$, $V_{h,s,a}^{t}(s')$, $\overline{Q}_h^t(s,a)$ and $\overline{V}_h^t(s)$ are then optimised as illustrated in Figure \ref{UCBMA-GA update}. %updated in a manner similar to UCBMQ.
The learning rate and momentum term are chosen as 
\begin{gather*}
\alpha_h^t(s,a)=\frac{\chi_h^t(s,a)}{n_h^t(s,a)},\ 
\gamma_h^t(s,a)=\chi_h^t(s,a) \frac{H}{H+n_h^t(s,a)} \frac{n_h^t(s,a)-)}{n_h^t(s,a)}, \text{ and} \\
\eta_h^t(s,a)=\alpha_h^t(s,a)+\gamma_h^t(s,a).  
\end{gather*}

$\zeta$ is the exploration threshold which helps in regulating the exploration bonus.}

Then, go to (1).

%\begin{remark}[

\section{Theoretical Analysis}
\label{sec::theoretical_analysis}

This section %first defines the optimal policy in EMDP-GA and then 
presents theoretical analysis of our algorithm. %, which guarantees the performance.

\subsection{Asymptotical Consistency of Learning Unknown Unknowns}%in Exposure of Unawareness}

\paragraph{Oracle agent}

The theoretical analysis aims to compare our {approach} against the (unknown) optimal policy $
\pi^*=(\pi_1^*,\cdots,\pi_H^*)$, 
$\pi_h^*: \mathcal{S}\times \mathcal{A} \rightarrow [0,1]$. This is characterised by the regret of {UCBMQ-GA} against this optimal policy, defined as %$\pi^*$ is defined as
$
\mathcal{R}^T=\sum_{t=1}^T [V^{\pi^*}(s_1)-V^{\pi^t}(s_1)]
$. 
To characterise this optimal policy, we assume the existence of an oracle agent that is aware of the entire state space $\mathcal{S}$ and can maximise the value function accordingly.
%This assumption is motivated by the fact that, in many applications, the objective is to approximate the optimal policy in the real world despite the absence of complete information prior to exploration.
%Mathemarically, the optimal policy is defined as
%$
%\pi^*=(\pi_1^*,\cdots,\pi_H^*),\ \ \ 
%\pi_h^*: \mathcal{S}\times \mathcal{A} \rightarrow [0,1],
%$

% We make the following assumption about the transition function.
% \begin{assumption}[non-zero probability of transition]
% \label{transition_function_assumption}
% For the transition function, We assume that $1-\left (\frac{\delta}{3} \right)^{\frac{1}{H[\sqrt{T}]}} \le \frac{1}{S}$, and the following holds
% $$\forall (s,a,s') \in \mathcal{S} \times \mathcal{A} \times \mathcal{S}, P(s'|s,a) \ge 1-\left (\frac{\delta}{3} \right)^{\frac{1}{H[\sqrt{T}]}}.$$
% \end{assumption}

% \begin{remark}
% When $T$ is large enough, $1-\left (\frac{\delta}{3} \right)^{\frac{1}{H[\sqrt{T}]}} \le \frac{1}{S}$ holds and guarantees that $P(\cdot|s,a)$ can be a distribution over $\mathcal{S}$.
% Assumption \ref{transition_function_assumption} means that the agent can probably reach any state from the current state when taking any action.
% \end{remark}

We then define an {\it aware moment} of a state as follows.

\begin{definition}[aware moment]
    For any state $s \in \mathcal{S}$, define its aware moment as 
    $
    t(s)=\mathop{min}\limits_{t \in [T]}\{t:s \in \mathcal{S}_t\}
    $; i.e., the first episode after which the agent is aware of $s$.
\end{definition}

% We can have a basic result establishing an upper bound on $t(s)$: $\forall s \in S, t(s)\le \sqrt{T}$ with probability at least $1-\frac{\delta}{3}$.
% The proof of this result is detailed in Appendix \ref{proof_of_first_existence_lemma}.
% This result implies that, with high probability, every state can be aware of by the agent within $\sqrt{T}$ episodes.
% This result is instrumental in deriving regret upper bounds, as the knowledge about the environment have influence on the regret.
% Specifically, increased environmental knowledge leads to lower regret and limited knowledge results in higher regret.
%We establish the following basic result, which provides 
Naturally, $t(s) \ge 0$. We then have an upper bound on the aware moment. The proof is detailed in  \ref{proof_of_first_existence_lemma}.

% \begin{theorem}
% \label{first_existence_lemma}
%     With probability at least $1-\frac{\delta}{3}$, the aware moment is upper-bounded: %$\forall s \in \mathcal{S}, 
%     $t(s)\le \sqrt{T}$, where $T$ is the number of learning episodes.
% \end{theorem}

% \begin{remark}
%     This theorem implies that, with high probability, the agent becomes aware of all states within $\sqrt{T}$ episodes.
% \end{remark}

This finding is instrumental in deriving regret upper bounds. %, since intuitively the knowledge of environment significantly influences regret. 
%Specifically, more environmental knowledge leads to lower regret, whereas limited knowledge leads to higher regret.
% \fh{We assume in this section that $T >3$ and $\zeta=log \left (\frac{96eHSA(2T+1)}{\delta} \right)$.}
Then, we obtain the regret bound as follows. A detailed proof is given in \ref{proof_of_regret_bound}. %are ready to present the main result of our paper, which is proved 

\begin{theorem}
\label{theorem_1}
%For UCNMQ-GA, w
We assume that $T >3$ and $\zeta=log \left (\frac{96eHSA(2T+1)}{\delta} \right)$.
When the UCNMQ-GA is used to train EMDP-GA, with probability at least $1-\delta$, the regret satisfies
$$
\mathcal{R}^T \le \tilde{\mathcal{O}}(\sqrt{ H^3SAT}+ H^4SA+H^2S\sqrt{T}),
$$
where $H$ is the horizon, $S$ is the size of the state space, $A$ is the size of the action space and $T$ is the number of learning episodes.
\end{theorem}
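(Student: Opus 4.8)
The plan is to follow the optimism-and-recursion template of the UCBMQ analysis, isolating and separately controlling the extra error that the growing awareness and the NIVE averaging introduce. First I would invoke the optimism property of Lemma 1: with high probability $\tilde{\overline{V}}^{t-1}_1(s_1)=\overline{V}^{t-1}_1(s_1)\ge V^*_1(s_1)$ for every episode (the initial state $s_1\in\mathcal{S}_0$ is always aware), so that
$$\mathcal{R}^T \le \sum_{t=1}^T\bigl[\overline{V}^{t-1}_1(s_1)-V^{\pi^t}_1(s_1)\bigr].$$
Then I would expand each summand through the per-step recursion driven by the UCBMQ-GA updates of $Q^t_h$, $\overline{Q}^t_h$ and $\overline{V}^t_h$, unrolling over $h\in[H]$ so that the difference $\overline{V}^{t-1}_h-V^{\pi^t}_h$ at step $h$ is controlled by the bonus $\beta^t_h(s^t_h,a^t_h)$, the momentum correction, a martingale term arising from replacing the Dirac transition $p^t_h$ by the true kernel $P_h$, and the next-step difference $\overline{V}^{t-1}_{h+1}-V^{\pi^t}_{h+1}$.

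The second step is the bookkeeping that splits the resulting sum into a \emph{standard} part and a \emph{discovery} part. On the event of Theorem \ref{first_existence_lemma} (probability at least $1-\delta/3$) that every state has aware moment at most $\sqrt{T}$, the aware domain can grow only during the first $\sqrt{T}$ episodes; for all $t>\sqrt{T}$ we have $\mathcal{S}_{t-1}=\mathcal{S}$ and no NIVE expansion occurs, so the recursion is exactly the one analysed in \cite{menard2021ucb}. Summing the bonus and variance terms there via the usual pigeonhole and Cauchy--Schwarz argument over the visitation counts $n^t_h(s,a)$, together with the Bernstein-type concentration calibrated by $\zeta=\log(96eHSA(2T+1)/\delta)$, reproduces the UCBMQ regret $\tilde{\mathcal{O}}(\sqrt{H^3SAT}+H^4SA)$, with the logarithmic factors absorbed into $\tilde{\mathcal{O}}$. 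The remaining $H^2S\sqrt{T}$ term I would charge to the at most $\sqrt{T}$ discovery episodes: in each such episode the NIVE-expanded values $\tilde{Q}^{t-1}_h$, $\tilde{\overline{V}}^{t-1}_h$ and $\tilde{V}^{t-1}_{h,s,a}$ replace undiscovered entries by averages that, being bounded by $H$ and still valid upper bounds by Lemma 1, differ from the true values by at most $H$; propagating this per-state error of order $H$ through the $H$-step recursion across the $S$ states gives an $O(H^2S)$ contribution per discovery episode, hence $O(H^2S\sqrt{T})$ in total. I would use the homeland condition precisely here, to guarantee that even when the optimal policy reaches an as-yet-undiscovered state the averaged upper bound dominates its optimal value, so that optimism --- and therefore the sign of each recursion step --- is preserved throughout the discovery phase.

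Finally I would collect the martingale terms and apply Azuma--Hoeffding and Freedman inequalities to bound them by $\tilde{\mathcal{O}}(\sqrt{H^3SAT})$, then take a union bound over the relevant failure events --- optimism, aware-moment and homeland --- with the probability budgets allocated so that the total failure is at most $\delta$. Adding the three pieces yields $\mathcal{R}^T\le\tilde{\mathcal{O}}(\sqrt{H^3SAT}+H^4SA+H^2S\sqrt{T})$.

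The hard part will be the discovery-phase accounting in the second step: one must verify that the averaging in NIVE does not corrupt the variance proxy $W^t_h$ and the momentum term in a way that destroys their telescoping and concentration structure, and that the extra error it injects is genuinely additive and scales as claimed rather than compounding multiplicatively across the $H$ recursion levels. Making the $\sqrt{T}$ factor emerge exactly --- by tying the number of domain-growth events to Theorem \ref{first_existence_lemma} rather than to the crude bound of $T$ episodes --- is the crux that separates our bound from a vacuous one.
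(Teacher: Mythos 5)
Your skeleton matches the paper's at the architectural level: optimism via the expanded upper bounds, a step-$h$ recursion unrolled over the horizon with a $(1+O(1)/H)^H$ blow-up, Bernstein-type concentration calibrated by $\zeta$, the law of total variance for the leading $\sqrt{H^3SAT}$ term, and Theorem \ref{first_existence_lemma} plus the homeland condition budgeted into the union bound; the $H^2S\sqrt{T}$ term also arises for essentially the reason you identify. The genuine gap is in your second step. You propose to treat all episodes $t>\sqrt{T}$ as ``exactly the recursion analysed in UCBMQ'' and to charge only the first $\sqrt{T}$ episodes to a separate discovery budget. This does not work as stated: the estimators $Q_h^t$, $V_{h,s,a}^t$ and the variance proxy $W_h^t$ are running averages over the entire visit history, so for every $t$ --- including $t>\sqrt{T}$ --- they carry the NIVE-initialised values injected at the aware moments $t(s)$. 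You cannot black-box the UCBMQ regret theorem for the tail; you must re-establish the concentration events, the bonus lower and upper bounds (Lemmas \ref{all_Q_estimate} and \ref{beta_upper_bound}) and the optimism lemma for the expanded functions $\mathring{\overline{V}}_h^t$ and $\mathring{V}_{h,s,a}^t$, with all sums starting at $t(s)$ rather than at $1$. That re-derivation is where the bulk of the paper's work lies, and it is precisely the point you defer to ``the hard part'' without resolving.

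Relatedly, the paper's decomposition is per-state, not per-episode: inside the step-$h$ regret $\hat{R}_h^T=\sum_{s}\sum_{t}\overline{p}_h^{t+1}(s)(\mathring{\overline{V}}_h^{t}-V_h^{\pi^{t+1}})(s)$ the sum over $t$ is split at $t(s)-1$ for each $s$ separately; the pre-awareness part is bounded by $H(t(s)-1)\le H\sqrt{T}$ per state using Theorem \ref{first_existence_lemma}, giving $HS\sqrt{T}$ per step and $O(H^2S\sqrt{T})$ after unrolling. This per-state split is what keeps the visitation-count pigeonhole and the $\tilde{\eta}$-telescoping intact, because $n_h^t(s,a)$ only accumulates from $t(s)$ onward. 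A global split at episode $\sqrt{T}$ neither repairs the contamination of the tail estimators nor interacts cleanly with those count-based sums. So the architecture is right, but the decomposition you chose does not close the argument; the per-state $t(s)$ split and the full re-derivation of the UCBMQ lemmas for the NIVE-expanded estimators are both needed.
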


{
\begin{remark}
This regret bound is sublinear with respect to the number of episodes $T$. 
It matches the SOTA, despite that they are not exposed to unknown unknowns, but we are.
\end{remark}

\begin{remark}
    We have now proved that unknown unknowns could be surprising, but can be asymptotically consistently discovered.
\end{remark}

% \fh{\paragraph{Comparison with existing methods without the awareness problem.} This theorem suggests that the regret is comparable with the state of the art. xxx}
\paragraph{Comparison with existing methods, but without exposure to unknown unknowns.} 
%{This theorem suggests that the regret is comparable with the state of the art.
Compared with SOTA, for $H \le \frac{A}{S}$, the regret bound of our UCBMQ-GA matches UCBMQ ($ \mathcal{O}(\sqrt{SAH^3T}+SAH^4)$) \cite{menard2021ucb}.
For $T \ge H^5SA$ and $H \le \frac{A}{S}$, the regret bound of UCBMQ-GA matches monotonic value propagation (MVP, $\mathcal{O}(\sqrt{H^3SAT})$) \cite{zhang2024settling}. Other methods, upper Bayes-confidence bound value iteration (Bayes-UCBVI) \cite{tiapkin2022dirichlet} %using $B$ Monte-Carlo samples to approximate one quantile, 
and upper confidence bound-advantage(UCB-Adv) \cite{zhang2020almost}, are worse than the three methods in either case.
Table \ref{comparisons} shows the comparisons.
% Numerous works have introduced novel techniques to improve the regret bound.
% For example, a recent work by \cite{zhang2024settling} proposed Monotonic Value Propagation (MVP), achieving a regret bound that matches the minimax lower bound established by \cite{domingues2021episodic}.
% Unfortunately, none of these works have considered the issue of growing awareness—a natural challenge in real-world applications.
}

\begin{table}[t]
  \caption{Comparisons with the State of the Art}
  \label{comparisons}
  \centering
  \small
  \begin{tabular}{cccc}
    \toprule
    Algorithm     & Regret Bound     & Comput. Complex. & {Spac. Complex.} \\
    \midrule
    
    \thead{UCBMQ \\ \cite{menard2021ucb}}& \thead{$ \tilde{\mathcal{O}}(\sqrt{H^3SAT}+H^4SA)$}  & $\mathcal{O}(H(S+A)T)$& $\mathcal{O}(HS^2A)$\\

    \thead{MVP \\ \cite{zhang2024settling}} & $\tilde{\mathcal{O}}(\sqrt{H^3SAT})$ & $\mathcal{O}(HSAT)$ &$\mathcal{O}(HS^2A)$\\

    \thead{UCB-Adv\\ \cite{zhang2020almost}}& \thead{$ \tilde{\mathcal{O}}(\sqrt{H^3SAT}$\\$+H^{\frac{33}{4}}S^2A^{\frac{3}{2}}T^{\frac{1}{4}})$}  & $\mathcal{O}(HAT)$&$\mathcal{O}(HSA)$\\

    \thead{Bayes-UCBVI \\ \cite{tiapkin2022dirichlet}} & \thead{$\tilde{\mathcal{O}}(\sqrt{H^3SAT}+H^3S^2A)$} & $\mathcal{O}(BHS^2AT)$ &$\mathcal{O}(HS^2A)$    \\

    \thead{\textbf{UCBMQ-GA}\\ \textbf{(ours)}} & \thead{$\tilde{\mathcal{O}}(\sqrt{H^3SAT}+H^4SA$\\$+H^2S\sqrt{T})$}  & $\mathcal{O}(H(S+A)T)$&$\mathcal{O}(HS^2A)$\\

    % \thead{UCBVI \\ \cite{azar2017minimax}} & $\mathop{\min} \{\sqrt{SAH^3T}+S^2AH^3 , HT \}$  & NO\\
    % \thead{ORLC \\ \cite{dann2019policy}}& $\mathop{\min} \{\sqrt{SAH^3T}+S^2AH^4 , HT \}$  & NO\\
    % \thead{EULER \\ \cite{zanette2019tighter}} & $\mathop{\min} \{\sqrt{SAH^3T}+S^{\frac{3}{2}}AH^3(\sqrt{S}+\sqrt{H}) , HT \}$  & NO\\
    % \thead{UCB-Adv \\ \cite{zhang2020almost}}& $\mathop{\min} \{\sqrt{SAH^3T}+S^2A^{\frac{3}{2}}H^{\frac{33}{4}}T^{\frac{1}{4}} , HT \}$  & NO\\
    % \thead{UCBMQ \\ \cite{menard2021ucb}}& $\mathop{\min} \{\sqrt{SAH^3T}+SAH^4 , HT \}$  & NO\\
    % \thead{UCB-Q-Hoeffding \\ \cite{jin2018q}}& $\sqrt{H^4SAT} , HT \}$  & NO\\
    % \thead{Q-EarlySettled-Advantage \\ \cite{li2023breaking}} & $\mathop{\min} \{\sqrt{SAH^3T}+SAH^6 , HT \}$  & NO\\
    % \thead{MVP \\ \cite{zhang2024settling}} & $\mathop{\min} \{\sqrt{SAH^3T} , HT \}$  & NO\\
    % \thead{UCBMQ-GA \\ this work} & $\mathop{\min} \{\sqrt{SAH^3T}+SAH^4+H^2S\sqrt{T} , HT \}$  & YES\\
    \bottomrule
  \end{tabular}
\end{table}

\paragraph{Proof sketch} %of Theorem \ref{theorem_1}}

The proof is in five steps.
$\mathring{\overline{Q}}_h^t(s,a)$ and $\mathring{\overline{V}}_h^{t}(s)$ are defined in \ref{notation_appendix}.

\paragraph{Step 1: Upper bound $(\mathring{\overline{Q}}_h^t-Q_h^{\pi^{t+1}})(s,a)$} 
%For a state-action pair $(s,a)$, we upper bound the difference between $\mathring{\overline{Q}}_h^t$ and $Q_h^{\pi^{t+1}}$.
Combining the estimation of $\mathring{\overline{Q}}_h^t(s,a)$ in Lemma \ref{all_Q_estimate} and the upper-bound on $\beta_h^t(s,a)$ in Lemma \ref{beta_upper_bound}, we obtain the upper-bound of $(\mathring{\overline{Q}}_h^t-Q_h^{\pi^{t+1}})(s,a)$.

\paragraph{Step 2: Upper bound the local optimistic regret $\hat{R}_h^T(s,a)$}
We define the local optimistic regret as
\begin{align*}
\hat{R}_h^T(s,a)& \triangleq \sum_{t=0}^{T-1} \chi_h^{t+1}(s,a)(\mathring{\overline{Q}}_h^t-Q_h^{\pi^{t+1}})(s,a)\\
&=\sum_{t=t(s)-1}^{T-1} \chi_h^{t+1}(s,a)(\mathring{\overline{Q}}_h^t-Q_h^{\pi^{t+1}})(s,a).
\end{align*}
With the result in \textbf{Step 1}, we can decompose $\hat{R}_h^T(s,a)$ and upper bound each term.

\paragraph{Step 3: Replace $\chi_h^t$ with $\overline{p}_h^t$ in the upper-bound on $\hat{R}_h^T(s,a)$}
Following \textbf{Step 2}, we modify the upper bound on the local optimistic regret using $\overline{p}_h^t(s,a)$, the probability to reach $(s,a)$ at time step $h$ in the episode $t$ as below,
\begin{align*}
\hat{R}_h^T(s,a) \le &63log(T)\sqrt{\zeta \sum_{t=t(s)-1}^{T-1} \overline{p}_h^{t+1}(s,a)Var_{p_h}(V_{h+1}^{\pi^{t+1}})(s,a)} \nonumber\\
&+1754H^3log(T)^2\zeta\nonumber\\
&+(1+\frac{83}{H}) \sum_{t=t(s)-1}^{T-1}\overline{p}_h^{t+1}(s,a) p_h(\mathring{\overline{V}}_{h+1}^t-V_{h+1}^{\pi^{t+1}})(s,a).
\end{align*}

\paragraph{Step 4: Upper bound the regret $\hat{R}_h^T$ at step $h$}
We define the step $h$ regret as
$
\hat{R}_h^T \triangleq \sum_{s\in S}\sum_{t=0}^{T-1}\overline{p}_h^{t+1}(s)(\mathring{\overline{V}}_h^{t}-V_h^{\pi^{t+1}})(s)$.
For each $s$, the composition of the step $h$ regret follows by $t(s)$.
Employing the Cauchy-Schwarz inequality, we get the upper bound.

\paragraph{Step 5: Upper bound the regret $\mathcal{R}^T$}
With all the results in the previous steps, we can eventually prove that the regret $\mathcal{R}^T$ can be upper-bounded by $\hat{R}_h^T$.
%Eventually, we upper bound the regret $\mathcal{R}^T$ by bounding $\hat{R}_h^T$}.

\subsection{Computational Complexity and Space Complexity}

We then prove the computational complexity of employing UCBMQ-GA to train EMDP-GA as below.

\begin{theorem}
    \label{computation_lemma}
The computational complexity of employing UCBMQ-GA to train EMDP-GA is of order $\mathcal{O}(H(S+A)T)$, {where $H$ is the horizon, $S$ is the size of the state space, $A$ is the size of the action space and $T$ is the number of learning episodes}.
\end{theorem}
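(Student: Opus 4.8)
The plan is to bound the total running time by decomposing the work done in a single episode into its elementary operations, bounding each, and then summing over the $T$ episodes. Writing the per-episode cost as $C_{\mathrm{act}}+C_{\mathrm{upd}}+C_{\mathrm{NIVE}}$ -- the cost of decision making in step (1), of the core value-function optimisation in step (3), and of the NIVE domain expansion in step (2) -- it suffices to show that each episode contributes $\mathcal{O}(H(S+A))$ in an amortised sense, plus a one-time lower-order term, so that the total is $\mathcal{O}(H(S+A)T)$.

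First I would dispatch the two ``standard'' components. In step (1), at each of the $H$ time steps the agent maximises either $\overline{Q}_h^{t-1}(s,\cdot)$ or $Q_{h,avg}^{t-1}(\cdot)$ over $\mathcal{A}$, costing $\mathcal{O}(A)$ per step and hence $C_{\mathrm{act}}=\mathcal{O}(HA)$. The updates of the visitation counts $n_h^t$, the parameters $\alpha_h^t,\gamma_h^t,\eta_h^t,\beta_h^t$, and the four functions $Q_h^t, V_{h,s,a}^t, \overline{Q}_h^t, \overline{V}_h^t$ in step (3) are performed exactly as in UCBMQ and touch only the pairs $(s_h^t,a_h^t)$ visited along the current trajectory; the running quantities $\beta_h^t$ and $W_h^t$ are maintained through their recursive form and $\overline{V}_h^t(s)=\max_a\overline{Q}_h^t(s,a)$ is updated at the visited states. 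Since the aware domain always satisfies $\mathcal{S}_t\subseteq\mathcal{S}$, running these updates on the growing domain costs no more than running UCBMQ on the full space $\mathcal{S}$, so $C_{\mathrm{upd}}=\mathcal{O}(H(S+A))$ per episode.

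The main obstacle is bounding $C_{\mathrm{NIVE}}$, because the bias-value function $V_{h,s,a}^t(s')$ carries the $\mathcal{O}(S^2)$ index pair $(s,s')$, and a naive materialisation of all four cases of $\tilde{V}_{h,s,a}^{t-1}(s')$ for every newly aware state would cost $\mathcal{O}(HS^2A)$ and break the bound. I would resolve this by observing that the bias-value function is never swept in full: it enters $\beta_h^t$ only through the Dirac operator, $p_h^k(\tilde{V}_{h,s,a}^{k-1}-\tilde{\overline{V}}_{h+1}^{k-1})(s,a)=\tilde{V}_{h,s,a}^{k-1}(s_{h+1}^k)-\tilde{\overline{V}}_{h+1}^{k-1}(s_{h+1}^k)$, while $\chi_h^k(s,a)$ restricts the relevant $(s,a)$ to visited pairs, so both its NIVE initialisation and its update are only ever required at the $\mathcal{O}(H)$ trajectory triples $(s_h^k,a_h^k,s_{h+1}^k)$ per episode. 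Each such entry, when it involves a freshly aware $s$ or $s'$, is then filled lazily from the appropriate average ($V_{h,s,a,avg}^{t-1}$, $V_{h,a,avg}^{t-1}(s')$, or $V_{h,a,avg}^{t-1}$); since every NIVE average is a mean of stored values, it is maintained incrementally in $\mathcal{O}(1)$ per underlying value change, so an on-demand lookup is $\mathcal{O}(1)$ and no $\mathcal{O}(S^2)$ sweep is ever performed.

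What remains of $C_{\mathrm{NIVE}}$ is the explicit initialisation of the newly aware states in $\tilde{Q}_h^{t-1}(s,\cdot)$ and $\tilde{\overline{V}}_h^{t-1}(s)$, costing $\mathcal{O}(HA)$ and $\mathcal{O}(H)$ per new state. The crucial structural fact is that awareness grows monotonically and is bounded by $\mathcal{S}$, so $\sum_{t=1}^T|\mathcal{S}_t\setminus\mathcal{S}_{t-1}|\le S$; hence the cumulative initialisation cost over the whole run is a one-time additive $\mathcal{O}(HSA)$, which is of lower order than the leading $\mathcal{O}(H(S+A)T)$ in the regime where the regret analysis is meaningful (by Theorem \ref{first_existence_lemma} all discoveries in fact occur within the first $\sqrt{T}$ episodes). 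Summing $C_{\mathrm{act}}=\mathcal{O}(HA)$ and $C_{\mathrm{upd}}=\mathcal{O}(H(S+A))$ over the $T$ episodes and adding the $\mathcal{O}(HSA)$ expansion term yields the claimed $\mathcal{O}(H(S+A)T)$. The one point deserving care in the full write-up is verifying that the recursions for $\beta_h^t$, $W_h^t$ and the NIVE averages are genuinely $\mathcal{O}(1)$ amortised per update, so that no hidden $\mathcal{O}(t)$ or $\mathcal{O}(S)$ recomputation creeps in.
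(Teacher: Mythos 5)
Your proposal is correct and follows essentially the same accounting as the paper's proof in Appendix \ref{complexity_proof}: per visited pair $(s_h^t,a_h^t)$ the updates of $Q_h^t$ and $\overline{Q}_h^t$ cost $\mathcal{O}(1)$, the bias-value update costs $\mathcal{O}(S)$, the $\max$ over actions costs $\mathcal{O}(A)$, and the NIVE averages are maintained incrementally in $\mathcal{O}(1)$, giving $\mathcal{O}(H(S+A))$ per episode. The one place you go beyond the paper is the NIVE expansion step: the paper's proof does not explicitly address the cost of materialising $\tilde{V}^{t-1}_{h,s,a}(s')$ over the $\mathcal{O}(S^2)$ index pairs for newly aware states, whereas you correctly identify this as the potential bottleneck and dispose of it via lazy, on-demand evaluation at the $\mathcal{O}(H)$ trajectory triples together with the amortised bound $\sum_{t}|\mathcal{S}_t\setminus\mathcal{S}_{t-1}|\le S$; this is a genuine tightening of the argument rather than a different route. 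The only residual caveat is your additive $\mathcal{O}(HSA)$ term from explicitly initialising $\tilde{Q}^{t-1}_h(s,\cdot)$ at new states, which is not dominated by $\mathcal{O}(H(S+A)T)$ for very small $T$; treating those entries lazily as well (they equal the already-maintained $Q^{t-1}_{h,avg}(a)$) removes it entirely.
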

% The space complexity is $\mathcal{O}(HS^2A)$ since we need to store the all the functions, the same as UCBMQ.
The proof is given in \ref{complexity_proof}. Here, we present an intuitive sketch. 
{The updates of $\overline{Q}_h^t$, $\overline{V}_h^t$ and $V_{h,s,a}^t$ are carried out in an online manner.
At time step $h$ of the episode $t$, if $(s_h^t,a_h^t) \neq (s,a)$, both learning rate $\alpha_H^t(s,a)$ and momentum term $\gamma_h^t(s,a)$ become zero.
Thus, we do not need to update the functions on such $(s,a)$ pairs.
Furthermore, the expansions of $Q^{t-1}_h$, $\overline{V}^{t-1}_h$, and $V^{t-1}_{h,s,a}$ in each episode depend on their corresponding averaged values on $\mathcal{S}_{t-1}$.
}

In parallel, we have the following result for space complexity.

\begin{theorem}
\label{space_lemma}
    The space complexity of employing UCBMQ-GA to train EMDP-GA is $\mathcal{O}(HS^2A)$. 
\end{theorem}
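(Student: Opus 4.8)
The plan is to account for the memory footprint of every quantity that UCBMQ-GA must retain between episodes, bound each one separately, and take the maximum. The crucial structural fact, already invoked in the sketch for Theorem \ref{computation_lemma}, is that all updates are performed online: at the end of each episode the algorithm overwrites the current estimators rather than keeping a per-episode history, so no factor of $T$ enters the storage. Hence the space cost is determined purely by how the retained arrays are indexed, and the proof reduces to an indexing count.

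First I would enumerate the four value functions that the algorithm optimises. The estimated Q-value $Q_h^t(s,a)$ and the optimistic upper bound $\overline{Q}_h^t(s,a)$ are indexed by the triple $(h,s,a)$ and therefore cost $\mathcal{O}(HSA)$ each; the optimistic value $\overline{V}_h^t(s)$ is indexed by $(h,s)$ and costs $\mathcal{O}(HS)$. The bias-value function $V_{h,s,a}^t(s')$, by contrast, carries the additional next-state argument $s'$ on top of the triple $(h,s,a)$, so it is indexed over $[H]\times\mathcal{S}\times\mathcal{A}\times\mathcal{S}$ and requires $\mathcal{O}(HS^2A)$ memory. This single term dominates all the others and is what produces the claimed bound.

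Next I would dispose of the auxiliary bookkeeping. The visitation counts $n_h^t(s,a)$, the exploration bonus $\beta_h^t(s,a)$, and the variance proxy $W_h^t(s,a)$ are each indexed by $(h,s,a)$ and hence cost $\mathcal{O}(HSA)$. The apparent obstacle here, and the step I expect to require the most care, is that both $\beta_h^t(s,a)$ and $W_h^t(s,a)$ are written as sums over $k=1,\dots,t$; read naively this suggests retaining every past summand at a cost of $\mathcal{O}(HSAT)$. I would argue, exactly as in the original UCBMQ analysis, that these sums are maintained as running aggregates (incrementally updated first and second moments of $p_h^k\tilde{\overline{V}}_{h+1}^{k-1}$), so only the current aggregate per $(h,s,a)$ is stored and the cost stays at $\mathcal{O}(HSA)$.

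Finally I would treat the growing-awareness ingredients specific to EMDP-GA, namely the averaged estimators $Q_{h,avg}^t(a)$, $\overline{V}_{h,avg}^t$, $V_{h,s,a,avg}^t$, $V_{h,a,avg}^t(s')$, and $V_{h,a,avg}^t$ produced by NIVE. These are indexed by at most a pair drawn from $(h,s,a)$ and never by a second independent state $s'$, so each costs at most $\mathcal{O}(HSA)$, while the scalar averages cost only $\mathcal{O}(H)$ or $\mathcal{O}(HA)$. The key observation is that all arrays are allocated over the full finite state space $\mathcal{S}$ of size $S$ from initialisation; as the aware domain grows from $\mathcal{S}_{t-1}$ to $\mathcal{S}_t$, NIVE merely populates previously-unused entries, so the expanding awareness never drives storage beyond the fixed $\mathcal{O}(HS^2A)$ allocation. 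Collecting all the bounds, the dominant contribution $\mathcal{O}(HS^2A)$ from the bias-value function yields the stated space complexity.
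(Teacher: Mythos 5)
Your proposal is correct and follows essentially the same route as the paper: count the storage of each retained array and observe that the bias-value function $V_{h,s,a}^t(s')$, indexed over $[H]\times\mathcal{S}\times\mathcal{A}\times\mathcal{S}$, dominates at $\mathcal{O}(HS^2A)$. You go further than the paper's proof by also accounting for the bonus and variance-proxy sums (correctly noting they are maintained as running aggregates) and the NIVE averages, which strengthens rather than changes the argument.
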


A detailed proof is presented in  \ref{complexity_proof}. 
Intuitively, this lemma holds because we need to store all the bias-value function $V_{h,s,a}^t(s')$.

\paragraph{Comparison with existing results} 
% \fh{review existing results}
{
As shown in Table \ref{comparisons}, the computational complexity of UCBMQ-GA is of the same order as our baseline UCBMQ,
%Compared with the SOTA, the computational complexity of UCBMQ-GA is 
smaller than MVP ($\mathcal{O}(HSAT)$) and Bayes-UCBVI ($\mathcal{O}(BHS^2AT)$), %using $B$ Monte-Carlo samples to approximate one quantile, 
and a bit larger than UCB-Adv ($\mathcal{O}(HAT)$).
{The space complexity of UCBMQ-GA is of the same order as UCBMQ, MVP, and Bayes-UCBVI, and still a bit larger than that of UCB-Adv.} 
However, the complexity advances of UCB-Adv are in cost of regret, especially when  %the regret bound of UCBMQ-GA is smaller that of UCB-Adv when 
state space size $S$ and action space size $A$ are large compared with training time $T$, which is the case in an under-explored environment.
% There is a family of algorithms based Upper Confidence Bound (UCB): UCBVI \cite{azar2017minimax}, UCB-Adv \cite{zhang2020almost}, UCBMQ \cite{menard2021ucb}, Bayes-UCBVI \cite{tiapkin2022dirichlet} and LSVI-UCB \cite{jin2023provably}
% These approaches follow the same idea of the upper confidence bound but differ in computational complexity.
% We can conclude that the computational complexity of UCBMQ-GA is of the SOTA.
%\begin{remark}
% The computational complexity of UCBMQ-GA is smaller than $\mathcal{O}(BHS^2AT)$ for Bayes-UCBVI using $B$ Monte-Carlo samples to approximate one quantile, but larger than $\mathcal{O}(HAT)$ for UCB-Adv and LSVI-UCB.
}
%\end{remark}

% the same as UCBMQ.

% There are two common criteria for classifying an approach as model-free or model-based.
% One common criterion is analysing how the approach updates the policy.
% UCBMQ-GA does not estimate the EMDP-GA model or he transition function $P$.
% Instead, UCBMQ-GA directly approximates the value functions $Q$ and $V$ without the estimation about the model and updates the policy based on $Q$.
% Consequently, UCBMQ-GA falls into the model-free category.
% The other criterion is related to the space complexity.
% The space complexity of UCBMQ-GA is the same size of $HS^2A$.
% According to the definition of Jin et al. \cite{jin2018q}, UCBMQ-GA can therefore be classified as model-based approach.

% \paragraph[Model-free or model-based?]
% The space complexity of UCBMQ-GA is the same size of $HS^2A$.
% According to the definition by Jin et al. \cite{jin2018q}, UCBMQ-GA can therefore be classified as a model-based approach.
% However, UCBMQ-GA does not estimate the EMDP-GA model or he transition function $P$.
% Consequently, UCBMQ-GA falls into the model-free category.

% \input{sections/7. experiments}
\section{Conclusion}
\label{sec::conclusion}
% In this paper, we propose a novel model, EMDP-GA, to model the growing awareness and unknown unknowns in reinforcement learning.
% In each episode, the agent selects an action from the action space based on the value functions defined on the aware domain.
% The aware domain expands when the agent reaches a new state.
% UCBMA-GA employs NIVE to expand the value functions when the aware domain expands, and uses a similar way in UCBMQ to train the agent.
% We find that NIVE leads to a decrease in aware confidence and the regret of UCBMQ-GA is consistent with SOTA.
% Moreover, our results raise several interesting open questions for further research, such as exactly model-based or model-free algorithms for EMDP-GA and the settings with a continuous state space or action space.

We mathematically ground the concept of unknown unknowns in reinforcement learning through our proposed episodic Markov decision process with growing awareness (EMDP-GA).
By expanding value functions $Q$ and $V$ to newly encountered states with the noninformative value expansion (NIVE) method, our approach effectively addresses the challenge of unknown unknowns.
We adapt upper confidence bound momentum Q-learning (UCBMQ) to train the EMDP-GA model at an affordable cost, achieving a regret bound competitive with existing methods not exposed to unknown unknowns.

\paragraph{Applicability} This paper focuses on problems with finite state space and action space. Future works include extending it to %Future research directions include designing an algorithm for EMDP-GA obtaining a lower regret bound and studying the growing awareness in the settings with 
continuous state and action spaces.

\bibliographystyle{plain}
\bibliography{ref}

\begin{thebibliography}{10}

\bibitem{agrawal2024optimistic}
Priyank Agrawal and Shipra Agrawal.
\newblock {Optimistic Q-learning} for average reward and episodic reinforcement learning.
\newblock {\em arXiv preprint arXiv:2407.13743}, 2024.

\bibitem{azar2017minimax}
Mohammad~Gheshlaghi Azar, Ian Osband, and R{\'e}mi Munos.
\newblock Minimax regret bounds for reinforcement learning.
\newblock In {\em International Conference on Machine Learning}, pages 263--272. PMLR, 2017.

\bibitem{becker2020reverse}
Christoph~K Becker, Tigran Melkonyan, Eugenio Proto, Andis Sofianos, and Stefan~T Trautmann.
\newblock {Reverse Bayesianism}: Revising beliefs in light of unforeseen events.
\newblock 2020.

\bibitem{chakravarty2022reverse}
Surajeet Chakravarty, David Kelsey, and Joshua~C Teitelbaum.
\newblock {Reverse Bayesianism} and act independence.
\newblock {\em Journal of Economic Theory}, 203:105495, 2022.

\bibitem{chang2020decentralized}
Michael Chang, Sid Kaushik, S~Matthew Weinberg, Tom Griffiths, and Sergey Levine.
\newblock Decentralized reinforcement learning: Global decision-making via local economic transactions.
\newblock In {\em International Conference on Machine Learning}, pages 1437--1447. PMLR, 2020.

\bibitem{dann2015sample}
Christoph Dann and Emma Brunskill.
\newblock Sample complexity of episodic fixed-horizon reinforcement learning.
\newblock {\em Annual Conference on Neural Information Processing Systems}, 28, 2015.

\bibitem{dann2019policy}
Christoph Dann, Lihong Li, Wei Wei, and Emma Brunskill.
\newblock Policy certificates: Towards accountable reinforcement learning.
\newblock In {\em International Conference on Machine Learning}, pages 1507--1516. PMLR, 2019.

\bibitem{dann2021provably}
Christoph Dann, Mehryar Mohri, Tong Zhang, and Julian Zimmert.
\newblock A provably efficient model-free posterior sampling method for episodic reinforcement learning.
\newblock {\em Annual Conference on Neural Information Processing Systems}, 34:12040--12051, 2021.

\bibitem{domingues2020regret}
Omar~D Domingues, Pierre M{\'e}nard, Matteo Pirotta, Emilie Kaufmann, and Michal Valko.
\newblock Regret bounds for kernel-based reinforcement learning.
\newblock In {\em International Conference on Machine Learning}. PMLR, 2020.

\bibitem{domingues2021episodic}
Omar~Darwiche Domingues, Pierre M{\'e}nard, Emilie Kaufmann, and Michal Valko.
\newblock Episodic reinforcement learning in finite mdps: Minimax lower bounds revisited.
\newblock In {\em International Conference on Algorithmic Learning Theory}, pages 578--598. PMLR, 2021.

\bibitem{efroni2019tight}
Yonathan Efroni, Nadav Merlis, Mohammad Ghavamzadeh, and Shie Mannor.
\newblock Tight regret bounds for model-based reinforcement learning with greedy policies.
\newblock {\em Annual Conference on Neural Information Processing Systems}, 32, 2019.

\bibitem{fiechter1994efficient}
Claude-Nicolas Fiechter.
\newblock Efficient reinforcement learning.
\newblock In {\em Annual Conference on Computational Learning Theory}, pages 88--97, 1994.

\bibitem{grant2015preference}
Simon Grant and John Quiggin.
\newblock A preference model for choice subject to surprise.
\newblock {\em Theory and Decision}, 79:167--180, 2015.

\bibitem{karni2021reverse}
Edi Karni, Quitz{\'e} Valenzuela-Stookey, and Marie-Louise Vier{\o}.
\newblock Reverse bayesianism: A generalization.
\newblock {\em The B.E. Journal of Theoretical Economics}, 21(2):557--569, 2021.

\bibitem{karni2013reverse}
Edi Karni and Marie-Louise Vier{\o}.
\newblock {“Reverse Bayesianism”}: A choice-based theory of growing awareness.
\newblock {\em American Economic Review}, 103(7):2790--2810, 2013.

\bibitem{karni2015probabilistic}
Edi Karni and Marie-Louise Vier{\o}.
\newblock Probabilistic sophistication and reverse bayesianism.
\newblock {\em Journal of Risk and Uncertainty}, 50:189--208, 2015.

\bibitem{karni2017awareness}
Edi Karni and Marie-Louise Vier{\o}.
\newblock Awareness of unawareness: A theory of decision making in the face of ignorance.
\newblock {\em Journal of Economic Theory}, 168:301--328, 2017.

\bibitem{kaufmann2021adaptive}
Emilie Kaufmann, Pierre M{\'e}nard, Omar~Darwiche Domingues, Anders Jonsson, Edouard Leurent, and Michal Valko.
\newblock Adaptive reward-free exploration.
\newblock In {\em International Conference on Algorithmic Learning Theory}, pages 865--891. PMLR, 2021.

\bibitem{kochov2018behavioral}
Asen Kochov.
\newblock A behavioral definition of unforeseen contingencies.
\newblock {\em Journal of Economic Theory}, 175:265--290, 2018.

\bibitem{li2023breaking}
Gen Li, Laixi Shi, Yuxin Chen, and Yuejie Chi.
\newblock Breaking the sample complexity barrier to regret-optimal model-free reinforcement learning.
\newblock {\em Information and Inference-A Journal of the IMA}, 12(2):969--1043, 2023.

\bibitem{li2024breaking}
Gen Li, Yuting Wei, Yuejie Chi, and Yuxin Chen.
\newblock Breaking the sample size barrier in model-based reinforcement learning with a generative model.
\newblock {\em Operations Research}, 72(1):203--221, 2024.

\bibitem{liu2021finrl}
Xiao-Yang Liu, Hongyang Yang, Jiechao Gao, and Christina~Dan Wang.
\newblock {FinRL}: Deep reinforcement learning framework to automate trading in quantitative finance.
\newblock In {\em International Conference on AI in Finance}, pages 1--9. ACM, 2021.

\bibitem{mao2021near}
Weichao Mao, Kaiqing Zhang, Ruihao Zhu, David Simchi-Levi, and Tamer Basar.
\newblock Near-optimal model-free reinforcement learning in non-stationary episodic mdps.
\newblock In {\em International Conference on Machine Learning}, pages 7447--7458. PMLR, 2021.

\bibitem{menard2021ucb}
Pierre M{\'e}nard, Omar~Darwiche Domingues, Xuedong Shang, and Michal Valko.
\newblock {UCB Momentum Q-learning: Correcting the bias without forgetting}.
\newblock In {\em International Conference on Machine Learning}, pages 7609--7618. PMLR, 2021.

\bibitem{moerland2023model}
Thomas~M Moerland, Joost Broekens, Aske Plaat, Catholijn~M Jonker, et~al.
\newblock Model-based reinforcement learning: A survey.
\newblock {\em Foundations and Trends{\textregistered} in Machine Learning}, 16(1):1--118, 2023.

\bibitem{neu2012adversarial}
Gergely Neu, Andras Gyorgy, and Csaba Szepesv{\'a}ri.
\newblock The adversarial stochastic shortest path problem with unknown transition probabilities.
\newblock In {\em International Conference on Artificial Intelligence and Statistics}, pages 805--813. PMLR, 2012.

\bibitem{press}
United States~Department of~Defense.
\newblock {Defense.gov Transcript: DoD News Briefing - Secretary Rumsfeld and Gen. Myers}, 2002.

\bibitem{piermont2017introspective}
Evan Piermont.
\newblock Introspective unawareness and observable choice.
\newblock {\em Games and Economic Behavior}, 106:134--152, 2017.

\bibitem{shreve1978alternative}
Steven~E Shreve and Dimitri~P Bertsekas.
\newblock Alternative theoretical frameworks for finite horizon discrete-time stochastic optimal control.
\newblock {\em SIAM Journal on Control and Optimization}, 16(6):953--978, 1978.

\bibitem{sutton1998reinforcement}
Richard~S Sutton, Andrew~G Barto, et~al.
\newblock {\em Reinforcement learning: An introduction}, volume~1.
\newblock MIT press Cambridge, 1998.

\bibitem{tiapkin2022dirichlet}
Daniil Tiapkin, Denis Belomestny, Eric Moulines, Alexey Naumov, Sergey Samsonov, Yunhao Tang, Michal Valko, and Pierre M{\'e}nard.
\newblock From dirichlet to rubin: Optimistic exploration in rl without bonuses.
\newblock In {\em International Conference on Machine Learning}, pages 21380--21431. PMLR, 2022.

\bibitem{tibshirani1989noninformative}
Robert Tibshirani.
\newblock Noninformative priors for one parameter of many.
\newblock {\em Biometrika}, 76(3):604--608, 1989.

\bibitem{van2018autonomous}
Jessica Van~Brummelen, Marie O’brien, Dominique Gruyer, and Homayoun Najjaran.
\newblock Autonomous vehicle perception: The technology of today and tomorrow.
\newblock {\em Transportation Research Part C: Emerging Technologies}, 89:384--406, 2018.

\bibitem{viero2021intertemporal}
Marie-Louise Vier{\o}.
\newblock An intertemporal model of growing awareness.
\newblock {\em Journal of Economic Theory}, 197:105351, 2021.

\bibitem{white1989markov}
Chelsea~C White~III and Douglas~J White.
\newblock Markov decision processes.
\newblock {\em European Journal of Operational Research}, 39(1):1--16, 1989.

\bibitem{zanette2019tighter}
Andrea Zanette and Emma Brunskill.
\newblock Tighter problem-dependent regret bounds in reinforcement learning without domain knowledge using value function bounds.
\newblock In {\em International Conference on Machine Learning}, pages 7304--7312. PMLR, 2019.

\bibitem{zednik2016bayesian}
Carlos Zednik and Frank J{\"a}kel.
\newblock Bayesian reverse-engineering considered as a research strategy for cognitive science.
\newblock {\em Synthese}, 193:3951--3985, 2016.

\bibitem{zhang2024settling}
Zihan Zhang, Yuxin Chen, Jason~D Lee, and Simon~S Du.
\newblock Settling the sample complexity of online reinforcement learning.
\newblock In {\em Annual Conference on Learning Theory}, pages 5213--5219. PMLR, 2024.

\bibitem{zhang2021reinforcement}
Zihan Zhang, Xiangyang Ji, and Simon Du.
\newblock Is reinforcement learning more difficult than bandits? a near-optimal algorithm escaping the curse of horizon.
\newblock In {\em Annual Conference on Learning Theory}, pages 4528--4531. PMLR, 2021.

\bibitem{zhang2020almost}
Zihan Zhang, Yuan Zhou, and Xiangyang Ji.
\newblock Almost optimal model-free reinforcement learningvia reference-advantage decomposition.
\newblock {\em Annual Conference on Neural Information Processing Systems}, 33:15198--15207, 2020.

\bibitem{zimin2013online}
Alexander Zimin and Gergely Neu.
\newblock Online learning in episodic markovian decision processes by relative entropy policy search.
\newblock {\em Annual Conference on Neural Information Processing Systems}, 26, 2013.

\end{thebibliography}

%% The Appendices part is started with the command \appendix;
%% appendix sections are then done as normal sections
\newpage
\appendix
\section{Additional Notations and Preliminaries}
\label{notation_appendix}
\begin{table}[!h]
  \renewcommand{\arraystretch}{1.3}
  \caption{List of Notation}
  \label{notations}
  \centering
  \begin{tabular}{lll}
    \toprule
    Description     & Notation    \\
    \midrule
    Size of the state space  & $S=|\mathcal{S}|$ \\
    Size of the action space  & $A=|\mathcal{A}|$ \\
    Number of learning episodes  & $T$ \\
    Horizon (length of an episode)  & $H$ \\
    Aware domain & $S_t,\ t \in [T]$\\
    immediate reward & $r_h$\\
    Exploration threshold & $\zeta=log(\frac{96eHSA(2T+1)}{\delta})$\\
    Indicator function for the occurrence of $(s,a)$  & $\chi_h^t(s,a)$\\
    Visitation count for $(s,a)$ & $n_h^t(s,a)$\\
    First aware domain including $s$  & $t(s)$ \\
    
    Policy at episode $t$  & $\pi^t$\\
    Optimal policy  & $\pi^*$\\
    Optimal Q-value function  & $Q_h^*(s,a),\ s \in \mathcal{S}, a \in \mathcal{A}$\\
    Optimal value function  & $V_h^*(s),\ s \in \mathcal{S}$\\

    Dirac distribution  concentrated at $(s_{h+1}^t)$  & \thead[l]{$p_h^t(s'|s,a),\ s,s' \in \mathcal{S}_t$ \\ $\mathring{p}_h^t(s'|s,a),\ s,s' \in \mathcal{S}$}\\
    Estimated Q-value function & \thead[l]{$Q_h^t(s,a),\ s \in \mathcal{S}_t,\ a \in \mathcal{A}$ \\ $\tilde{Q}_h^t(s,a),\ s \in \mathcal{S}_{t+1},\ a \in \mathcal{A}$ \\ $\mathring{Q}_h^t(s,a),\ s \in \mathcal{S},\ a \in \mathcal{A}$} \\
    
    Bias-value function & \thead[l]{$V_{h,s,a}^t(s'),\ s,s' \in \mathcal{S}_t$ \\ $\tilde{V}_{h,s,a}^t(s'),\ s,s' \in \mathcal{S}_{t+1}$ \\ $\mathring{V}_{h,s,a}^t(s'),\ s,s' \in \mathcal{S}$}  \\

     Upper bound on Q-value function & \thead[l]{$\overline{Q}_h^t(s,a),\ s \in \mathcal{S}_t,\ a \in \mathcal{A}$ \\ $\tilde{\overline{Q}}_h^t(s,a),\ s \in \mathcal{S}_{t+1},\ a \in \mathcal{A}$ \\ $\mathring{\overline{Q}}_h^t(s,a),\ s \in \mathcal{S},\ a \in \mathcal{A}$}\\

     Upper bound on value function & \thead[l]{$\overline{V}_h^t(s),\ s \in \mathcal{S}_t$ \\ $\tilde{\overline{V}}_h^t(s),\ s \in \mathcal{S}_{t+1}$ \\ $\mathring{\overline{V}}_h^t(s),\ s \in \mathcal{S}$}\\
     
    \bottomrule
  \end{tabular}
\end{table}

For $t \in [T]$, we expand $p_h^t(s'|s,a)$, $Q_h^t(s,a)$, $\overline{V}_h^t(s)$, $V_{h,s,a}^t(s)$, $\overline{Q}_h^t(s,a)$ to the whole state space $\mathcal{S}$.
$$
\mathring{p}_h^t(s'|s,a)=
    \begin{cases}
    p_h^t(s'|s,a), & s,s' \in \mathcal{S}_t,\\
    0, & s \in \mathcal{S}_t, s' \in \mathcal{S}-\mathcal{S}_t,\\
    p_h(s'|s,a), & s \in \mathcal{S}-\mathcal{S}_t,
    \end{cases}
$$

$$
\mathring{Q}_h^t(s,a)=
    \begin{cases}
    Q_h^t(s,a), &s\in \mathcal{S}_t,\\
    Q_{h,avg}^t(a), & s \in \mathcal{S}-\mathcal{S}_t,
    \end{cases}
$$
$$
\mathring{\overline{V}}_h^t(s)=
    \begin{cases}
    \overline{V}^{t}_h(s), & s\in \mathcal{S}_{t},\\
    \overline{V}_{h,avg}^{t}, & s \in \mathcal{S}-\mathcal{S}_{t},
    \end{cases}
$$
$$
\mathring{V}^t_{h,s,a}(s')=
    \begin{cases}
    V^{t}_{h,s,a}(s'), & s,s' \in \mathcal{S}_{t},\\
    V_{h,s,a,avg}^{t}, & s\in \mathcal{S}_{t}, s'\in \mathcal{S}-\mathcal{S}_{t},\\
    V_{h,a,avg}^{t}(s'), & s\in \mathcal{S}-\mathcal{S}_{t}, s'\in \mathcal{S}_{t},\\
    V_{h,a,avg}^{t}, & s,s'\in \mathcal{S}-\mathcal{S}_{t},
    \end{cases}
$$

$$
\mathring{\overline{Q}}_h^t(s,a)=
    \begin{cases}
    Q_h^t(s,a)+\beta_h^t(s,a),& n_h^t(s,a)>0\  \text{(which implies $s\in \mathcal{S}_t$)},\\
    \mathring{Q}_h^t(s,a)+H ,& n_h^t(s,a)=0\ \text{or}\ s \in \mathcal{S}-\mathcal{S}_t.
    \end{cases}
$$
From the definitions, it is easy to see that for any $s, s'$ in the aware domain $ \mathcal{S}_{t+1}$, we have
$$
\mathring{Q}_h^t(s,a)=\tilde{Q}_h^t(s,a), \  
\mathring{\overline{V}}_h^t(s)=\tilde{\overline{V}}_h^t(s), \  
\mathring{V}^t_{h,s,a}(s')=\tilde{V}^t_{h,s,a}(s'),
$$
and for any state $s \in \mathcal{S}_t$,
$$
\mathring{\overline{Q}}_h^t(s,a)=\overline{Q}_h^t(s,a).
$$

%\section{Additional Preliminaries}

When the count of a state-action pair $n_h^t(s,a) > 0$, we can obtain explicit formulas for the estimate of the Q-function:
\begin{align*}
&\ \ \ \ n_h^t(s,a)Q_h^t(s,a)\\
&=\chi_h^t(s,a)[r_h(s,a)+p_h^t \tilde{\overline{V}}_{h+1}^{t-1}(s,a)+\mathring{\gamma}_h^t(s,a)p_h^t(\tilde{\overline{V}}_{h+1}^{t-1}-\tilde{V}_{h,s,a}^{t-1})(s,a)]\\
&+(n_h^t(s,a)-\chi_h^t(s,a))\tilde{Q}_h^{t-1}(s,a),
\end{align*}
where $\mathring{\gamma}_h^t(s,a)=H\frac{n_h^t(s,a)-1}{n_h^t(s,a)+H}$.

For $s \in \mathcal{S}_t-\mathcal{S}_{t-1}$, which means $t(s)=t$, we have
$
n_h^t(s,a)=\chi_h^t(s,a)=1
$, hence $\mathring{\gamma}_h^t(s,a)=0$.
Therefore, the following holds
$$
Q_h^t(s,a)=r_h(s,a)+p_h^t \tilde{\overline{V}}_{h+1}^{t-1}(s,a).
$$
For $s \in \mathcal{S}_{t-1}$, we have
\begin{align*}
&\ \ \ \ n_h^t(s,a)Q_h^t(s,a)\\
&=\chi_h^t(s,a)[r_h(s,a)+p_h^t \tilde{\overline{V}}_{h+1}^{t-1}(s,a)+\mathring{\gamma}_h^t(s,a)p_h^t(\tilde{\overline{V}}_{h+1}^{t-1}-\tilde{V}_{h,s,a}^{t-1})(s,a)]\\
&\ \ \ \ +n_h^{t-1}(s,a)Q_h^{t-1}(s,a),
\end{align*}
\begin{align*}
Q_h^t(s,a)=&r_h(s,a)+\frac{1}{n_h^t(s,a)}\sum_{k=t(s)}^t\chi_h^k(s,a)[p_h^k \tilde{\overline{V}}_{h+1}^{k-1}(s,a)+\\
&\mathring{\gamma}_h^k(s,a)p_h^k(\tilde{\overline{V}}_{h+1}^{k-1}-\tilde{V}_{h,s,a}^{k-1})(s,a)].\\
\end{align*}
From all above, we can obtain explicit formulas for $Q_h^t$,
\begin{align*}
Q_h^t(s,a)=&r_h(s,a)+\frac{1}{n_h^t(s,a)}\sum_{k=t(s)}^t\chi_h^k(s,a)[\mathring{p}_h^k \mathring{\overline{V}}_{h+1}^{k-1}(s,a)\\
&+\mathring{\gamma}_h^k(s,a)\mathring{p}_h^k(\mathring{\overline{V}}_{h+1}^{k-1}-\mathring{V}_{h,s,a}^{k-1})(s,a)].
\end{align*}

We can do the same with the bias value function when $n_h^t(s,a)>0$.
For $s,s' \in \mathcal{S}_t$,
\begin{align*}
\mathring{V}_{h,s,a}^t(s') &=V_{h,s,a}^t(s') \\
&= \eta_h^t(s,a)\tilde{\overline{V}}_{h+1}^{t-1}(s')+(1-\eta_h^t(s,a))\tilde{V}_{h,s,a}^{t-1}(s')\\
&=\eta_h^t(s,a)\mathring{\overline{V}}_{h+1}^{t-1}(s')+(1-\eta_h^t(s,a))\mathring{V}_{h,s,a}^{t-1}(s').
\end{align*}
For $s \in \mathcal{S}, s' \in \mathcal{S}-\mathcal{S}_t$,
\begin{align*}
\mathring{V}_{h,s,a}^t(s') &=V_{h,s,a,avg}^t \\
&=\frac{1}{|\mathcal{S}_t|} \sum_{s' \in \mathcal{S}_t} V_{h,s,a}^t(s')\\
&=\frac{1}{|\mathcal{S}_t|} \sum_{s' \in \mathcal{S}_t} \eta_h^t(s,a)\tilde{\overline{V}}_{h+1}^{t-1}(s')+(1-\eta_h^t(s,a))\tilde{V}_{h,s,a}^{t-1}(s')\\
&=\eta_h^t(s,a)\mathring{\overline{V}}_{h+1}^{t-1}(s')+(1-\eta_h^t(s,a))\mathring{V}_{h,s,a}^{t-1}(s').
\end{align*}
For $s\in \mathcal{S}-\mathcal{S}_t$, we get $n_h^t(s,a)=0$.

We can conclude

\begin{align}
\mathring{V}_{h,s,a}^t(s')
&=\eta_h^t(s,a)\mathring{\overline{V}}_{h+1}^{t-1}(s')+(1-\eta_h^t(s,a))\mathring{V}_{h,s,a}^{t-1}(s') \nonumber \\
&=\frac{1}{n_h^t(s,a)} \sum_{k=t(s)}^{t} \chi_h^k(s,a)[\mathring{\overline{V}}_{h+1}^{k-1}(s')+\mathring{\gamma}_h^k(s,a)(\mathring{\overline{V}}_{h+1}^{k-1}-\mathring{V}_{h,s,a}^{k-1})(s')] \label{V_induction}\\
&=\sum_{k=t(s)}^t \tilde{\eta}_h^{t,k}(s,a)\mathring{\overline{V}}_{h+1}^{k-1}(s'), \label{V_induction_2}
\end{align}

where $\tilde{\eta}_h^{t,k}(s,a)=\eta_h^k(s,a) \prod \limits_{l=k+1}^t (1-\eta_h^l(s,a))$.

We prove the following lemma to have a basic knowledge about $\overline{V}^t_h(s)$ and $V^t_{h,s,a}(s')$.
\begin{lemma}
\label{V_bound}
For $s,s' \in \mathcal{S}_t$, the following holds almost surely:
\begin{itemize}
    \item the sequence $(\overline{V}^t_h(s))_{t \ge t(s)}$ is non-increasing,
    \item $0 \le \overline{V}_h^t(s) \le H$,
    \item $\overline{V}^t_{h+1}(s') \le V^t_{h,s,a}(s') \le H$.
\end{itemize}
\end{lemma}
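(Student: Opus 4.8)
The three bullets are tightly coupled, so the plan is to prove them \emph{simultaneously} by a backward induction on the step $h$ (from $h=H$ down to $h=1$, with the terminal convention $\overline V^{t}_{H+1}\equiv 0$) and, within each $h$, a forward induction on the episode index $t\ge t(s)$. The coupling I must respect is: the lower bound in the third bullet needs the monotonicity (first bullet) at step $h+1$; the bound $V^t_{h,s,a}(s')\le H$ needs $\overline V^{k-1}_{h+1}\le H$ (second bullet) at step $h+1$; and, conversely, the non‑negativity $\overline V^t_h\ge 0$ will feed back through the sign of the bonus $\beta^t_h$, which is controlled by the third bullet at step $h$.

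First I would record the size of the update weights. For a visited pair, substituting $\alpha^t_h$ and $\gamma^t_h$ gives
$$
\eta^t_h(s,a)=\frac{1}{n^t_h(s,a)}+\frac{H}{H+n^t_h(s,a)}\,\frac{n^t_h(s,a)-1}{n^t_h(s,a)}=\frac{H+1}{H+n^t_h(s,a)}\in[0,1],
$$
and $\eta^t_h=0$ on an unvisited pair, so $\eta^t_h\in[0,1]$ throughout. At the aware moment the first visit has $n^{t(s)}_h=1$, hence $\eta^{t(s)}_h=1$; inserting this into $\tilde\eta^{t,k}_h=\eta^k_h\prod_{l=k+1}^t(1-\eta^l_h)$ and telescoping, $\sum_{k=t(s)}^t\tilde\eta^{t,k}_h=1-\prod_{l=t(s)}^t(1-\eta^l_h)=1$ because the factor $(1-\eta^{t(s)}_h)$ vanishes. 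Thus the representation \eqref{V_induction_2} displays $V^t_{h,s,a}(s')$ as a genuine convex combination of the values $\overline V^{k-1}_{h+1}(s')$.

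With this in hand, the third bullet (for $n^t_h(s,a)>0$) is short: convexity plus $\overline V^{k-1}_{h+1}\le H$ yields $V^t_{h,s,a}(s')\le H$, and the monotonicity of $(\overline V^{\cdot}_{h+1}(s'))$ gives $\overline V^{k-1}_{h+1}(s')\ge\overline V^{t}_{h+1}(s')$ for $k\le t$, so the same convex combination is $\ge\overline V^t_{h+1}(s')$; the unvisited case is handled by the initialisation $V^0=\overline V^0=H$ and the NIVE averages, which inherit the bound $H$. For the first two bullets I would read off monotonicity directly from the $\overline V$‑update in Figure~\ref{UCBMA-GA update}, $\overline V^t_h(s)=\min\{\overline V^{t-1}_h(s),\max_a\overline Q^t_h(s,a)\}$, and combine it with the fact that the value assigned to $s$ at its aware moment by NIVE is already $\le H$. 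For $\overline V^t_h(s)\ge 0$ it suffices to show $\max_a\overline Q^t_h(s,a)\ge 0$: in $\overline Q^t_h=Q^t_h+\beta^t_h$ the first two summands of $\beta^t_h$ are non‑negative, and its momentum summand is non‑negative because $p^k_h(\tilde V^{k-1}_{h,s,a}-\tilde{\overline V}^{k-1}_{h+1})(s,a)\ge 0$ by the already‑established third bullet (at step $h$, episodes $k-1<t$) and $\mathring\gamma^k_h\ge 0$, while an unvisited action contributes $\overline Q^t_h=Q^0_h+H=H$.

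The hard part will be respecting the circular dependence between the bonus and the value bounds. Because the sign of the momentum term in $\beta^t_h$ is what makes $\overline V^t_h$ non‑negative, I must order the induction so that the convex‑combination representation — and hence the third bullet for \emph{all} $t$ at step $h$ — is proved from the step‑$(h+1)$ hypothesis \emph{before} it is invoked inside $\beta^t_h$ when bounding $\overline V^t_h$; only then is there no genuine circularity. The remaining care is bookkeeping at the two boundaries: the aware‑moment factor $\eta^{t(s)}_h=1$ (which is exactly what makes the weights sum to one) and the NIVE‑expanded initial values, both of which must be checked to keep all quantities inside $[0,H]$.
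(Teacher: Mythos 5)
Your overall architecture matches the paper's: monotonicity of $(\overline{V}^t_h(s))_{t\ge t(s)}$ read off the clipped update, the bound $\overline{V}^t_h(s)\le H$ by induction on $t$ through the NIVE averages at each aware moment, and the sandwich $\overline{V}^t_{h+1}(s')\le V^t_{h,s,a}(s')\le H$ from the fact that the bias-value update is a convex combination of the $\overline{V}^{k-1}_{h+1}(s')$. Your computation $\eta^t_h=\tfrac{H+1}{H+n^t_h}$ and the observation that the first visit of a pair forces $\eta=1$ and hence $\sum_k\tilde\eta^{t,k}_h=1$ is a clean way to package what the paper does by a one-step induction (Lemma \ref{tilde_eta_results} in the appendix records exactly this, with the caveat that it holds only when $n^t_h(s,a)>0$; for never-visited pairs you must fall back on the NIVE averaging argument for \emph{both} inequalities of the third bullet, including the lower bound, which you only gesture at and which the paper spells out case by case).

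The genuine problem is your treatment of $\overline{V}^t_h(s)\ge 0$. You rewrite the update as a two-argument minimum $\min\{\overline{V}^{t-1}_h(s),\max_a\overline{Q}^t_h(s,a)\}$ and then try to prove $\max_a\overline{Q}^t_h(s,a)\ge 0$ from $\beta^t_h\ge 0$. That argument does not close: $\overline{Q}^t_h=Q^t_h+\beta^t_h$, and $Q^t_h$ itself contains the momentum correction $\mathring{\gamma}^k_h(s,a)\,p^k_h(\tilde{\overline{V}}^{k-1}_{h+1}-\tilde{V}^{k-1}_{h,s,a})(s,a)$, which by the third bullet is \emph{nonpositive} and can be as negative as $-H^2$; the only term of $\beta^t_h$ that could compensate is $53H^3\zeta\log(T)/n^t_h(s,a)$, which decays with the visit count and does not dominate $H^2$ for large $n^t_h$. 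Showing $\beta^t_h\ge0$ therefore does not give $\overline{Q}^t_h\ge0$, and the "circular dependence" you construct an induction ordering to escape is a phantom of your own reformulation. The paper's update is the three-argument clip $\overline{V}^t_h(s)=\mathrm{clip}\bigl(\max_a\overline{Q}^t_h(s,a),0,\tilde{\overline{V}}^{t-1}_h(s)\bigr)$, which clamps to the interval $[0,\tilde{\overline{V}}^{t-1}_h(s)]$, so non-negativity (and monotonicity) hold by construction with no appeal to the sign of $\overline{Q}^t_h$ at all. If you restore the lower clip, your remaining arguments go through and the backward induction on $h$ becomes unnecessary — a single forward induction on $t$, as in the paper, suffices.
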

When $t \ge t(s)$, we have $\tilde{\overline{V}}_h^t=\overline{V}_h^t$.

We can get from the construction of $\overline{V}_h^t(s)$:
$$
\overline{V}_h^t(s)=clip(\mathop{max}\limits_{a \in \mathcal{A}} \overline{Q}_h^t(s,a),0,\tilde{\overline{V}}_h^{t-1}(s))
$$
that 
$$
\tilde{\overline{V}}_h^{t(s)-1}(s) \ge \overline{V}_h^{t(s)}(s) \ge \overline{V}_h^{t(s)+1}(s) \ge \cdots \ge 0.
$$

To prove that $\overline{V}_h^{t}(s) \le H, s \in \mathcal{S}_t$, we proceed by induction.
We have $\overline{V}_h^0=H$, hence this claim holds when $t=0$.
Assuming that $\overline{V}_h^{t-1}(s) \le H, s \in \mathcal{S}_{t-1}$.
For $s \in \mathcal{S}_{t-1} \subset \mathcal{S}_t$, we have
$$
\overline{V}_h^{t}(s) \le \overline{V}_h^{t-1}(s) \le H.
$$
For $s \in \mathcal{S}_t-\mathcal{S}_{t-1}$, we have
$$
\overline{V}_h^{t}(s) \le \tilde{\overline{V}}_h^{t-1}(s)=\frac{1}{|\mathcal{S}_{t-1}|} \sum_{\hat{s} \in \mathcal{S}_{t-1}} \overline{V}_h^{t-1}(\hat{s}) \le H.
$$

To prove the third claim, we proceed by induction. 
When $t=0$, the claims holds.
Assuming $\overline{V}^{t-1}_{h+1}(s') \le V^{t-1}_{h,s,a}(s') \le H$ for $s, s' \in \mathcal{S}_{t-1}$.
We consider two different cases of $s \in \mathcal{S}_t$.

Case 1: $s \in \mathcal{S}_t-\mathcal{S}_{t-1}$, which means $t(s)=t$ and $\chi_h^t(s,a)=n_h^t(s,a) \le 1$.
Since 
$$\eta_h^t(s,a)=\chi_h^t(s,a) \frac{H}{H+n_h^t(s,a)} \frac{n_h^t(s,a)-1}{n_h^t(s,a)}+\chi_h^t(s,a) \frac{1}{n_h^t(s,a)}=0,$$
the algorithm updates the bias-value function $V_{h,s,a}^t(s')=\tilde{V}_{h,s,a}^{t-1}(s')$.

For $s' \in \mathcal{S}_{t-1}$, we have
$$
V_{h,s,a}^t(s')=\tilde{V}_{h,s,a}^{t-1}(s')
=\frac{1}{|\mathcal{S}_{t-1}|} \sum_{\hat{s} \in \mathcal{S}_{t-1}} {V_{h,\hat{s},a}^{t-1}}(s') 
\ge \overline{V}_{h+1}^{t-1}(s')
\ge \overline{V}_{h+1}^{t}(s').
$$
For $s' \in \mathcal{S}_{t}-\mathcal{S}_{t-1}$, which means $t=t(s')$, we have
\begin{align*}
V_{h,s,a}^t(s') &= \tilde{V}_{h,s,a}^{t-1}(s') \\
&= \frac{1}{|\mathcal{S}_{t-1}|^2} \sum_{\hat{s}, \hat{s'} \in \mathcal{S}_{t-1}} {V_{h,\hat{s},a}^{t-1}}(\hat{s'}) \\
&\ge \frac{1}{|\mathcal{S}_{t-1}|} \sum_{\hat{s} \in \mathcal{S}_{t-1}}{\overline{V}_{h+1}^{t-1}(\hat{s})}\\
&=\tilde{\overline{V}}_{h+1}^t(s') \ge \overline{V}_{h+1}^t(s').
\end{align*}

Case 2: $s \in \mathcal{S}_{t-1}$, the algorithm updates the bias-value function
$$
V_{h,s,a}^t(s')=\eta_h^t(s,a) \tilde{\overline{V}}_{h+1}^{t-1}(s')+(1-\eta_h^t(s,a)) \tilde{V}_{h,s,a}^{t-1}(s').
$$
Then we consider the cases of $s' \in \mathcal{S}_{t-1}$ and $s' \in \mathcal{S}_t-\mathcal{S}_{t-1}$.
For $s' \in \mathcal{S}_{t-1}$, we have
\begin{align*}
V_{h,s,a}^t(s')&=\eta_h^t(s,a) \tilde{\overline{V}}_{h+1}^{t-1}(s')+(1-\eta_h^t(s,a)) \tilde{V}_{h,s,a}^{t-1}(s')\\
&=\eta_h^t(s,a) \overline{V}_{h+1}^{t-1}(s')+(1-\eta_h^t(s,a)) V_{h,s,a}^{t-1}(s')\\
&\ge \eta_h^t(s,a) \overline{V}_{h+1}^{t-1}(s')+(1-\eta_h^t(s,a)) \overline{V}_{h+1}^{t-1}(s')\\
&=\overline{V}_{h+1}^{t-1}(s') \ge \overline{V}_{h+1}^{t}(s').
\end{align*}
For $s' \in \mathcal{S}_t-\mathcal{S}_{t-1}$, we have
\begin{align*}
V_{h,s,a}^t(s')&=\eta_h^t(s,a) \tilde{\overline{V}}_{h+1}^{t-1}(s')+(1-\eta_h^t(s,a)) \tilde{V}_{h,s,a}^{t-1}(s')\\
&=\eta_h^t(s,a) \frac{1}{|\mathcal{S}_{t-1}|} \sum_{\hat{s} \in \mathcal{S}_{t-1}} \overline{V}_{h+1}^{t-1}(\hat{s}) +(1-\eta_h^t(s,a)) \frac{1}{|\mathcal{S}_{t-1}|} \sum_{\hat{s} \in \mathcal{S}_{t-1}} V_{h,s,a}^{t-1}(\hat{s}) \\
&\ge \eta_h^t(s,a) \frac{1}{|\mathcal{S}_{t-1}|} \sum_{\hat{s} \in \mathcal{S}_{t-1}} \overline{V}_{h+1}^{t-1}(\hat{s})+(1-\eta_h^t(s,a)) \frac{1}{|\mathcal{S}_{t-1}|} \sum_{\hat{s} \in \mathcal{S}_{t-1}} \overline{V}_{h+1}^{t-1}(\hat{s})\\
&=\tilde{\overline{V}}_{h+1}^{t}(s') \ge \overline{V}_{h+1}^{t}(s').
\end{align*}

%\end{proof}

\section{A Brief Introduction to Upper Confidence Bound Momentum Q-learning}
\label{details_of_UCBMQ}

This Appendix gives an overview of the procedure of UCBMQ.
The notations $\chi_h^t(s,a)$, $n_h^t(s,a)$, $\alpha_h^t(s,a)$ and $\gamma_h^t(s,a)$ are the same in UCBMQ and UCBMQ-GA.
For all $(s,a,h,s') \in \mathcal{S} \times \mathcal{A} \times [H] \times \mathcal{S}$, UCBMQ is initialized as follows:
$$
\overline{V}_h^0(s)=H,\ 
V_{h,s,a}^{0}(s')=H,\ 
\overline{V}_{H+1}^t(s)=0,\ 
Q_h^0(s,a)=0.
$$
In episode $t$, the agent follows a greedy policy based on $\overline{Q}_h^t(s,a)$ to obtain an episode of length $H$.
Next, $Q_h^t(s,a)$, $V_{h,s,a}^{t}(s')$, $\overline{Q}_h^t(s,a)$ and $\overline{V}_h^t(s)$ are updated using the data collected in this episode.
The algorithm terminates after $T$ episodes.

The updates in UCBMQ described as follows.
$Q_h^t(s,a)$ is updated using the momentum term $\gamma_h^t(s,a)$:
\begin{align*}
Q_h^t(s,a)=&\alpha_h^t(s,a)[r_h(s,a)+p_h^t \overline{V}_{h+1}^{t-1}(s,a)]\\
&+\gamma_h^t(s,a)p_h^t(\overline{V}_{h+1}^{t-1}-V_{h,s,a}^{t-1})(s,a)+(1-\alpha_h^t(s,a))Q_h^{t-1}(s,a),
\end{align*}
where $p_h^t$ denotes the Dirac distribution  concentrated at $(s_{h+1}^t)$ and, for any transition function $p$ and any function $f: \mathcal{S} \rightarrow \mathbb{R}$, we define 
$$
pf(s,a)=\mathbb{E}_{s'\sim p(\cdot|s,a)}[f(s')].
$$
$V_{h,s,a}^{t}(s')$ is updated according to
$$
V_{h,s,a}^t(s')=\eta_h^t(s,a) \overline{V}_{h+1}^{t-1}(s')+(1-\eta_h^t(s,a))V_{h,s,a}^{t-1}(s'),
$$
where $$
\eta_h^t(s,a)=\alpha_h^t(s,a)+\gamma_h^t(s,a).
$$
$\overline{Q}_h^t(s,a)$ is used to upper-bound the optimal Q-value function, which is defined as 
$$
\overline{Q}_h^t(s,a)=Q_h^t(s,a)+\beta_h^t(s,a),
$$
where $\beta_h^t(s,a)$ denotes the bonus term.
In particular, if $n_h^t(s,a)=0$ then $\beta_h^t(s,a)=H$; otherwise 
\begin{align*}
\beta_h^t(s,a)=&2 \sqrt{\frac{\zeta W_h^t(s,a)}{n_h^t(s,a)}}+53H^3\frac{\zeta log(T)}{n_h^t(s,a)}\\
&+\sum_{k=1}^t\frac{\chi_h^k(s,a)\mathring{\gamma}_h^k(s,a)}{Hlog(T)n_h^t(s,a)}p_h^k(V_{h,s,a}^{k-1}-\overline{V}_{h+1}^{k-1})(s,a).
\end{align*}
Here, $$
\mathring{\gamma}_h^t(s,a)=H\frac{n_h^t(s,a)-1}{n_h^t(s,a)+H}.
$$ 
$\zeta$ is an exploration threshold and $W_h^t(s,a)$ is a proxy for the variance term defined as 
$$
W_h^t(s,a)=\sum_{k=1}^t \frac{\chi_h^k(s,a)}{n_h^k(s,a)}p_h^k(\overline{V}_{h+1}^{k-1}-\sum_{l=1}^t \frac{\chi_h^l(s,a)}{n_h^l(s,a)}p_h^l\overline{V}_{h+1}^{l-1})^2(s,a).
$$
$\overline{V}_h^t(s)$ is used to upper-bound the optimal value function, which is updated as 
$$
\overline{V}_h^t(s)=clip(\mathop{\max}_{a\in \mathcal{A}}\overline{Q}_h^t(s,a),0,\overline{V}_h^{t-1}(s)).
$$

\section{Algorithm Chart of Upper Confidence Bound Momentum Q-learning with Growing Awareness}
\label{UCBMQ-GA_appendix}
\begin{algorithm}[h]
    \caption{UCBMQ-GA for Training EMDA-GA.}
    \label{alg:UCBMQ_GA}
    \renewcommand{\algorithmicrequire}{\textbf{Input:}}
    \renewcommand{\algorithmicensure}{\textbf{Output:}}
    \begin{algorithmic}[1]
        % \REQUIRE $A$, $B$, $C$  %%input
        % \ENSURE EEEEE    %%output
        
        \STATE  \textbf{Initialise:} For all $(s,a,h)$, where $s \in \mathcal{S}_0 \subset \mathcal{S}$, $V^0_{h,s,a}=\overline{V}^0_h=H$, $Q^0_h=0$

        \FOR{$t \in [T]$}
            \FOR{$h \in [H]$}
                \IF{$s_h^t \in \mathcal{S}_{t-1}$}
                    \STATE Play $a_h^t \in \mathop{\arg\max}{\overline{Q}_h^{t-1}(s_h^t, a)}$
                \ELSE
                    \STATE Play $a_h^t \in \mathop{\arg\max}{{Q}_{h,avg}^{t-1}(a)}$ 
                \ENDIF
                \STATE Observe $s_{h+1}^t$
            \ENDFOR
            
            \STATE $\mathcal{S}_t=\mathcal{S}_{t-1} \cup \{ s_1^t, \cdots , s_H^t \}$

            \STATE Expand $Q^{t-1}_h$, $\overline{V}^{t-1}_h$ and $V^{t-1}_{h,s,a}$:
            
            \begin{align*}
            \tilde{Q}^{t-1}_h(s,a)=
                \begin{cases}
                    Q^{t-1}_h(s,a), & s \in \mathcal{S}_{t-1},\\
                    Q^{t-1}_{h,avg}(a), & s \in \mathcal{S}_t-\mathcal{S}_{t-1},
                \end{cases} 
            % \label{expanded_1}
            \end{align*}
            
            \begin{align*}
            \tilde{\overline{V}}^{t-1}_h(s)=
                \begin{cases}
                    \overline{V}^{t-1}_h(s), & s\in \mathcal{S}_{t-1},\\
                    \overline{V}_{h,avg}^{t-1}, & s \in \mathcal{S}_t-\mathcal{S}_{t-1},
                \end{cases}
            % \label{expanded_2}
            \end{align*}

            \begin{align*}
            \tilde{V}^{t-1}_{h,s,a}(s')=
                \begin{cases}
                    V^{t-1}_{h,s,a}(s'), & s,s' \in \mathcal{S}_{t-1},\\
                    V_{h,s,a,avg}^{t-1}, & s\in \mathcal{S}_{t-1}, s'\in \mathcal{S}_t-\mathcal{S}_{t-1},\\
                    V_{h,a,avg}^{t-1}(s'), & s\in \mathcal{S}_t-\mathcal{S}_{t-1}, s'\in \mathcal{S}_{t-1},\\
                    V_{h,a,avg}^{t-1}, & s,s'\in \mathcal{S}_t-\mathcal{S}_{t-1}.
                \end{cases}
            % \label{expanded_3}
            \end{align*}

            \FOR {all $s \in S_t, a, h$}
                \STATE $Q_h^t(s,a) = \alpha_h^t(s,a)(r_h(s,a)+p_h^t\tilde{\overline{V}}_{h+1}^{t-1}(s,a))+\gamma_h^t(s,a)p_h^t(\tilde{\overline{V}}_{h+1}^{t-1}-\tilde{V}_{h,s,a}^{t-1})(s,a)+(1-\alpha_h^t(s,a))\tilde{Q}_h^{t-1}(s,a)$
                \STATE $V_{h,s,a}^t(s') = \eta_h^t(s,a)\tilde{\overline{V}}_{h+1}^{t-1}(s')+(1-\eta_h^t(s,a))\tilde{V}_{h,s,a}^{t-1}(s')$
                \STATE $\overline{Q}_h^t(s,a) = Q_h^t(s,a)+\beta_h^t(s,a)$
                \STATE $\overline{V}_h^t(s)=clip(\mathop{max}\limits_{a \in \mathcal{A}} \overline{Q}_h^t(s,a),0,\tilde{\overline{V}}_h^{t-1}(s))$
            \ENDFOR
        \ENDFOR    
    \end{algorithmic}
\end{algorithm}

\section{When a New Observation Hurts Awareness Confidence -- A Reverse Bayesian Perspective}
\label{confidence_appendix}
Encountering an unknown unknown clearly tells an agent her currently understanding to the environment is problematic. If the agent were a human, this surprise should hurt severely her confidence. We want to model this scenario in agent, and below we show our approach exactly has this property.

We first define a {\it awareness confidence} below, as a quantitative measure of a reinforcement learning agent's confidence to her understanding to the environment, which underpins her confidence to her decisions.

\begin{definition}[awareness confidence of an agent]
Suppose an agent has an estimator to the hidden optimal value function $V^*$ over a subset of the state space $\mathcal{S}$, defined as function $F: \mathcal{S}' \rightarrow \mathbb{R}$, where $ \mathcal{S}' \subset \mathcal{S}$.
The awareness confidence of this agent is defined as the average absolute difference between $F$ and $V^*$ over $\mathcal{S}'$: $$
AC(F)=-\frac{1}{|\mathcal{S}'|}\sum_{s \in \mathcal{S}'}|F(s)-V^*(s)|.
$$
\end{definition}

\begin{remark}
The awareness confidence quantifies the potential for improving the estimator. 
% \fh{xx}
{
By definition, low awareness confidence indicates that the estimator deviates substantially from the optimal value function, implying that the agent should remain not confident of its estimators.
Conversely, high awareness confidence indicates that the estimator closely approximates the optimal value function, thereby justifying the agent’s confidence.
}

\end{remark}

We now prove that the awareness confidence drastically increases when an unknown unknown is encountered.

\begin{theorem}
\label{uncertain_lemma}
For $t \in [T]$ and $h \in [H]$, where $T$ is the number of learning episodes and $H$ is the horizon, with probability at least $1-\frac{\delta}{2}$, the awareness confidence associated with $\overline{V}_h^t$ is lower than that associated with $\tilde{\overline{V}}_h^t$:
$$
AC(\overline{V}_h^t)<AC(\tilde{\overline{V}}_h^t).
$$
\end{theorem}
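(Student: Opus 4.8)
The plan is to prove the statement in its equivalent signed form: since $AC$ is the negative of the average absolute error, the claim $AC(\overline{V}_h^t)<AC(\tilde{\overline{V}}_h^t)$ is exactly the assertion that the expanded estimator $\tilde{\overline{V}}_h^t$ incurs a strictly smaller average absolute error over the enlarged domain $\mathcal{S}_{t+1}$ than $\overline{V}_h^t$ does over $\mathcal{S}_t$. First I would condition on the optimism event of the earlier lemma (implied by Lemma \ref{optimism_lemma}), which holds with probability at least $1-\tfrac{\delta}{2}$ and simultaneously gives $\overline{V}_h^t(s)\ge V_h^*(s)$ for $s\in\mathcal{S}_t$ and $\tilde{\overline{V}}_h^t(s)\ge V_h^*(s)$ for $s\in\mathcal{S}_{t+1}$. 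On this event every difference $F(s)-V_h^*(s)$ appearing below is nonnegative, so each absolute value collapses to the signed over-estimation gap, and both awareness confidences become negatives of averages of nonnegative gaps. Crucially this keeps the entire argument on a single event of the required probability $1-\tfrac{\delta}{2}$, with the homeland condition entering only through that lemma rather than via a separate union bound.

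Next I would exploit the NIVE structure to collapse the comparison to a single scalar inequality. Writing $m=|\mathcal{S}_t|$ and $k=|\mathcal{S}_{t+1}-\mathcal{S}_t|$, the expanded estimator agrees with $\overline{V}_h^t$ on the shared block $\mathcal{S}_t$ and equals the constant $\overline{V}_{h,avg}^t$ on the frontier $\mathcal{S}_{t+1}-\mathcal{S}_t$. Substituting these into the two normalised averages and cancelling the common $\mathcal{S}_t$-sum, the target inequality is seen to be equivalent to the frontier states carrying, on average, a strictly smaller over-estimation gap $\overline{V}_{h,avg}^t-V_h^*(s)$ than the mean gap $\tfrac1m\sum_{s\in\mathcal{S}_t}\big(\overline{V}_h^t(s)-V_h^*(s)\big)$ over the old domain. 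The theorem therefore hinges entirely on this mean-gap comparison between the frontier and the homeland.

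The main obstacle is precisely this scalar comparison, and it is the step I would carry out most carefully. Since the assigned frontier value $\overline{V}_{h,avg}^t$ is the mean of the optimistic old-domain estimates, controlling the frontier gap amounts to relating that mean to the frontier optimal values; the decisive quantity is the relation between the average of $V_h^*$ over the frontier and over the homeland, which is governed by the homeland condition. I expect pinning down the sign of this comparison to be the crux: the remaining ingredients—removing the absolute values via optimism, cancelling the shared $\mathcal{S}_t$-block, and rearranging the weighted averages—are routine, so the entire difficulty is concentrated in establishing that the averaged frontier over-estimation is dominated by the old-domain over-estimation on the high-probability event.
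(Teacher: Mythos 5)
There is a genuine gap, and it sits exactly at the step you flagged as the crux: you committed to the wrong direction of the scalar mean-gap comparison. Your setup (condition on the optimism event implied by Lemma \ref{optimism_lemma} together with the homeland condition, collapse the absolute values to signed over-estimation gaps, cancel the shared $\mathcal{S}_t$ block, and reduce to comparing the frontier gap with the homeland mean gap) is exactly the paper's route. But for a frontier state $s\in\mathcal{S}_{t+1}-\mathcal{S}_t$ the assigned value is $\overline{V}_{h,avg}^t$, while the homeland condition gives $V_h^*(s)\le\frac{1}{|\mathcal{S}_t|}\sum_{\hat s\in\mathcal{S}_t}V_h^*(\hat s)$, so
$$
\tilde{\overline{V}}_h^t(s)-V_h^*(s)\ \ge\ \overline{V}_{h,avg}^t-\frac{1}{|\mathcal{S}_t|}\sum_{\hat s\in\mathcal{S}_t}V_h^*(\hat s)\ =\ \frac{1}{|\mathcal{S}_t|}\sum_{\hat s\in\mathcal{S}_t}\bigl(\overline{V}_h^t(\hat s)-V_h^*(\hat s)\bigr).
$$
That is, the frontier over-estimation gap \emph{dominates} the homeland mean gap, so the average error over $\mathcal{S}_{t+1}$ can only be at least the average error over $\mathcal{S}_t$. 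The inequality you set out to prove --- that the averaged frontier over-estimation is dominated by the old-domain over-estimation, i.e., that the expanded estimator has strictly smaller average absolute error --- is the exact opposite of the only comparison the hypotheses yield, and it is precisely the opposite of what the paper's proof establishes.

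The likely source of your reading is a sign inconsistency in the paper itself: the definition of $AC$ carries a minus sign, but the proof in Appendix \ref{confidence_appendix} computes $AC$ as the (positive) average error and concludes $AC(\tilde{\overline{V}}_h^t)\ge AC(\overline{V}_h^t)$ in that convention. The mathematical substance, consistent with the surrounding remark about knowledge being ``diluted'' and average estimation accuracy degrading, is that domain expansion (weakly) \emph{increases} the average error, i.e., \emph{decreases} the signed awareness confidence: $AC(\tilde{\overline{V}}_h^t)\le AC(\overline{V}_h^t)$ under the signed definition. You took the displayed theorem inequality literally against the signed definition, which turns the claim into an assertion of error shrinkage that is false in general. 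Two further notes: no strict inequality is attainable anywhere here --- optimism and the homeland condition only give weak inequalities, and the paper's own proof also ends with ``$\ge$'' despite the theorem's ``$<$''; and your probability bookkeeping (one event at level $1-\frac{\delta}{2}$, with the homeland condition folded into the optimism lemma rather than a separate union bound) does match the paper and is the one part of the plan that is sound as stated.
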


\begin{proof}

From Lemma \ref{optimism_lemma}, we have
$$AC(\tilde{\overline{V}}_h^t)=\frac{1}{|\mathcal{S}_{t+1}|}\sum_{s \in \mathcal{S}_{t+1}}|\tilde{\overline{V}}_h^t(s)-V^*(s)|=\frac{1}{|\mathcal{S}_{t+1}|}\sum_{s \in \mathcal{S}_{t+1}}\tilde{\overline{V}}_h^t(s)-V^*(s),$$
$$AC(\overline{V}_h^t)=\frac{1}{|\mathcal{S}_t|}\sum_{s \in \mathcal{S}_t}|\overline{V}_h^t(s)-V^*(s)|=\frac{1}{|\mathcal{S}_t|}\sum_{s \in \mathcal{S}_t}\overline{V}_h^t(s)-V^*(s).$$
With Homeland condition in Section \ref{sec:EMDP-GA}, we have
$$\forall s \in \mathcal{S}_{t+1}-\mathcal{S}_t, V^*_h(s) \le \frac{1}{|\mathcal{S}_{t}|}\sum_{\hat{s} \in \mathcal{S}_{t}} V^*_h(\hat{s}) \le \frac{1}{|\mathcal{S}_{t}|}\sum_{\hat{s} \in \mathcal{S}_{t}} \overline{V}_h^t(\hat{s})=\tilde{\overline{V}}_h^t(s).$$
Then we get
\begin{align*}
&\ \ \ \ \frac{1}{|\mathcal{S}_{t+1}-\mathcal{S}_t|}\sum_{\hat{s} \in \mathcal{S}_{t+1}-\mathcal{S}_t} \tilde{\overline{V}}_h^t(\hat{s})-V^*_h(\hat{s})\\
&\ge \frac{1}{|\mathcal{S}_{t+1}-\mathcal{S}_t|}\sum_{\hat{s} \in \mathcal{S}_{t+1}-\mathcal{S}_t} [\frac{1}{|\mathcal{S}_{t}|}\sum_{s' \in \mathcal{S}_{t}} \overline{V}_h^t(s')-V^*_h(s')]\\
&=\frac{1}{|\mathcal{S}_t|}\sum_{s \in \mathcal{S}_t}\overline{V}_h^t(s)-V^*(s).
\end{align*}
Thus, we can conclude $AC(\tilde{\overline{V}}_h^t) \ge AC(\overline{V}_h^t)$.
\end{proof}

\begin{remark}
This theorem provides a mathematical characterisation to our intuition mentioned in the beginning of this section.
In the view of the agent, the environment becomes larger but the agent's knowledge about the environment is not able to increase at the same time.
The knowledge acquired is thus diluted,
leading to a degradation in average estimation accuracy, reflecting in the awareness confidence.
\end{remark}

\section{Proof of Corollary \ref{corollary:scalar}}
\label{proof_of_scalar}
%We prove this lemma under Assumption \ref{optimal_policy_assumption} and on the event $\mathcal{E}$.

\begin{proof}[Proof of Corollary \ref{corollary:scalar}]
Here we give a special case where $d \overline{V}_{h,avg}^t\ (d < 1)$ no longer upper-bounds the optimal function on a newly discovered state.
Consider an EMDP with a fixed immediate reward function: $\forall (s,a,) \in \mathcal{S} \times \mathcal{A}, r_h(s,a)=1$.
It is easy to get that for every state $s \in \mathcal{S}$, the optimal value function is $V_h^*(s)=H$.
According to Lemma \ref{V_bound}, we have $\overline{V}_{h,avg}^t \le H$ for all $h \in [H]$.
Therefore, for any $d<1$, we get: 
$\forall s \in \mathcal{S},\ d \overline{V}_{h,avg}^t < H = V_h^*(s)$.
\end{proof}

% \section{Proof of Theorem \ref{first_existence_lemma}}
% \label{proof_of_first_existence_lemma}
% %\begin{lemma}
% %\label{first_existence_lemma}
% %With probability at least $1-\frac{\delta}{3}$, the following holds
% %$$\forall s \in \mathcal{S}, t(s)\le \sqrt{T}.$$
% %\end{lemma}

% \begin{proof}[Proof of Theorem \ref{first_existence_lemma}]
% We prove the lemma under Assumption \ref{transition_function_assumption}.
% For $s' \in \mathcal{S}_0$, we have $t(s')=0 \le \sqrt{T}$.
% For $s' \in \mathcal{S}-\mathcal{S}_0$, we have $\forall (s,a) \in \mathcal{S} \times \mathcal{A}, P(s'|s,a) \ge 1-(\frac{\delta}{3})^{\frac{1}{H[\sqrt{T}]}} \triangleq \theta$.
% Thus, the following holds $\mathbb{P}(s' \notin O_{[\sqrt{T}]}) \le (1-\theta)^{H[\sqrt{T}]}$.
% From the relationship between $O_t$ and $\mathcal{S}_t$, we get $$\mathbb{P}(s' \notin \mathcal{S}_{[\sqrt{T}]}) \le (1-\theta)^{H[\sqrt{T}]}=\frac{\delta}{3}.$$
% Finally, we get $t(s') \le [\sqrt{T}] \le \sqrt{T}$ with probability at least $1-\frac{\delta}{3}$.
% \end{proof}

\section{Proof of Lemma \ref{computation_lemma} and Lemma \ref{space_lemma}}
\label{complexity_proof}
\begin{proof}[Proof of Lemma \ref{computation_lemma}]
We prove the computational complexity of UCBMQ-GA by analysing the cost of each time-step.
The computational complexity of UCBMQ-GA per time step comprises function updates and average‐value maintenance, which can be performed concurrently.
At time-step $h$ of an episode $t$, we only need to update the functions concerning $(s_h^t, a_h^t)$ since for other $(s,a)$, $\alpha_h^t(s,a)=0$ and $\gamma_h^t(s,a)=0$.
Thus, updating ${Q}_h^t$ and $\overline{Q}_h^t$ takes $\mathcal{O}(1)$ time
Updating the bias-value function $V_{h,s,a}^t$ takes $\mathcal{O}(S)$ time because the update is performing linear interpolation over a vector of length $S$.
Updating $\overline{V}_h^t$ on $s_h^t$  involves taking the maximum over all $A$, which takes $\mathcal{O}(A)$ time.
We can calculate average values every time the functions are updated, and the calculation takes $\mathcal{O}(1)$ time. 
Hence, each step of updating functions costs $\mathcal{O}(S+A)$, each step of calculating average values costs $\mathcal{O}(1)$ and the total cost is $\mathcal{O}HT(S+A)$.
\end{proof}

\begin{proof}[Proof of Lemma \ref{space_lemma}]
UCBMQ-GA needs space to store the functions.
${Q}_h^t$ and $\overline{Q}_h^t$ both require $\mathcal{O}(SAH)$ space.
$\overline{V}_h^t$ needs $\mathcal{O}(HS)$ space, while $V_{h,s,a}^t$ needs $\mathcal{O}(HS^2A)$ space.
So the space complexity is $\mathcal{O}(HS^2A)$.
\end{proof}

\section{Concentration Events}

We firstly introduce the Bernstein-type concentration inequality.

\begin{theorem}[Bernstein-type concentration inequality \cite{domingues2020regret}]
\label{concentration_inequality}
Let $(Y_t)_{t\in \mathbb{N}^*}, (\omega_t)_{t\in \mathbb{N}^*}$ be two sequences of random variables adapted to a filtration $(\mathcal{F}_t)_{t \in \mathbb{N}}$. 
We assume that the weights are in the unit interval $\omega_t \in [0,1]$ and predictable, i.e. $\mathcal{F}_{t-1}$ measurable.
We also assume that the random variables $Y_t$ are bounded $|Y_t| \le b$ and centered $\mathbb{E}[Y_t|\mathcal{F}_{t-1}]=0$.
Consider the following quantities
$$
S_t\triangleq \sum_{s=1}^t \omega_sY_s,\ 
V_t\triangleq \sum_{s=1}^t \omega_s^2 \mathbb{E}[Y_t^2|\mathcal{F}_{s-1}],\ 
W_t\triangleq\sum_{s=1}^t\omega_s,
$$
and let $h(s)\triangleq (x+1)log(x+1)-x$ be the Cr\'{a}mer transform of a Poisson distribution of parameter $1$.
For all $\delta >0$, 
$$
\mathbb{P}[\exists t \ge 1, (\frac{V_t}{b^2})h(\frac{b|S_t|}{V_t+b^2}) \ge log(\frac{1}{\delta})+log(4e(2t+1))] < \delta.
$$
The previous inequality can be weakened to obtain a more explicit bound: if $b\ge1$ with probability at least $1-\delta$, for all $t\ge1$,
$$
|S_t| \le \sqrt{2V_tlog(\frac{4e(2t+1)}{\delta})}+3blog(\frac{4e(2t+1)}{\delta}).
$$
\end{theorem}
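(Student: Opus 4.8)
The plan is to prove this through the standard exponential-supermartingale (Chernoff) method, rendered time-uniform via the method of mixtures, and then to recover the explicit Bernstein bound by inverting the Cram\'er transform. Throughout, write $\phi(u)=e^u-1-u$ for the centred Poisson log-moment-generating function, whose Legendre dual is exactly the rate function $h$.

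First I would establish a conditional moment-generating-function bound for the weighted increments. For each $s$, the increment $\omega_s Y_s$ satisfies $|\omega_s Y_s|\le b$ (since $\omega_s\in[0,1]$ and $|Y_s|\le b$), is centred given $\mathcal{F}_{s-1}$ (because $\omega_s$ is predictable and $\mathbb{E}[Y_s\mid\mathcal{F}_{s-1}]=0$), and has conditional second moment $\omega_s^2\,\mathbb{E}[Y_s^2\mid\mathcal{F}_{s-1}]$. Using that $u\mapsto\phi(u)/u^2$ is nondecreasing together with $1+x\le e^x$, this yields
$$\mathbb{E}[e^{\lambda\omega_s Y_s}\mid\mathcal{F}_{s-1}]\le\exp\!\left(\frac{\phi(\lambda b)}{b^2}\,\omega_s^2\,\mathbb{E}[Y_s^2\mid\mathcal{F}_{s-1}]\right),\qquad \lambda\ge0.$$
Consequently, for each fixed $\lambda\ge0$ the process $M_t^{\lambda}=\exp\!\big(\lambda S_t-b^{-2}\phi(\lambda b)V_t\big)$ is a nonnegative supermartingale with $M_0^{\lambda}=1$.

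Next I would make the conclusion uniform in both $t$ and $\lambda$. Rather than applying a maximal inequality at a single $\lambda$, I integrate $M_t^{\lambda}$ against a carefully chosen mixing measure $\nu$ over $\lambda$ to form $\bar M_t=\int M_t^{\lambda}\,d\nu(\lambda)$, which remains a supermartingale with $\mathbb{E}[\bar M_t]\le1$. Ville's maximal inequality then gives $\mathbb{P}[\exists t\ge1:\bar M_t\ge1/\delta]\le\delta$. On the good event one unpacks $\bar M_t$, and the Laplace/Chernoff optimisation over $\lambda$ reproduces the Legendre dual $h$ of $\phi$; the normalisation of the mixing measure together with a time-peeling bound is what produces the explicit $\log(4e(2t+1))$ factor and the $V_t+b^2$ regularisation inside the argument of $h$. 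Applying the identical argument to $-S_t$ and absorbing the factor of two into the constant handles the absolute value, establishing the first displayed inequality.

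Finally I would pass to the explicit Bernstein form. Using the elementary lower bound $h(x)\ge\tfrac{x^2}{2(1+x/3)}$ for $x\ge0$ with $x=b|S_t|/(V_t+b^2)$, and setting $L=\log(4e(2t+1)/\delta)$, the event $b^{-2}V_t\,h(x)\le L$ becomes a quadratic inequality in $|S_t|$; solving it and using $b\ge1$ to simplify the lower-order terms gives $|S_t|\le\sqrt{2V_t L}+3bL$, as claimed. I expect the main obstacle to be the time-uniform step: selecting the mixing measure $\nu$ and controlling the mixture integral so that the optimisation over $\lambda$ yields precisely the Cram\'er transform while the residual dependence on $t$ enters only through the mild $\log(4e(2t+1))$ term. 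The per-step moment-generating-function bound and the final quadratic inversion are routine by comparison.
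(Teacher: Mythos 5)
The paper does not actually prove this statement: Theorem \ref{concentration_inequality} is imported as a black box from Domingues et al.\ \cite{domingues2020regret} and only cited, so there is no in-paper proof to compare yours against. At the level of architecture your plan matches the standard (and the source's) route — a Bennett-type conditional MGF bound giving the supermartingale $M_t^{\lambda}=\exp(\lambda S_t-b^{-2}\phi(\lambda b)V_t)$, a time-uniform maximal inequality, and a quadratic inversion of the rate function — and the first step (predictability of $\omega_s$, $|\omega_s Y_s|\le b$, monotonicity of $\phi(u)/u^2$) is correct and complete as you describe it.

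There are, however, two genuine gaps. First, the step you yourself label the "main obstacle" is where essentially all the content lives, and you leave it unresolved: integrating $M_t^{\lambda}$ against a continuous mixing measure $\nu$ produces a bound whose functional form is dictated by $\nu$ (a Gaussian mixture, for instance, yields a self-normalised $\sqrt{(V_t+c)\log((V_t+c)/c)}$-type bound), and it does not in general reproduce the Legendre dual $h$ of $\phi$ evaluated at $b|S_t|/(V_t+b^2)$ with an additive $\log(4e(2t+1))$; that exact shape is the signature of a pointwise Chernoff optimisation over $\lambda$ made uniform by a discrete union/peeling over $\lambda$ and time, so "the optimisation over $\lambda$ reproduces the Cram\'er transform" is an assertion, not an argument, and the proposed tool may not land on the stated constants. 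Second, the final inversion does not go through from the first display \emph{as literally stated}: with the prefactor $V_t/b^2$ in front of $h$ (rather than $(V_t+b^2)/b^2$), the bound $h(x)\ge x^2/(2(1+x/3))$ with $x=b|S_t|/(V_t+b^2)$ gives $S_t^2\le 2L\,(V_t+b^2)^2/V_t+\tfrac{2}{3}Lb\,|S_t|\,(V_t+b^2)/V_t$, and the factor $(V_t+b^2)/V_t$ is unbounded as $V_t\to 0$, so you cannot conclude $|S_t|\le\sqrt{2V_tL}+3bL$. This is almost certainly a transcription slip in the paper's restatement of the theorem (the original carries $(V_t/b^2+1)\,h(\cdot)$, for which your quadratic inversion works cleanly and yields the coefficient $2/3+\sqrt{2}\le 3$), but if you are proving the statement as given you must either correct the prefactor or supply a separate argument for the small-$V_t$ regime.
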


We define the favorable events as follows:
\begin{align*}
\mathcal{E}^{v_1}\triangleq\{ &\forall t \in [T], \forall h \in [H], \forall (s,a) \in \mathcal{S}_t \times \mathcal{A}:\\
&|\sum_{k=t(s)}^t \chi_h^k(s,a)(\mathring{p}_h^k-p_h)\mathring{\overline{V}}_{h+1}^{k-1}(s,a)| \\
&\le \sqrt{2 \zeta \sum_{k=t(s)}^t\chi_h^k(s,a)Var_{p_h}(\mathring{\overline{V}}_h^{k-1})(s,a)}+6H\zeta\},
\end{align*}

\begin{align*}
\mathcal{E}^{v_2}\triangleq\{ &\forall t \in [T], \forall h \in [H], \forall (s,a) \in \mathcal{S}_t \times \mathcal{A}: \\
&|\sum_{k=t(s)}^t \chi_h^k(s,a)(\mathring{p}_h^k-p_h)(\mathring{\overline{V}}_{h+1}^{k-1})^2(s,a)|\\
& \le \sqrt{8H^2 \zeta \sum_{k=t(s)}^t\chi_h^k(s,a)Var_{p_h}(\mathring{\overline{V}}_{h+1}^{k-1})(s,a)}+12H^2\zeta\},
\end{align*}

\begin{align*}
\mathcal{E}^{m_\omega}\triangleq\{ &\forall t \in [T], \forall h \in [H], \forall (s,a) \in \mathcal{S}_t \times \mathcal{A}:\\
&|\sum_{k=t(s)}^t \chi_h^k(s,a)\mathring{\gamma}_h^k(s,a)(\mathring{p}_h^k-p_h)(\mathring{\overline{V}}_{h+1}^{k-1}-\mathring{V}^{k-1}_{h,s,a})(s,a)| \\
&\le \sqrt{2 \zeta \sum_{k=t(s)}^t\chi_h^k(s,a)\mathring{\gamma}_h^k(s,a)^2Var_{p_h}(\mathring{\overline{V}}_{h+1}^{k-1}-\mathring{V}^{k-1}_{h,s,a})(s,a)}+6H^2\zeta\},
\end{align*}

\begin{align*}
\mathcal{E}^{m}\triangleq\{ &\forall t \in [T], \forall h \in [H], \forall (s,a) \in \mathcal{S}_t \times \mathcal{A}:\\
&|\sum_{k=t(s)}^t \chi_h^k(s,a)(\mathring{p}_h^k-p_h)(\mathring{\overline{V}}_{h+1}^{k-1}-\mathring{V}^{k-1}_{h,s,a})(s,a)| \\
&\le \sqrt{2 \zeta \sum_{k=t(s)}^t\chi_h^k(s,a)Var_{p_h}(\mathring{\overline{V}}_{h+1}^{k-1}-\mathring{V}^{k-1}_{h,s,a})(s,a)}+6H\zeta\}.
\end{align*}

We define $\mathcal{E}=\mathcal{E}^{v_1} \cap \mathcal{E}^{v_2} \cap \mathcal{E}^{m_\omega} \cap \mathcal{E}^{m}$ the intersection of all the events above.
$\mathcal{E}$ holds with high probability.
 \begin{lemma}
 \label{P(E)}
 For the choice
 $$
 \zeta=log(\frac{96e(2T+1)}{\delta}),
 $$
 the following holds $\mathbb{P}(\mathcal{E})\ge 1-\frac{\delta}{6}$.
 \end{lemma}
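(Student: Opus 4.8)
The plan is to read each of the four favourable events as a single instance of the Bernstein-type inequality (Theorem~\ref{concentration_inequality}) applied to a weighted martingale, and then to close with a union bound over the four events and over all triples $(s,a,h)$. First I would fix a per-step filtration in which, at step $h$ of episode $k$, the pair $(s_h^k,a_h^k)$ and the visitation count $n_h^k(s,a)$ are revealed \emph{before} the random next state $s_{h+1}^k$. Under this filtration the Dirac estimate $\mathring{p}_h^k$ satisfies $\mathbb{E}[\mathring{p}_h^k f(s,a)\mid\mathcal{F}_{k-1}]=p_h f(s,a)$ for any function $f$ computed from earlier episodes, so each increment of the form $(\mathring{p}_h^k-p_h)f(s,a)$ is centred, while both $\chi_h^k(s,a)$ and $\mathring{\gamma}_h^k(s,a)$ (a function of the counts) are predictable.

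Concretely, for $\mathcal{E}^{v_1}$ I would take weights $\omega_k=\chi_h^k(s,a)$, increments $Y_k=(\mathring{p}_h^k-p_h)\mathring{\overline{V}}_{h+1}^{k-1}(s,a)$, and boundedness constant $b=H$ via $0\le\mathring{\overline{V}}\le H$ from Lemma~\ref{V_bound}; the predictable variance $V_t$ then equals $\sum_k\chi_h^k(s,a)\,\mathrm{Var}_{p_h}(\mathring{\overline{V}}_{h+1}^{k-1})(s,a)$, which reproduces the stated bound. Event $\mathcal{E}^{v_2}$ is the same construction for the squared value $(\mathring{\overline{V}}_{h+1}^{k-1})^2$, where $b$ is of order $H^2$ and one uses $\mathrm{Var}_{p_h}((\mathring{\overline{V}})^2)\le 4H^2\,\mathrm{Var}_{p_h}(\mathring{\overline{V}})$, which is the source of the $8H^2$ factor. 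Events $\mathcal{E}^{m}$ and $\mathcal{E}^{m_\omega}$ use the increment $(\mathring{p}_h^k-p_h)(\mathring{\overline{V}}_{h+1}^{k-1}-\mathring{V}_{h,s,a}^{k-1})(s,a)$: in $\mathcal{E}^{m}$ with plain weights $\chi_h^k$ and $b$ of order $H$, and in $\mathcal{E}^{m_\omega}$ with the momentum factor $\mathring{\gamma}_h^k$ absorbed into the increment so that $b$ is of order $H^2$ (since $0\le\mathring{\gamma}_h^k\le H$), matching the squared-weight variance term $\mathring{\gamma}_h^k(s,a)^2\,\mathrm{Var}_{p_h}(\cdot)$.

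With each instance tuned to per-triple failure probability $\delta/(24SAH)$, the Bernstein constant $\log(4e(2t+1)/\delta_{\mathrm{inst}})$ evaluated at $t\le T$ equals $\log(96eSAH(2T+1)/\delta)$, which is exactly $\zeta$ as defined in Table~\ref{notations} and Theorem~\ref{theorem_1}; a union bound over the $SAH$ triples $(s,a,h)$ and the four events then gives a total failure probability of at most $4\cdot SAH\cdot\delta/(24SAH)=\delta/6$, so $\mathbb{P}(\mathcal{E})\ge 1-\delta/6$. (The $SAH$ factor inside the logarithm is precisely what lets the union bound close, so I would use the definition of $\zeta$ carrying this factor.)

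The step I expect to be the main obstacle is the measurability bookkeeping rather than any inequality: one must verify carefully that both weight sequences are genuinely $\mathcal{F}_{k-1}$-predictable and that each summand is a centred increment, which is delicate because $\chi_h^k(s,a)$ depends on the episode-$k$ trajectory and because the effective domain of each value function grows with the aware moment $t(s)$. Starting every sum at $k=t(s)$ is consistent with the convention $\chi_h^k(s,a)=0$ for $s\notin\mathcal{S}_k$, so extending $\mathring{\overline{V}}$ and $\mathring{V}$ to all of $\mathcal{S}$ does not disturb the centring; checking this, while correctly tracking the boundedness constant through the squaring in $\mathcal{E}^{v_2}$ and the momentum weighting in $\mathcal{E}^{m_\omega}$, is the only genuinely non-routine part.
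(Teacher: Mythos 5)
Your proposal is correct and follows essentially the same route as the paper's own proof: each of the four events is an instance of the Bernstein-type inequality (Theorem~\ref{concentration_inequality}) applied to the centred increments $(\mathring{p}_h^k-p_h)f$ with predictable weights, the squared-value event is reduced via Lemma~\ref{Variance_inequality}, and the result follows by a union bound. You are in fact more careful than the paper on two points it glosses over -- the explicit union bound over the $SAH$ triples $(s,a,h)$, and the observation that $\zeta$ must therefore carry the $HSA$ factor (as in Table~\ref{notations} and Theorem~\ref{theorem_1}) rather than the smaller constant appearing in the lemma's statement and in the paper's own proof -- so your version actually repairs an internal inconsistency rather than introducing a gap.
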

\begin{proof}
From Lemma \ref{V_bound} we can get
$$
\forall s,s' \in \mathcal{S},\ \forall t \in [T],\ 0 \le \mathring{\overline{V}}_{h+1}^{t}(s) \le H,\ 
0 \le \mathring{V}^{t}_{h,s,a}(s')-\mathring{\overline{V}}_{h+1}^{t}(s') \le H.
$$
Therefore, for $s\in \mathcal{S}_k$
$$
|(\mathring{p}_h^k-p_h)\mathring{\overline{V}}_{h+1}^{k-1}(s,a)|\le 2H,
$$
$$
|(\mathring{p}_h^k-p_h)(\mathring{\overline{V}}_{h+1}^{k-1})^2(s,a)|\le 4H^2,
$$
$$
|(\mathring{p}_h^k-p_h)(\mathring{\overline{V}}_{h+1}^{k-1}-\mathring{V}^{k-1}_{h,s,a})(s,a)| \le 2H.
$$
For $\mathcal{E}^{v_2}$, based on Lemma \ref{Variance_inequality}, we have 
$$Var_{p_h}(\mathring{\overline{V}}_{h+1}^{k-1})^2(s,a) \le 2H^2 Var_{p_h}(\mathring{\overline{V}}_{h+1}^{k-1})(s,a).$$
Based on $\zeta=log(\frac{32e(2T+1)}{\delta})$ and Theorem \ref{concentration_inequality}, we get
$$
\mathbb{P}((\mathcal{E}^{v_1})^c)\le \frac{\delta}{24},\ \ \ 
\mathbb{P}((\mathcal{E}^{v_2})^c)\le \frac{\delta}{24},\ \ \ 
\mathbb{P}((\mathcal{E}^{m_{\omega}})^c)\le \frac{\delta}{24},\ \ \ 
\mathbb{P}((\mathcal{E}^{m})^c)\le \frac{\delta}{24}. 
$$
Combine all above, we can conclude $\mathbb{P}(\mathcal{E})\ge 1-\frac{\delta}{6}$.
\end{proof}

\begin{lemma}
\label{E_lemma}
On the event $\mathcal{E}$, $\forall t \in [T],\ \forall h \in [H],\ \forall (s,a)\in \mathcal{S}_t\times \mathcal{A}$, the following holds
\begin{align*}
&\ \ \ \ |\sum_{k=t(s)}^t \chi_h^k(s,a)\mathring{\gamma}_h^k(s,a) (\mathring{p}_h^k-p_h)(\mathring{\overline{V}}_{h+1}^{k-1}-\mathring{V}^{k-1}_{h,s,a})(s,a)|\\
&\le \frac{1}{4Hlog(T)}\sum_{k=t(s)}^t \chi_h^k(s,a)\mathring{\gamma}_h^k(s,a) p_h(\mathring{V}^{k-1}_{h,s,a}-\mathring{\overline{V}}_{h+1}^{k-1})(s,a)+14H^3log(T)\zeta,
\end{align*}
\begin{align*}
&\ \ \ \ |\sum_{k=t(s)}^t \chi_h^k(s,a)(\mathring{p}_h^k-p_h)(\mathring{\overline{V}}_{h+1}^{k-1}-\mathring{V}^{k-1}_{h,s,a})(s,a)| \\
&\le \frac{1}{4} \sum_{k=t(s)}^t \chi_h^k(s,a)p_h(\mathring{V}^{k-1}_{h,s,a}-\mathring{\overline{V}}_{h+1}^{k-1})(s,a)+14H\zeta,
\end{align*}
and
\begin{align*}
&\ \ \ \ |\sum_{k=t(s)}^t \chi_h^k(s,a)(\mathring{p}_h^k-p_h)(\mathring{\overline{V}}_{h+1}^{k-1})^2(s,a)| \\
&\le \frac{1}{4} \sum_{k=t(s)}^t \chi_h^k(s,a)Var_{p_h}(\mathring{\overline{V}}_{h+1}^{k-1})(s,a)+44H^2\zeta.
\end{align*}
\end{lemma}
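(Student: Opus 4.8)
The plan is to obtain all three estimates from the raw concentration bounds that define the events $\mathcal{E}^{m_\omega}$, $\mathcal{E}^{m}$ and $\mathcal{E}^{v_2}$, converting the square-root (variance) terms into the linear terms on the right-hand sides by an arithmetic--geometric-mean (AM--GM) split of the form $\sqrt{xy}\le \tfrac{x}{2\lambda}+\tfrac{\lambda y}{2}$. The one substantive structural input is supplied by Lemma \ref{V_bound}: on the relevant range we have $0\le \mathring{\overline{V}}_{h+1}^{t}(s')\le H$ and $0\le (\mathring{V}_{h,s,a}^{t}-\mathring{\overline{V}}_{h+1}^{t})(s')\le H$, so the difference $X\triangleq \mathring{V}_{h,s,a}^{k-1}-\mathring{\overline{V}}_{h+1}^{k-1}$ is a nonnegative random variable bounded by $H$. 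For such $X$ we have $X^2\le HX$ pointwise, hence $Var_{p_h}(X)\le p_h(X^2)\le H\,p_h(X)$, and since variance is invariant under negation this is exactly the variance-to-first-moment conversion that the right-hand sides require.

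For the second inequality I would start from the $\mathcal{E}^{m}$ bound and replace $Var_{p_h}(\mathring{\overline{V}}_{h+1}^{k-1}-\mathring{V}_{h,s,a}^{k-1})$ by $H\,p_h(\mathring{V}_{h,s,a}^{k-1}-\mathring{\overline{V}}_{h+1}^{k-1})$ as above, so that the square-root term becomes $\sqrt{2\zeta H\sum_k \chi_h^k\,p_h(\cdots)}$. Writing $x\triangleq\sum_k \chi_h^k\,p_h(\mathring{V}_{h,s,a}^{k-1}-\mathring{\overline{V}}_{h+1}^{k-1})$ and applying the AM--GM split with $\lambda=2$ gives $\sqrt{2\zeta H\,x}\le \tfrac14 x+2H\zeta$; adding the additive $6H\zeta$ from $\mathcal{E}^{m}$ yields at most $\tfrac14 x+14H\zeta$, as claimed. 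The first inequality is handled identically, except that I additionally bound $(\mathring{\gamma}_h^k)^2\le H\mathring{\gamma}_h^k$ (valid since $\mathring{\gamma}_h^k\le H$), which turns the weighted variance sum into $H^2\sum_k \chi_h^k\mathring{\gamma}_h^k\,p_h(\cdots)$. Here the split is tuned with $\lambda=2H^2\log(T)$ so that the linear coefficient lands on $\tfrac{1}{4H\log(T)}$ and the lower-order residual is $2H^3\log(T)\zeta$; absorbing the $6H^2\zeta$ term from $\mathcal{E}^{m_\omega}$ (using $H\ge 1$, $\log T\ge 1$, so $6H^2\zeta\le 6H^3\log(T)\zeta$) gives the stated $14H^3\log(T)\zeta$.

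For the third inequality the right-hand side keeps the variance itself, so no moment conversion is needed; I would simply apply the same AM--GM split to the $\mathcal{E}^{v_2}$ bound, with $x\triangleq\sum_k \chi_h^k\,Var_{p_h}(\mathring{\overline{V}}_{h+1}^{k-1})$ and $\lambda=2$, obtaining $\sqrt{8H^2\zeta\,x}\le \tfrac14 x+8H^2\zeta$, and then add the $12H^2\zeta$ term from $\mathcal{E}^{v_2}$ to bound the total by $\tfrac14 x+44H^2\zeta$ (the variance range used in defining $\mathcal{E}^{v_2}$ already relied on Lemma \ref{Variance_inequality} in the proof of Lemma \ref{P(E)}, so nothing further is needed here).

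I expect the only delicate point to be the bookkeeping of constants and the choice of $\lambda$ in each split so that the coefficient of the linear term is exactly $\tfrac14$ (respectively $\tfrac{1}{4H\log(T)}$). The genuinely substantive ingredients are the inequality $Var_{p_h}(X)\le H\,p_h(X)$ for the nonnegative bounded difference $X$ and the elementary bound $(\mathring{\gamma}_h^k)^2\le H\mathring{\gamma}_h^k$; everything else is routine, with the slack in the final constants (e.g. $8H\zeta\le 14H\zeta$) absorbed using $H\ge 1$, $\zeta\ge 1$ and $\log T\ge 1$.
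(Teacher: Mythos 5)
Your proposal is correct and follows essentially the same route as the paper: each claim is read off the defining inequality of $\mathcal{E}^{m_\omega}$, $\mathcal{E}^{m}$ or $\mathcal{E}^{v_2}$, the variance is converted via $Var_{p_h}(X)\le H\,p_h(X)$ for the nonnegative $H$-bounded difference $X=\mathring{V}^{k-1}_{h,s,a}-\mathring{\overline{V}}^{k-1}_{h+1}$ together with $(\mathring{\gamma}_h^k)^2\le H\mathring{\gamma}_h^k$, and the square root is split by AM--GM exactly as in the paper's use of $\sqrt{xy}\le x+y$. The only blemish is the stated weight $\lambda=2H^2\log(T)$ in the first inequality, which should be $\lambda=2H\log(T)$ to produce the coefficient $\tfrac{1}{4H\log(T)}$ and the residual $2H^3\log(T)\zeta$ you correctly report; this is a typo, not a gap, and all final constants land within the lemma's slack.
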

\begin{proof}
For the first claim, we use the fact that $\mathcal{E}^{m_\omega}$ holds and for all $s \in \mathcal{S}_t, s' \in \mathcal{S}$ we have
$$
\mathring{\gamma}_h^t(s,a) \le H,\ 
0 \le \mathring{V}_{h,s,a}^{t-1}(s')-\mathring{V}_{h+1}^{t-1}(s')\le H,\ 
\sqrt{xy} \le x+y,
$$
and get
\begin{align*}
&\ \ \ \ |\sum_{k=t(s)}^t \chi_h^k(s,a)\mathring{\gamma}_h^k(s,a) (\mathring{p}_h^k-p_h)(\mathring{\overline{V}}_{h+1}^{k-1}-\mathring{V}^{k-1}_{h,s,a})(s,a)| \\
&\le \sqrt{2 \zeta H\sum_{k=t(s)}^t\chi_h^k(s,a)\mathring{\gamma}_h^k(s,a)Var_{p_h}(\mathring{\overline{V}}_{h+1}^{k-1}-\mathring{V}^{k-1}_{h,s,a})(s,a)}+6H^2\zeta\\
&\le \sqrt{2 \zeta H^2\sum_{k=t(s)}^t\chi_h^k(s,a)\mathring{\gamma}_h^k(s,a)p_h(\mathring{V}^{k-1}_{h,s,a}-\mathring{\overline{V}}_{h+1}^{k-1})(s,a)}+6H^2\zeta\\
& \le \frac{1}{4Hlog(T)} \sum_{k=t(s)}^t\chi_h^k(s,a)\mathring{\gamma}_h^k(s,a)p_h(\mathring{V}^{k-1}_{h,s,a}-\mathring{\overline{V}}_{h+1}^{k-1})(s,a)+14H^3log(T)\zeta.
\end{align*}
For the second claim, on the event $\mathcal{E}^m$, we have
\begin{align*}
&\ \ \ \ |\sum_{k=t(s)}^t \chi_h^k(s,a)(\mathring{p}_h^k-p_h)(\mathring{\overline{V}}_{h+1}^{k-1}-\mathring{V}^{k-1}_{h,s,a})(s,a)|\\
&\le \sqrt{2 \zeta \sum_{k=t(s)}^t\chi_h^k(s,a)Var_{p_h}(\mathring{\overline{V}}_{h+1}^{k-1}-\mathring{V}^{k-1}_{h,s,a})(s,a)}+6H\zeta\\
&\le \sqrt{2 \zeta H\sum_{k=t(s)}^t\chi_h^k(s,a)p_h(\mathring{\overline{V}}_{h+1}^{k-1}-\mathring{V}^{k-1}_{h,s,a})(s,a)}+6H\zeta\\
&\le \frac{1}{4} \sum_{k=t(s)}^t\chi_h^k(s,a)p_h(\mathring{V}^{k-1}_{h,s,a}-\mathring{\overline{V}}_{h+1}^{k-1})(s,a) +14 H\zeta.
\end{align*}
For the third claim, we use event $\mathcal{E}^{v_2}$: 
\begin{align*}
&\ \ \ \ |\sum_{k=t(s)}^t \chi_h^k(s,a)(\mathring{p}_h^k-p_h)(\mathring{\overline{V}}_{h+1}^{k-1})^2(s,a)|\\
& \le \sqrt{8H^2 \zeta \sum_{k=t(s)}^t\chi_h^k(s,a)Var_{p_h}(\mathring{\overline{V}}_{h+1}^{k-1})(s,a)}+12H^2\zeta\\
&\le \frac{1}{4}\sum_{k=t(s)}^t\chi_h^k(s,a)Var_{p_h}(\mathring{\overline{V}}_{h+1}^{k-1})(s,a)+44H^2\zeta.
\end{align*}
This concludes the proof.
\end{proof}

For $s \in \mathcal{S}$, Let $\overline{p}_h^t(s,a)$ and $\overline{p}_h^t(s)$ denote the probabilities to reach $(s,a)$ and $s$, respectively, at the time-step $h$ under the policy $\pi^t$ in the algorithm.
We define the following events:
\begin{align*}
\mathcal{G}^{Var}\triangleq\{& \forall t \in [T], \forall h \in [H], \forall (s,a) \in  \mathcal{S}_t \times \mathcal{A}: \\
&|\sum_{k=t(s)}^t(\chi_h^k-\overline{p}_h^k)(s,a)Var_{p_h}(V_{h+1}^{\pi^t})(s,a)|\\
&\le \sum_{k=t(s)}^t \overline{p}_h^k(s,a)Var_{p_h}(V_{h+1}^{\pi^t})(s,a)+8H^2\zeta
\},
\end{align*}

\begin{align*}
\mathcal{G}^{v_1}\triangleq\{& \forall t \in [T], \forall h \in [H], \forall (s,a) \in  \mathcal{S}_t \times \mathcal{A}: \\
&|\sum_{k=t(s)}^t (\chi_h^k-\overline{p}_h^k)(s,a)p_h(\mathring{\overline{V}}_{h+1}^{k-1}-V_{h+1}^{\pi^t})(s,a)| \\
&\le \frac{1}{4H} \sum_{k=t(s)}^t\overline{p}_n^k(s,a)p_h|\mathring{\overline{V}}_{h+1}^{k-1}-V_{h+1}^{\pi^t}|(s,a)+14H^2\zeta\},
\end{align*}

\begin{align*}
\mathcal{G}^{v_2} \triangleq\{& \forall t \in [T], \forall h \in [H], \forall s \in  \mathcal{S}_t : \\
&|\sum_{k=t(s)}^t (\chi_h^k-\overline{p}_h^k)(s)(\mathring{\overline{V}}_{h}^{k-1}-V_{h}^{\pi^t})(s)| \\
&\le \frac{1}{4H} \sum_{k=t(s)}^t\overline{p}_n^k(s)|\mathring{\overline{V}}_{h+1}^{k-1}-V_{h+1}^{\pi^t}|(s)+14H^2\zeta\},
\end{align*}
We define $\mathcal{G}=\mathcal{G}^{Var} \cap \mathcal{G}^{v_1} \cap \mathcal{G}^{v_2}$ the intersection of the events.
This event holds with high probability.
\begin{lemma}
\label{P(G)}
For the choice
$$
\zeta=log(\frac{96eHSA(2T+1)}{\delta}),
$$
the following holds $\mathbb{P}(\mathcal{G}) \ge 1-\frac{\delta}{6}$.
\end{lemma}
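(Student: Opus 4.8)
The plan is to mirror the proof of Lemma \ref{P(E)}: recast each of the three sub-events as a statement about a weighted martingale, apply the anytime Bernstein inequality of Theorem \ref{concentration_inequality}, and combine by a union bound. The driving observation is that for fixed $(s,a)$ and $h$ the increments $Y_k \triangleq \chi_h^k(s,a) - \overline{p}_h^k(s,a)$ form a bounded martingale difference sequence: since the policy $\pi^k$ used in episode $k$ is determined by the first $k-1$ episodes, the reaching probability $\overline{p}_h^k(s,a)$ is $\mathcal{F}_{k-1}$-measurable and $\mathbb{E}[\chi_h^k(s,a)\mid\mathcal{F}_{k-1}]=\overline{p}_h^k(s,a)$, so $\mathbb{E}[Y_k\mid\mathcal{F}_{k-1}]=0$ and $|Y_k|\le 1$. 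Its conditional variance self-bounds, $\mathbb{E}[Y_k^2\mid\mathcal{F}_{k-1}]=\overline{p}_h^k(s,a)(1-\overline{p}_h^k(s,a))\le\overline{p}_h^k(s,a)$, which is exactly what produces the aggregate terms $\sum_k\overline{p}_h^k(\cdots)$ on the right-hand sides.

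For $\mathcal{G}^{Var}$ the multiplier $Var_{p_h}(V_{h+1}^{\pi^t})(s,a)\in[0,H^2]$ (by Lemma \ref{V_bound}, as $V_{h+1}^{\pi^t}\in[0,H]$) is constant in the summation index $k$, so it factors out; applying Theorem \ref{concentration_inequality} to $S_t=\sum_{k=t(s)}^t Y_k$ with variance proxy $V_t\le\sum_k\overline{p}_h^k(s,a)$ and $b=1$ gives $|S_t|\le\sqrt{2\zeta\sum_k\overline{p}_h^k(s,a)}+3\zeta$, and multiplying through by the variance together with the elementary split $\sqrt{xy}\le x+y/4$ yields the stated bound with remainder $8H^2\zeta$. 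For $\mathcal{G}^{v_1}$ and $\mathcal{G}^{v_2}$ I would take as weights the predictable, $[0,1]$-valued quantities $\tfrac{1}{H}p_h(\mathring{\overline{V}}_{h+1}^{k-1}-\cdots)(s,a)$ (bounded again via Lemma \ref{V_bound}), apply Theorem \ref{concentration_inequality}, and use the same AM-GM splitting $\sqrt{2\zeta V_t}\le\tfrac{1}{4H}(\text{aggregate})+cH^2\zeta$ to absorb a $\tfrac{1}{4H}$-fraction of the self-bounding term while leaving a residual of order $H^2\zeta$.

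Finally I would pay the union bound. Each sub-event quantifies over all $(s,a)\in\mathcal{S}_t\times\mathcal{A}$ and $h\in[H]$, and here the increment $\chi_h^k(s,a)-\overline{p}_h^k(s,a)$ is nonzero for every $(s,a)$ in every episode---the subtracted mean $\overline{p}_h^k$ does not vanish off-trajectory---so one cannot re-index the martingale by visits as in $\mathcal{E}$ and must pay the full $HSA$ union. This is precisely why the present choice $\zeta=\log\!\left(\tfrac{96eHSA(2T+1)}{\delta}\right)$ carries the extra $HSA$ factor relative to Lemma \ref{P(E)}. With this $\zeta$ the anytime form of Theorem \ref{concentration_inequality} makes each of the three events fail with probability at most $\tfrac{\delta}{18}$ simultaneously over all $t\ge1$, and a union over the three events gives $\mathbb{P}(\mathcal{G})\ge1-\tfrac{\delta}{6}$.

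The step I expect to be the main obstacle is the handling of the terms containing $V_{h+1}^{\pi^t}$, whose dependence on the \emph{terminal} index $t$ rather than the running index $k$ violates the predictable-weight hypothesis of Theorem \ref{concentration_inequality}. In $\mathcal{G}^{Var}$ this is harmless since the factor is constant in $k$ and pulls out before concentration, but in $\mathcal{G}^{v_1}$ and $\mathcal{G}^{v_2}$ one must split $\mathring{\overline{V}}_{h+1}^{k-1}-V_{h+1}^{\pi^t}$ so that only the predictable part $\mathring{\overline{V}}_{h+1}^{k-1}$ enters the weight, controlling the non-predictable $V_{h+1}^{\pi^t}$ contribution uniformly in $t$ through its $[0,H]$ range and the anytime nature of the bound; making the constants in this step close at exactly the claimed $14H^2\zeta$ is the delicate bookkeeping.
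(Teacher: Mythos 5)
Your proposal follows essentially the same route as the paper's proof: centre $\chi_h^k-\overline{p}_h^k$ as a Bernoulli martingale difference with self-bounding conditional variance $\le\overline{p}_h^k$, apply the anytime Bernstein inequality of Theorem \ref{concentration_inequality} with the value-gap terms as bounded weights, absorb the square root via $\sqrt{xy}\le x+y$ (your $\sqrt{xy}\le x+y/4$ is an equivalent split), and close with a union bound over $(s,a,h)$ and the three sub-events. The obstacle you flag --- that $V_{h+1}^{\pi^t}$ in the weights of $\mathcal{G}^{v_1}$ and $\mathcal{G}^{v_2}$ depends on the terminal index $t$ and is therefore not $\mathcal{F}_{k-1}$-measurable --- is a genuine subtlety that the paper's own proof applies the concentration inequality without addressing, so your proposed split into a predictable part plus a uniformly controlled remainder is, if anything, more careful than the published argument.
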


\begin{proof}
Based on Theorem \ref{concentration_inequality}, with probability at least $1-\frac{\delta}{24}$, $\forall t \in [T], \forall h \in [H], \forall (s,a) \in  \mathcal{S}_t \times \mathcal{A}$, using the facts that for $X \sim Ber(q)$ the following holds $Var(X)=q(1-q) \le q$ and $\sqrt{xy} \le x+y$,
\begin{align*}
&\ \ \ \ |\sum_{k=t(s)}^t(\chi_h^k-\overline{p}_h^k)(s,a)Var_{p_h}(V_{h+1}^{\pi^t})(s,a)|\\
&\le \sqrt{2\zeta\sum_{k=t(s)}^t \overline{p}_h^k(s,a)Var_{p_h}(V_{h+1}^{\pi^t})(s,a)^2}+6H^2\zeta\\
&\le\sum_{k=t(s)}^t \overline{p}_h^k(s,a)Var_{p_h}(V_{h+1}^{\pi^t})(s,a)+8H^2\zeta.
\end{align*}
So, we have
\begin{align*}
&\ \ \ \ |\sum_{k=t(s)}^t (\chi_h^k-\overline{p}_h^k)(s,a)p_h(\mathring{\overline{V}}_{h+1}^{k-1}-V_{h+1}^{\pi^t})(s,a)|\\
&\le\sqrt{2\zeta \sum_{k=t(s)}^t \overline{p}_n^k(s,a)p_h|\mathring{\overline{V}}_{h+1}^{k-1}-V_{h+1}^{\pi^t}|(s,a)^2}+6\zeta H^2\\
&\le \frac{1}{4H} \sum_{k=t(s)}^t\overline{p}_n^k(s,a)p_h|\mathring{\overline{V}}_{h+1}^{k-1}-V_{h+1}^{\pi^t}|(s,a)+14H^2\zeta\}.
\end{align*}
Further,
\begin{align*}
&\ \ \ \ |\sum_{k=t(s)}^t (\chi_h^k-\overline{p}_h^k)(s)(\mathring{\overline{V}}_{h}^{k-1}-V_{h}^{\pi^t})(s)|\\ 
&\le \sqrt{2\zeta \sum_{k=t(s)}^t\overline{p}_n^k(s)|\mathring{\overline{V}}_{h+1}^{k-1}-V_{h+1}^{\pi^t}|(s)^2}+6\zeta H^2\\
&\le \frac{1}{4H} \sum_{k=t(s)}^t\overline{p}_n^k(s)|\mathring{\overline{V}}_{h+1}^{k-1}-V_{h+1}^{\pi^t}|(s)+14H^2\zeta.
\end{align*}

Therefore, we get 
$$
\mathbb{P}((\mathcal{G}^{Var})^c) \le \frac{\delta}{24},\ \mathbb{P}((\mathcal{G}^{V_1})^c) \le \frac{\delta}{24},\ \mathbb{P}((\mathcal{G}^{V_2})^c) \le \frac{\delta}{24}.
$$
We can conclude that $\mathbb{P}(\mathcal{G}) \ge 1-\frac{\delta}{6}$.
\end{proof}
We define the event $\mathcal{D}=\mathcal{E} \cap \mathcal{G}$.
Combine Lemma \ref{P(E)} and Lemma \ref{P(G)}, we get:
\begin{lemma}
\label{P(D)}
For the choice
$$
\zeta=log(\frac{96eHSA(2T+1)}{\delta}),
$$
the following holds $\mathbb{P}(\mathcal{D}) \ge 1-\frac{\delta}{3}$.
\end{lemma}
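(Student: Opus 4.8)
The plan is to obtain Lemma \ref{P(D)} by a single union bound over the two composite favourable events, feeding in Lemma \ref{P(E)} and Lemma \ref{P(G)} as black boxes; no fresh concentration argument is required. First I would record the set-theoretic identity $\mathcal{D}^c = (\mathcal{E} \cap \mathcal{G})^c = \mathcal{E}^c \cup \mathcal{G}^c$, so that subadditivity of probability yields $\mathbb{P}(\mathcal{D}^c) \le \mathbb{P}(\mathcal{E}^c) + \mathbb{P}(\mathcal{G}^c)$.

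Next I would invoke the two prior lemmas to control each term. Lemma \ref{P(G)} is stated at precisely the threshold $\zeta = \log(96eHSA(2T+1)/\delta)$ used in the present statement, so it applies verbatim and gives $\mathbb{P}(\mathcal{G}^c) \le \delta/6$. For Lemma \ref{P(E)} I would first check threshold compatibility: its stated choice $\zeta = \log(96e(2T+1)/\delta)$ is no larger than the one here (since $HSA \ge 1$), and every defining inequality of $\mathcal{E}^{v_1}, \mathcal{E}^{v_2}, \mathcal{E}^{m_\omega}, \mathcal{E}^{m}$ has a right-hand side that is monotonically increasing in $\zeta$. Hence enlarging $\zeta$ can only make the favourable bounds easier to satisfy, so $\mathbb{P}(\mathcal{E}) \ge 1 - \delta/6$, i.e. $\mathbb{P}(\mathcal{E}^c) \le \delta/6$, persists at the larger threshold. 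Combining the two estimates gives $\mathbb{P}(\mathcal{D}^c) \le \delta/6 + \delta/6 = \delta/3$, and taking complements yields $\mathbb{P}(\mathcal{D}) \ge 1 - \delta/3$, as required.

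The only genuine obstacle is this bookkeeping around the differing $\zeta$ thresholds in the source lemmas; once the monotonicity observation is in place, the argument collapses to a one-line union bound carrying no probabilistic content of its own. If one preferred to keep the thresholds literally identical, I would simply restate Lemma \ref{P(E)} at the common value $\zeta = \log(96eHSA(2T+1)/\delta)$ (which is in fact the value the per-$(s,a,h)$ union bound inside its proof requires), after which Lemma \ref{P(D)} follows immediately from subadditivity.
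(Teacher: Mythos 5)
Your proposal is correct and matches the paper's own argument, which likewise obtains Lemma \ref{P(D)} by intersecting $\mathcal{E}$ and $\mathcal{G}$ and adding the two failure probabilities $\delta/6 + \delta/6$ from Lemma \ref{P(E)} and Lemma \ref{P(G)}. Your explicit handling of the mismatched $\zeta$ thresholds (via monotonicity of the events in $\zeta$, or by restating Lemma \ref{P(E)} at the common value that its internal union bound over $(s,a,h)$ in fact requires) is a point the paper silently glosses over, and is resolved correctly.
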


\section{Optimism}
\label{optimism}
We use $r_h(s,a)+p_h\mathring{V}_{h,s,a}^t(s,a)$ to estimate $Q_h^t(s,a)$.
The gap between $r_h(s,a)+p_h\mathring{V}_{h,s,a}^t(s,a)$ and $Q_h^t(s,a)$ proves to be controlled by $\beta_h^t(s,a)$.
Finally, we can use $\mathring{\overline{Q}}_h^t(s,a)$ to upper-bound $Q_h^*(s,a)$ and $\mathring{\overline{V}}_h^t(s)$ to upper-bound $V_h^*(s,a)$.

\begin{lemma}
\label{Q_estimate}
On the event $\mathcal{E}$, $\forall t \in[T],\ \forall h \in [H],\ \forall (s,a) \in \mathcal{S} \times \mathcal{A}$, if $n_n^t(s,a)>0$, the following holds,
\begin{align*}
&\ \ \ \ |Q_h^t(s,a)-r_h(s,a)-p_h\mathring{V}_{h,s,a}^t(s,a)|\\
&\le \sqrt{\frac{2}{n_h^t(s,a)} \sum_{k=t(s)}^t\chi_h^k(s,a)Var_{p_h}(\mathring{\overline{V}}_h^{k-1})(s,a) \frac{\zeta}{n_h^t(s,a)}}+20H^3 \frac{\zeta log(T)}{n_h^t(s,a)}\\
&\ \ \ \ +\frac{1}{4Hlog(T)n_h^t(s,a)}\sum_{k=t(s)}^t \chi_h^k(s,a)\mathring{\gamma}_h^k(s,a) p_h(\mathring{V}^{k-1}_{h,s,a}-\mathring{\overline{V}}_{h+1}^{k-1})(s,a),
\end{align*}
where $\mathring{\gamma}_h^t(s,a)=H\frac{n_h^t(s,a)-1}{n_h^t(s,a)+H}$.
\end{lemma}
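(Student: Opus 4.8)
The plan is to reduce the estimation error $Q_h^t(s,a)-r_h(s,a)-p_h\mathring{V}_{h,s,a}^t(s,a)$ to two fluctuation terms of the form $(\mathring{p}_h^k-p_h)(\cdot)$ and then invoke the concentration events already collected in $\mathcal{E}$. The starting point is the explicit formula for the estimated Q-value derived in Appendix \ref{notation_appendix}: for $n_h^t(s,a)>0$,
\[
Q_h^t(s,a)-r_h(s,a)=\frac{1}{n_h^t(s,a)}\sum_{k=t(s)}^t\chi_h^k(s,a)\bigl[\mathring{p}_h^k\mathring{\overline{V}}_{h+1}^{k-1}(s,a)+\mathring{\gamma}_h^k(s,a)\mathring{p}_h^k(\mathring{\overline{V}}_{h+1}^{k-1}-\mathring{V}_{h,s,a}^{k-1})(s,a)\bigr].
\]
In parallel I would rewrite the target $p_h\mathring{V}_{h,s,a}^t(s,a)$ by applying the linear operator $p_h$ to the recursion \eqref{V_induction} for the extended bias-value function; since the coefficients $\chi_h^k(s,a)$ and $\mathring{\gamma}_h^k(s,a)$ do not depend on the next state $s'$, they pass through the expectation, producing a sum of identical shape but with $p_h$ in place of each $\mathring{p}_h^k$.

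Subtracting the two representations term by term, everything except the empirical-versus-true transition gaps cancels, leaving the clean decomposition
\begin{align*}
Q_h^t(s,a)-r_h(s,a)-p_h\mathring{V}_{h,s,a}^t(s,a)=&\ \frac{1}{n_h^t(s,a)}\sum_{k=t(s)}^t\chi_h^k(s,a)(\mathring{p}_h^k-p_h)\mathring{\overline{V}}_{h+1}^{k-1}(s,a)\\
&+\frac{1}{n_h^t(s,a)}\sum_{k=t(s)}^t\chi_h^k(s,a)\mathring{\gamma}_h^k(s,a)(\mathring{p}_h^k-p_h)(\mathring{\overline{V}}_{h+1}^{k-1}-\mathring{V}_{h,s,a}^{k-1})(s,a).
\end{align*}
This is the crux: once the difference is written as these two martingale-type terms, the rest is a direct application of the concentration machinery. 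For the first sum I would invoke the event $\mathcal{E}^{v_1}$, which bounds $|\sum_k\chi_h^k(\mathring{p}_h^k-p_h)\mathring{\overline{V}}_{h+1}^{k-1}|$ by $\sqrt{2\zeta\sum_k\chi_h^k Var_{p_h}(\mathring{\overline{V}}_h^{k-1})}+6H\zeta$; dividing by $n_h^t(s,a)$ reproduces exactly the square-root term of the lemma together with a remainder $6H\zeta/n_h^t(s,a)$. For the second sum I would apply the first claim of Lemma \ref{E_lemma}, which replaces the fluctuation by $\tfrac{1}{4H\log(T)}\sum_k\chi_h^k\mathring{\gamma}_h^k p_h(\mathring{V}_{h,s,a}^{k-1}-\mathring{\overline{V}}_{h+1}^{k-1})+14H^3\log(T)\zeta$; dividing by $n_h^t(s,a)$ yields the momentum term of the lemma plus a remainder $14H^3\log(T)\zeta/n_h^t(s,a)$.

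Finally I would collect the two remainders and absorb them into the stated $20H^3\zeta\log(T)/n_h^t(s,a)$ term, using $6H\zeta+14H^3\log(T)\zeta\le 20H^3\log(T)\zeta$, which holds since $H\ge1$ and $\log(T)\ge1$ for $T\ge3$. The main obstacle is not any individual estimate but the bookkeeping across the awareness boundary: the entire argument rests on the fact that the extended quantities $\mathring{Q}_h^t$, $\mathring{V}_{h,s,a}^t$, and $\mathring{\overline{V}}_h^t$ obey the \emph{same} one-step recursion over the full state space $\mathcal{S}$ as their un-extended counterparts do over $\mathcal{S}_t$. Verifying that the NIVE averages are precisely what makes \eqref{V_induction} hold uniformly in $s'$ (including newly-discovered states, for which $n_h^k(s,a)=0$ and $\mathring{\gamma}_h^k(s,a)=0$ at the aware moment) is the delicate point; once that identity is in place the decomposition above is exact, and the events $\mathcal{E}^{v_1}$ and $\mathcal{E}^{m_\omega}$ close the proof.
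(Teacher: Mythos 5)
Your proposal is correct and follows essentially the same route as the paper's own proof: the same explicit formula for $Q_h^t$, the same cancellation producing the two $(\mathring{p}_h^k-p_h)$ fluctuation terms, the same use of $\mathcal{E}^{v_1}$ for the first term and of the first claim of Lemma \ref{E_lemma} (i.e., $\mathcal{E}^{m_\omega}$) for the momentum term, and the same absorption of $6H\zeta+14H^3\log(T)\zeta$ into $20H^3\log(T)\zeta$. Your added remark that the NIVE averages are exactly what make the recursion \eqref{V_induction} commute with $p_h$ across the awareness boundary is a correct and useful observation that the paper leaves implicit.
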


\begin{proof}
$n_h^t(s,a)>0$ implies $s\in \mathcal{S}_t$. So, we have
\begin{align*}
&\ \ \ \ Q_h^t(s,a)-r_h(s,a)-p_h\mathring{V}_{h,s,a}^t(s,a)\\
&=\frac{1}{n_h^t(s,a)}\sum_{k=t(s)}^t\chi_h^k(s,a)[\mathring{p}_h^k \mathring{\overline{V}}_{h+1}^{k-1}(s,a)+\mathring{\gamma}_h^k(s,a)\mathring{p}_h^k(\mathring{\overline{V}}_{h+1}^{k-1}-\mathring{V}_{h,s,a}^{k-1})(s,a)]\\
&\ \ \ \ -p_h\mathring{V}_{h,s,a}^t(s,a)\\
&=\frac{1}{n_h^t(s,a)}\sum_{k=t(s)}^t\chi_h^k(s,a)[(\mathring{p}_h^k-p_h) \mathring{\overline{V}}_{h+1}^{k-1}(s,a)\\
&\ \ \ \ +\mathring{\gamma}_h^k(s,a)(\mathring{p}_h^k-p_h)(\mathring{\overline{V}}_{h+1}^{k-1}-\mathring{V}_{h,s,a}^{k-1})(s,a)]\\
&\le |\frac{1}{n_h^t(s,a)}\sum_{k=t(s)}^t\chi_h^k(s,a)(\mathring{p}_h^k-p_h) \mathring{\overline{V}}_{h+1}^{k-1}(s,a)|\\
&\ \ \ \ +|\frac{1}{n_h^t(s,a)}\sum_{k=t(s)}^t\chi_h^k(s,a)\mathring{\gamma}_h^k(s,a)(\mathring{p}_h^k-p_h)(\mathring{\overline{V}}_{h+1}^{k-1}-\mathring{V}_{h,s,a}^{k-1})(s,a)|.
\end{align*}
We can upper-bound the first term of the right-hand by the definition of $\mathcal{E}$:
\begin{align*}
&\ \ \ \ |\frac{1}{n_h^t(s,a)}\sum_{k=t(s)}^t\chi_h^k(s,a)(\mathring{p}_h^k-p_h) \mathring{\overline{V}}_{h+1}^{k-1}(s,a)|\\
&\le \sqrt{\frac{2}{n_h^t(s,a)} \sum_{k=t(s)}^t\chi_h^k(s,a)Var_{p_h}(\mathring{\overline{V}}_h^{k-1})(s,a) \frac{\zeta}{n_h^t(s,a)}}+6H\frac{\zeta}{n_h^t(s,a)}.
\end{align*}

For the second term, we use Lemma \ref{E_lemma}, and get
\begin{align*}
&\ \ \ \ |\frac{1}{n_h^t(s,a)}\sum_{k=t(s)}^t\chi_h^k(s,a)\mathring{\gamma}_h^k(s,a)(\mathring{p}_h^k-p_h)(\mathring{\overline{V}}_{h+1}^{k-1}-\mathring{V}_{h,s,a}^{k-1})(s,a)|\\
&\le \frac{1}{4Hlog(T)n_h^t(s,a)}\sum_{k=t(s)}^t \chi_h^k(s,a)\mathring{\gamma}_h^k(s,a) p_h(\mathring{V}^{k-1}_{h,s,a}-\mathring{\overline{V}}_{h+1}^{k-1})(s,a)\\
&\ \ \ \ +14H^3 \frac{\zeta log(T)}{n_h^t(s,a)}. 
\end{align*}
This concludes the proof.
\end{proof}

We design a bonus to compensate for the bias in the previous lemma.

\begin{lemma}
\label{bonus_lower_bound}
On the event $\mathcal{E}$, $\forall t \in[T],\ \forall h \in [H],\ \forall (s,a) \in \mathcal{S} \times \mathcal{A}$, if $n_n^t(s,a)>0$, the following holds
\begin{align*}
\beta_h^t(s,a) \ge &\sqrt{\frac{2}{n_h^t(s,a)} \sum_{k=t(s)}^t \chi_h^k(s,a)Var_{p_h}(\mathring{\overline{V}}_{h+1}^{k-1})(s,a) \frac{\zeta}{n_h^t(s,a)}}+20H^3\frac{\zeta log(T)}{n_h^t(s,a)}\\
&+\frac{1}{4Hlog(T)n_h^t(s,a)} \sum_{k=t(s)}^{t} \chi_h^t(s,a) \mathring{\gamma}_h^k(s,a)p_h(\mathring{V}_{h,s,a}^{k-1}-\mathring{\overline{V}}_{h+1}^{k-1})(s,a).
\end{align*}
\end{lemma}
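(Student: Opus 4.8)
The plan is to prove the bound piece by piece, matching each of the three terms of $\beta_h^t(s,a)$ against the corresponding term of the claimed lower bound and spending the generous leading constants ($2$, $53H^3$, $\tfrac{1}{H\log T}$) to absorb the concentration residuals supplied by $\mathcal{E}$. I work throughout on $\mathcal{E}$ with $n_h^t(s,a)>0$, which forces $s\in\mathcal{S}_t$, and I replace every $\tilde{\cdot}$ quantity by its $\mathring{\cdot}$ extension; this is legitimate because the two agree on the aware domain and $\chi_h^k(s,a)=0$ for $k<t(s)$, so all sums may be started at $k=t(s)$ without change. In particular the definition of $\beta_h^t$ rewrites verbatim in the $\mathring{\cdot}$ notation used in the statement.

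The hardest step is the variance term: I must show $2\sqrt{\zeta W_h^t(s,a)/n_h^t(s,a)}$ dominates $\sqrt{2\zeta (n_h^t)^{-2}\sum_k \chi_h^k\,Var_{p_h}(\mathring{\overline{V}}_{h+1}^{k-1})}$. Squaring, this reduces to $W_h^t\ge \tfrac{1}{2n_h^t}\sum_k \chi_h^k Var_{p_h}(\mathring{\overline{V}}_{h+1}^{k-1})$ up to an additive $O(H^2\zeta)$ slack. Since each weight $\chi_h^k/n_h^k$ defining $W_h^t$ is at least $\chi_h^k/n_h^t$, it suffices to lower-bound the empirical squared deviations $p_h^k(\mathring{\overline{V}}_{h+1}^{k-1}-\bar V)^2$ by the per-episode population variances. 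I expand $W_h^t$ as an empirical second moment minus the square of the pooled mean, then invoke the third inequality of Lemma \ref{E_lemma} (event $\mathcal{E}^{v_2}$) to replace each empirical second moment $p_h^k(\mathring{\overline{V}}_{h+1}^{k-1})^2$ by $p_h(\mathring{\overline{V}}_{h+1}^{k-1})^2$ at the cost of $\tfrac14\sum_k \chi_h^k Var_{p_h}(\mathring{\overline{V}}_{h+1}^{k-1})+44H^2\zeta$, handling the mean and cross terms via $0\le \mathring{\overline{V}}_{h+1}^{k-1}\le H$ and the monotonicity of $\overline{V}_h^t$ from Lemma \ref{V_bound}. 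Pulling the additive residual out of the square root with $\sqrt{x+y}\le\sqrt{x}+\sqrt{y}$ produces a term I bound by $O(H^3\zeta\log T/n_h^t)$.

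The additive and momentum terms then follow. The leftover slack from the variance comparison, together with the target constant $20$, is covered by $53H^3\zeta\log T/n_h^t$, since $53$ was chosen to leave room for the $44H^2\zeta$-type residuals (using $H^2\le H^3\log T$). For the momentum term I pass from $p_h^k$ to $p_h$ and trade the coefficient $\tfrac{1}{H\log T}$ down to $\tfrac{1}{4H\log T}$. By the third claim of Lemma \ref{V_bound} the summand $p_h(\mathring{V}_{h,s,a}^{k-1}-\mathring{\overline{V}}_{h+1}^{k-1})$ is nonnegative, and the first inequality of Lemma \ref{E_lemma} (event $\mathcal{E}^{m_\omega}$) gives
$$\sum_k \chi_h^k\mathring{\gamma}_h^k\, p_h^k(\mathring{V}_{h,s,a}^{k-1}-\mathring{\overline{V}}_{h+1}^{k-1}) \ge \Big(1-\tfrac{1}{4H\log T}\Big)\sum_k \chi_h^k\mathring{\gamma}_h^k\, p_h(\mathring{V}_{h,s,a}^{k-1}-\mathring{\overline{V}}_{h+1}^{k-1})-14H^3\log T\,\zeta.$$
Dividing by $H\log T\, n_h^t$ and using $1-\tfrac{1}{4H\log T}\ge\tfrac34\ge\tfrac14$ yields the claimed momentum term, while the residual $14H^2\zeta/n_h^t$ is again folded into the additive budget.

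Collecting the three bounds gives the lemma. The main obstacle is the variance step: unlike the momentum term, which follows from a single application of $\mathcal{E}^{m_\omega}$, relating the empirical proxy $W_h^t$ — built from the \emph{changing} estimates $\mathring{\overline{V}}_{h+1}^{k-1}$ and the observed transitions — to the sum of population variances requires simultaneously controlling the deviation of the pooled mean in $W_h^t$ from the per-episode means and bookkeeping the $44H^2\zeta$ residuals so that they, together with the momentum residual, still fit under the constant $53$. Because $\mathring{\overline{V}}_{h+1}^{k-1}$ varies with $k$ this cannot be collapsed to a single concentration inequality, and it is where essentially all of the constant-chasing resides; the growing-awareness extension, by contrast, is transparent here, since the $\mathring{\cdot}$ functions coincide with $\tilde{\cdot}$ on $\mathcal{S}_t$ and the visit counts vanish before $t(s)$.
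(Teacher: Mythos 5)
Your overall architecture is the same as the paper's: bound each of the three pieces of $\beta_h^t$ separately, use the first claim of Lemma \ref{E_lemma} (event $\mathcal{E}^{m_\omega}$) to convert the empirical momentum correction into $\tfrac{1}{4H\log(T)n_h^t}\sum_k\chi_h^k\mathring{\gamma}_h^k p_h(\mathring{V}_{h,s,a}^{k-1}-\mathring{\overline{V}}_{h+1}^{k-1})$ plus a $14H^2\zeta/n_h^t$ residual, lower-bound $W_h^t$ by $\tfrac{1}{2n_h^t}\sum_k\chi_h^k Var_{p_h}(\mathring{\overline{V}}_{h+1}^{k-1})-88H^2\zeta/n_h^t$, and spend the constant $53$ on the residuals via $\sqrt{x+y}\le\sqrt{x}+\sqrt{y}$ (giving $53-2\sqrt{88}-14\ge 20$). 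Your momentum-term argument, your use of the third claim of Lemma \ref{E_lemma} (event $\mathcal{E}^{v_2}$) for the empirical second moment, and your constant bookkeeping all match the paper and are sound.

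The one step that would fail as you have justified it is the pooled-mean term of $W_h^t$. You propose to handle ``the mean and cross terms via $0\le\mathring{\overline{V}}_{h+1}^{k-1}\le H$ and the monotonicity of $\overline{V}_h^t$ from Lemma \ref{V_bound}.'' Neither of these suffices: the quantity you must control is
$\bigl(\tfrac{1}{n_h^t}\sum_k\chi_h^k\mathring{p}_h^k\mathring{\overline{V}}_{h+1}^{k-1}\bigr)^2-\bigl(\tfrac{1}{n_h^t}\sum_k\chi_h^k p_h\mathring{\overline{V}}_{h+1}^{k-1}\bigr)^2$, and after factoring it as $(a+b)(a-b)$ with $|a+b|\le 2H$, the factor $|a-b|=\tfrac{1}{n_h^t}\bigl|\sum_k\chi_h^k(\mathring{p}_h^k-p_h)\mathring{\overline{V}}_{h+1}^{k-1}\bigr|$ is a martingale deviation; boundedness alone gives only an $O(H^2)$ error that does not decay in $n_h^t$ and would destroy the lower bound on $W_h^t$, and monotonicity of $\overline{V}_h^t$ plays no role anywhere in this lemma. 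The paper closes this gap with the event $\mathcal{E}^{v_1}$ (its eq.\ for the ``var\_2'' bound), which yields an extra $\tfrac14\sum_k\chi_h^k Var_{p_h}(\mathring{\overline{V}}_{h+1}^{k-1})+44H^2\zeta$ residual, and then applies Jensen's inequality $\bigl(\tfrac{1}{n_h^t}\sum_k\chi_h^k p_h\mathring{\overline{V}}_{h+1}^{k-1}\bigr)^2\le\tfrac{1}{n_h^t}\sum_k\chi_h^k\bigl(p_h\mathring{\overline{V}}_{h+1}^{k-1}\bigr)^2$ to turn the variance around the pooled mean into the sum of per-episode variances $Var_{p_h}(\mathring{\overline{V}}_{h+1}^{k-1})$. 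You correctly flag this pooled-mean deviation as the crux in your closing paragraph, but the tools you actually name for it are the wrong ones; you need $\mathcal{E}^{v_1}$ and Jensen, both of which are available on $\mathcal{E}$, so the fix is local and the rest of your argument stands.
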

\begin{proof}
The definition of bonus is: for $s \in \mathcal{S}_t$
\begin{align*}
\beta_h^t(s,a)=&2 \sqrt{W_h^t(s,a) \frac{\zeta}{n_h^t(s,a)}}+53H^3 \frac{\zeta log(T)}{n_h^t(s,a)}\\
&+\frac{1}{H log(T) n_h^t(s,a)} \sum_{k=t(s)}^{t} \chi_h^k(s,a)\mathring{\gamma}_h^k(s,a)\mathring{p}_h^k(\mathring{V}_{h,s,a}^{k-1}-\mathring{\overline{V}}_{h+1}^{k-1})(s,a),
\end{align*}
where 
\begin{align*}
W_h^t(s,a)=&\frac{1}{n_h^t(s,a)} \sum_{k=t(s)}^t \chi_h^k(s,a) \mathring{p}_h^k(\mathring{\overline{V}}_{h+1}^{k-1})^2(s,a)\\
&-\left(\frac{1}{n_h^t(s,a)}\sum_{k=t(s)}^t \chi_h^k(s,a) \mathring{p}_h^k\mathring{\overline{V}}_{h+1}^{k-1}(s,a) \right)^2.
\end{align*}
Since we do not know the true transitions $p_h$, the design of the bonus only needs the observed data.

Based on Lemma \ref{E_lemma}, we can control the correction term as follows,
\begin{equation}
\begin{aligned}
&\ \ \ \ |\frac{1}{n_h^t(s,a)}\sum_{k=t(s)}^t \chi_h^k(s,a)\mathring{\gamma}_h^k(s,a) (\mathring{p}_h^k-p_h)(\mathring{\overline{V}}_{h+1}^{k-1}-\mathring{V}^{k-1}_{h,s,a})(s,a)|\\
&\le \frac{1}{4Hlog(T)n_h^t(s,a)}\sum_{k=t(s)}^t \chi_h^k(s,a)\mathring{\gamma}_h^k(s,a) p_h(\mathring{V}^{k-1}_{h,s,a}-\mathring{\overline{V}}_{h+1}^{k-1})(s,a)\\
&\ \ \ \ +14H^3 \frac{log(T)\zeta}{n_h^t(s,a)}.
\label{concentration_term}
\end{aligned}
\end{equation}

As for the proxy of the variance term $W_h^t(s,a)$, applying Lemma \ref{E_lemma} and the definition of $\mathcal{E}$ and $\sqrt{xy} \le x+y$, we get
\begin{equation}
\begin{aligned}
&\ \ \ \ \frac{1}{n_h^t(s,a)}|\sum_{k=t(s)}^t \chi_h^k(s,a)(\mathring{p}_h^k-p_h)(\mathring{\overline{V}}_{h+1}^{k-1})^2(s,a)| \\
&\le \frac{1}{4n_h^t(s,a)} \sum_{k=t(s)}^t \chi_h^k(s,a)Var_{p_h}(\mathring{\overline{V}}_{h+1}^{k-1})(s,a)+44H^2 \frac{\zeta}{n_h^t(s,a)}, 
\label{var_1}
\end{aligned}
\end{equation}
and
\begin{align}
&\ \ \ \ |(\frac{1}{n_h^t(s,a)}\sum_{k=t(s)}^t \chi_h^k(s,a) \mathring{p}_h^k\mathring{\overline{V}}_{h+1}^{k-1}(s,a))^2\\
&\ \ \ \ - (\frac{1}{n_h^t(s,a)}\sum_{k=t(s)}^t \chi_h^k(s,a) {p}_h\mathring{\overline{V}}_{h+1}^{k-1}(s,a))^2| \nonumber\\
&\le |\frac{1}{n_h^t(s,a)}\sum_{k=t(s)}^t \chi_h^k(s,a) (\mathring{p}_h^k+p_h)\mathring{\overline{V}}_{h+1}^{k-1}(s,a)|\\
&\ \ \ \ \cdot|\frac{1}{n_h^t(s,a)}\sum_{k=t(s)}^t \chi_h^k(s,a) (\mathring{p}_h^k-p_h)\mathring{\overline{V}}_{h+1}^{k-1}(s,a)| \nonumber\\
&\le \frac{2H}{n_h^t(s,a)} |\sum_{k=t(s)}^t \chi_h^k(s,a) (\mathring{p}_h^k-p_h)\mathring{\overline{V}}_{h+1}^{k-1}(s,a)| \nonumber \\
&\le H \sqrt{8 \frac{1}{n_h^t(s,a)}\sum_{k=t(s)}^{t} \chi_h^k(s,a)Var_{p_h}(\mathring{\overline{V}}_{h+1}^{k-1})(s,a) \frac{\zeta}{n_h^t(s,a)}}+12H^2\frac{\zeta}{n_h^t(s,a)} \nonumber \\
&\le \frac{1}{4 n_h^t(s,a)} \sum_{k=t(s)}^{t} \chi_h^k(s,a)Var_{p_h}(\mathring{\overline{V}}_{h+1}^{k-1})(s,a) +44H^2\frac{\zeta}{n_h^t(s,a)} .\label{var_2}
\end{align}

Then, combining eq. (\ref{var_1}), eq. (\ref{var_2}) and Jensen's inequality, we get
\begin{align}
W_h^t(s,a)&=\frac{1}{n_h^t(s,a)} \sum_{k=t(s)}^t \chi_h^k(s,a) \mathring{p}_h^k(\mathring{\overline{V}}_{h+1}^{k-1})^2(s,a)\\
&\ \ \ \ -(\frac{1}{n_h^t(s,a)}\sum_{k=t(s)}^t \chi_h^k(s,a) \mathring{p}_h^k\mathring{\overline{V}}_{h+1}^{k-1}(s,a))^2 \nonumber\\
&\ge \frac{1}{n_h^t(s,a)} \sum_{k=t(s)}^t \chi_h^k(s,a)p_h(\mathring{\overline{V}}_{h+1}^{k-1})^2(s,a)\\
&\ \ \ \ -(\frac{1}{n_h^t(s,a)}\sum_{k=t(s)}^t \chi_h^k(s,a) p_h\mathring{\overline{V}}_{h+1}^{k-1}(s,a))^2 \nonumber\\
&\ \ \ \ -\frac{1}{2n_h^t(s,a)}\sum_{k=t(s)}^t \chi_h^k(s,a)Var_{p_h}(\mathring{\overline{V}}_{h+1}^{k-1})(s,a)-88H^2\frac{\zeta}{n_h^t(s,a)} \nonumber\\
&\ge \frac{1}{n_h^t(s,a)} \sum_{k=t(s)}^t \chi_h^k(s,a)p_h(\mathring{\overline{V}}_{h+1}^{k-1})^2(s,a)\\
&\ \ \ \ -\frac{1}{n_h^t(s,a)}\sum_{k=t(s)}^t \chi_h^k(s,a) (p_h\mathring{\overline{V}}_{h+1}^{k-1}(s,a))^2 \nonumber\\
&\ \ \ \ -\frac{1}{2n_h^t(s,a)}\sum_{k=t(s)}^t \chi_h^k(s,a)Var_{p_h}(\mathring{\overline{V}}_{h+1}^{k-1})(s,a)-88H^2\frac{\zeta}{n_h^t(s,a)} \nonumber\\
&=\frac{1}{2n_h^t(s,a)}\sum_{k=t(s)}^t \chi_h^k(s,a)Var_{p_h}(\mathring{\overline{V}}_{h+1}^{k-1})(s,a)-88H^2\frac{\zeta}{n_h^t(s,a)}.
\label{W_term}
\end{align}

Finally, with eq. (\ref{concentration_term}) and eq. (\ref{W_term}), we can conclude
\begin{align*}
\beta_h^t(s,a)&\ge2 \sqrt{(W_h^t(s,a)+88H^2 \frac{\zeta}{n_h^t(s,a)})\frac{\zeta}{n_h^t(s,a)}}\\
&\ \ \ \ +(53-2\sqrt{88}-14)H^3\frac{\zeta log(T)}{n_h^t(s,a)}\\
&\ \ \ \ +\frac{3}{4Hlog(T)n_h^t(s,a)}\sum_{k=t(s)}^{t} \chi_h^k(s,a) \mathring{\gamma}_h^k(s,a)p_h(\mathring{V}_{h,s,a}^{k-1}-\mathring{\overline{V}}_{h+1}^{k-1})(s,a)\\
&\ge2 \sqrt{\frac{2}{n_h^t(s,a)} \sum_{k=t(s)}^{t} \chi_h^k(s,a)Var_{p_h}(\mathring{\overline{V}}_{h+1}^{k-1})(s,a) \frac{\zeta}{n_h^t(s,a)}}\\
&\ \ \ \ +20H^3\frac{\zeta log(T)}{n_h^t(s,a)}\\
&\ \ \ \ +\frac{1}{4Hlog(T)n_h^t(s,a)}\sum_{k=t(s)}^{t} \chi_h^k(s,a) \mathring{\gamma}_h^k(s,a)p_h(\mathring{V}_{h,s,a}^{k-1}-\mathring{\overline{V}}_{h+1}^{k-1})(s,a),
\end{align*}
which also relies on the fact that $\sqrt{x+y} \le \sqrt{x}+\sqrt{y}$.

This concludes the proof.
\end{proof}

Now we are ready to prove Lemma \ref{optimism_lemma} on the event $\mathcal{E}$ and with the Homeland condition in Section \ref{sec:EMDP-GA}.
\begin{lemma}
\label{optimism_lemma}
% On the event $\mathcal{E}$ and with the Assumption \ref{optimal_policy_assumption}
The following holds with probability at least $1-\delta$, $\forall t \in [T],\ \forall(s,a,h) \in \mathcal{S} \times \mathcal{A} \times [H]$ (also holds for $H+1$ for the value function), we have
$$
\mathring{\overline{Q}}_h^t(s,a) \ge  Q^*_h(s,a), \ \ \ \text{and} \ \ \ \mathring{\overline{V}}_h^t(s) \ge V_h^*(s).
$$
% where $\mu=\mathop{min}\limits_{s,a,h \in S \times A \times [H]}\frac{r_h(s,a)}{r_h(s,a)+H\sum_{s' \in S-S_0}p_h(s'|s,a)}$
\end{lemma}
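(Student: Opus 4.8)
The plan is to prove the two inequalities \emph{simultaneously} by a backward induction on the step index $h$, nested inside a forward induction on the episode index $t$. I work throughout on the concentration event $\mathcal{E}$ and under the Homeland condition of Section \ref{sec:EMDP-GA}; a union bound over these (Lemma \ref{P(E)}) delivers the claimed confidence. The base case $h = H+1$ is immediate because $\mathring{\overline{V}}_{H+1}^t \equiv 0 \equiv V_{H+1}^*$, and the initialisation $\overline{V}_h^0 = H \ge V_h^*$ seeds the forward induction at $t=0$ (using $V_h^* \le H$). The dependency pattern dictates the nesting: the $\overline{V}$-update clips against episode $t-1$, while the $Q$-update and the bias value feed on step $h+1$.

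For the $Q$-optimism at a pair $(s,a)$ with $s \in \mathcal{S}_t$ and $n_h^t(s,a) > 0$, I would combine Lemma \ref{Q_estimate} and Lemma \ref{bonus_lower_bound}: the former bounds the gap between $Q_h^t(s,a)$ and $r_h(s,a) + p_h \mathring{V}_{h,s,a}^t(s,a)$ by a collection of variance and bias error terms, and the latter shows that the bonus $\beta_h^t(s,a)$ dominates precisely those terms, so that $\mathring{\overline{Q}}_h^t(s,a) = Q_h^t(s,a) + \beta_h^t(s,a) \ge r_h(s,a) + p_h \mathring{V}_{h,s,a}^t(s,a)$. It then remains to certify $\mathring{V}_{h,s,a}^t(s') \ge V_{h+1}^*(s')$ pointwise: by eq. (\ref{V_induction_2}) the bias value is a convex combination of the $\mathring{\overline{V}}_{h+1}^{k-1}(s')$, each dominating $V_{h+1}^*(s')$ by the outer induction hypothesis; equivalently, the third claim of Lemma \ref{V_bound} gives $\mathring{\overline{V}}_{h+1}^t \le \mathring{V}_{h,s,a}^t$, again combined with that hypothesis. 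The degenerate cases $n_h^t(s,a)=0$ (aware) and $s \notin \mathcal{S}_t$ (unaware) follow from the crude slack $\mathring{\overline{Q}}_h^t(s,a) = \mathring{Q}_h^t(s,a) + H \ge H \ge Q_h^*(s,a)$, since the estimated $Q$ is nonnegative and $Q_h^* \le H$.

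For the $V$-optimism at $s \in \mathcal{S}_t$ I would unfold $\overline{V}_h^t(s) = \mathrm{clip}(\max_a \overline{Q}_h^t(s,a), 0, \tilde{\overline{V}}_h^{t-1}(s))$ and show both clipping endpoints exceed $V_h^*(s)$. The maximiser satisfies $\max_a \overline{Q}_h^t(s,a) \ge \max_a Q_h^*(s,a) = V_h^*(s)$ by the $Q$-optimism just proved (recall $\overline{Q}_h^t = \mathring{\overline{Q}}_h^t$ on $\mathcal{S}_t$), while $V_h^* \ge 0$ disposes of the lower clip. The upper clip $\tilde{\overline{V}}_h^{t-1}(s) \ge V_h^*(s)$ is where the forward induction meets the domain growth: if $s \in \mathcal{S}_{t-1}$ it is the inner hypothesis, whereas if $s \in \mathcal{S}_t - \mathcal{S}_{t-1}$ the NIVE assignment gives $\tilde{\overline{V}}_h^{t-1}(s) = \overline{V}_{h,avg}^{t-1} = \frac{1}{|\mathcal{S}_{t-1}|}\sum_{\hat s \in \mathcal{S}_{t-1}} \overline{V}_h^{t-1}(\hat s) \ge \frac{1}{|\mathcal{S}_{t-1}|}\sum_{\hat s \in \mathcal{S}_{t-1}} V_h^*(\hat s) \ge V_h^*(s)$, invoking the inner hypothesis on the known states and then the Homeland condition. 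For unaware $s \notin \mathcal{S}_t$ the analogous averaging plus Homeland yields $\mathring{\overline{V}}_h^t(s) = \overline{V}_{h,avg}^t \ge V_h^*(s)$.

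The main obstacle I anticipate is orchestrating the two inductions so that every quantity invoked has already been certified optimistic without circularity: the clip forces reliance on episode $t-1$, the bias value and $Q$-update force reliance on step $h+1$, and the freshly aware states force reliance on the Homeland condition, and these three threads must be woven together in the right order. A secondary technical point is verifying that the convex-combination and $\mathring{(\cdot)}$-extension argument propagates the lower bound $\mathring{V}_{h,s,a}^t \ge V_{h+1}^*$ across the unaware portion of $\mathcal{S}$, where averaged surrogates, rather than genuine estimates, are substituted.
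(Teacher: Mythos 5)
Your overall architecture --- forward induction on $t$ nested with backward induction on $h$, Lemmas \ref{Q_estimate} and \ref{bonus_lower_bound} for visited pairs, the convex-combination identity (\ref{V_induction_2}) to push optimism through the bias value, and averaging plus the Homeland condition for the value function on newly aware and unaware states --- is the same as the paper's; your treatment of the two endpoints of the clip in the $\overline{V}$-update is in fact more explicit than the paper's own.

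However, there is a genuine gap in your handling of the degenerate $Q$-cases ($n_h^t(s,a)=0$ or $s \notin \mathcal{S}_t$). You dispose of them with the ``crude slack'' $\mathring{\overline{Q}}_h^t(s,a) = \mathring{Q}_h^t(s,a) + H \ge H \ge Q_h^*(s,a)$, on the grounds that ``the estimated $Q$ is nonnegative.'' That non-negativity is not established anywhere and fails in general: the momentum correction $\mathring{\gamma}_h^k(s,a)\,\mathring{p}_h^k(\mathring{\overline{V}}_{h+1}^{k-1}-\mathring{V}_{h,s,a}^{k-1})(s,a)$ is non-positive (by Lemma \ref{V_bound}, $\mathring{V}_{h,s,a}^{k-1} \ge \mathring{\overline{V}}_{h+1}^{k-1}$) and can be of order $-H^2$, which is exactly why the paper only proves $|\mathring{Q}_h^t(s,a)| \le H^2$ inside Lemma \ref{all_Q_estimate}, not $\mathring{Q}_h^t \ge 0$. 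In the degenerate cases $\mathring{Q}_h^t(s,a)$ is the average $Q_{h,avg}(a)$ of such possibly negative estimates, so $\mathring{Q}_h^t(s,a)+H$ need not dominate $H$, let alone $Q_h^*(s,a)$. The paper instead argues $\mathring{\overline{Q}}_h^t(s,a) = \frac{1}{|\mathcal{S}_{t'}|}\sum_{\hat s \in \mathcal{S}_{t'}} Q_h^{t'}(\hat s,a) + H \ge \frac{1}{|\mathcal{S}_{t'}|}\sum_{\hat s} \overline{Q}_h^{t'}(\hat s,a) \ge \frac{1}{|\mathcal{S}_{t'}|}\sum_{\hat s} Q_h^{*}(\hat s,a) \ge Q_h^*(s,a)$ (with $t'=t(s)-1$ when $s\in\mathcal{S}_t$ is unvisited and $t'=t$ when $s\notin\mathcal{S}_t$): the $+H$ absorbs the bonus, the episode-level induction hypothesis handles the averaged upper bounds, and the Homeland condition for $Q^*$ closes the chain. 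These are the same three ingredients you correctly deploy for $\overline{V}$ on unaware states but omit for $\overline{Q}$; you need to replace the crude-slack step with this averaging argument, which is precisely why the Homeland condition is stated for $Q^*$ as well as for $V^*$.
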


% The preceding analysis indicates that NIVE is an advanced %robust and stable 
% domain‐expansion technique.
% It provides reasonable and reliable estimators of the %$Q$-value function and the 
% value functions.

%Consequently, we incorporate NIVE into our framework to address challenges arising from the expansion of the awareness domain.
% \begin{lemma}
% \label{optimism_lemma}
% On the event $\mathcal{E}$ and with the Assumption \ref{optimal_policy_assumption}, the following holds with probability at least $1-\delta$, $\forall t \in [T],\ \forall(s,a,h) \in \mathcal{S} \times \mathcal{A} \times [H]$ (also holds for $H+1$ for the value function), we have
% $$
% \mathring{\overline{Q}}_h^t(s,a) \ge  Q^*_h(s,a), \ \ \ \text{and} \ \ \ \mathring{\overline{V}}_h^t(s) \ge V_h^*(s).
% $$
% % where $\mu=\mathop{min}\limits_{s,a,h \in S \times A \times [H]}\frac{r_h(s,a)}{r_h(s,a)+H\sum_{s' \in S-S_0}p_h(s'|s,a)}$
% \end{lemma}
\begin{proof}
We proceed by induction on $t$. The two claims hold trivially for $t=0$ as a result of the initialisation of the algorithm.
Assume the two claims hold for $k \le t-1$.
We consider the case of $t$.
For $h=H+1$, we have $$
\mathring{\overline{Q}}_h^t(s,a) = Q^*_h(s,a)=0,
$$ 
and 
$$
\mathring{\overline{V}}_h^t(s) = V_h^*(s)=0.
$$ 
We assume 
$$
\mathring{\overline{Q}}_{h+1}^t(s,a) \ge  Q^*_{h+1}(s,a),
$$ 
and 
$$
\mathring{\overline{V}}_{h+1}^t(s) \ge  V_{h+1}^*(s).
$$

If $n_h^t(s,a)=0$ and $s \in \mathcal{S}_t$, we have 
\begin{align*}
\mathring{\overline{Q}}_h^t(s,a)&=\frac{1}{|\mathcal{S}_{t(s)-1}|}\sum_{\hat{s} \in \mathcal{S}_{t(s)-1}}Q_h^{t(s)-1}(\hat{s},a) +H \\
&\ge \frac{1}{|\mathcal{S}_{t(s)-1}|}\sum_{\hat{s} \in \mathcal{S}_{t(s)-1}}\overline{Q}_h^{t(s)-1}(\hat{s},a) \\
&\ge \frac{1}{|\mathcal{S}_{t(s)-1}|}\sum_{\hat{s} \in \mathcal{S}_{t(s)-1}}{Q}_h^{*}(\hat{s},a) \\
&\ge Q^*_{h}(s,a).
\end{align*}

If $s \in \mathcal{S}-\mathcal{S}_t$, we have
\begin{align*}
\mathring{\overline{Q}}_h^t(s,a)&=\frac{1}{|\mathcal{S}_{t}|}\sum_{\hat{s} \in \mathcal{S}_{t}}Q_h^{t}(\hat{s},a) +H \\
&\ge \frac{1}{|\mathcal{S}_{t}|}\sum_{\hat{s} \in \mathcal{S}_{t}}\overline{Q}_h^{t}(\hat{s},a) \\
&\ge \frac{1}{|\mathcal{S}_{t}|}\sum_{\hat{s} \in \mathcal{S}_{t}}{Q}_h^{*}(\hat{s},a) \\
&\ge Q^*_{h}(s,a).
\end{align*}

If $n_h^t(s,a)>0$, applying Lemmas \ref{Q_estimate} and \ref{bonus_lower_bound}, and because the event $\mathcal{E}$ holds, we get
\begin{align*}
\mathring{\overline{Q}}_h^t(s,a)&=Q_h^t(s,a)+\beta_h^t(s,a) \\
&\ge r_h(s,a)+p_h\mathring{V}_{h,s,a}^t(s,a)\\
&= r_h(s,a)+p_h(\sum_{k=t(s)}^t \tilde{\eta}_h^{t,k}(s,a)\mathring{\overline{V}}_{h+1}^{k-1})(s,a)\\
&\ge r_h(s,a)+p_hV^*_h(s,a)=Q^*_h(s,a).
\end{align*}

For $s \in \mathcal{S}_t$, note that
$$
\mathring{\overline{V}}_h^t(s)=\overline{V}_h^t(s)\ge clip(\mathop{max}\limits_{a \in \mathcal{A}} \overline{Q}_h^t(s,a),0,H) \ge \mathop{max}\limits_{a \in \mathcal{A}}  Q_h^*(s,a)= V^*_{h}(s).
$$

For $s \in \mathcal{S}-\mathcal{S}_t$, we have
\begin{align*}
\mathring{\overline{V}}_h^t(s)&=\frac{1}{|\mathcal{S}_t|}\sum_{\hat{s} \in \mathcal{S}_t} \overline{V}_h^t(\hat{s})\\
&\ge\frac{1}{|\mathcal{S}_t|}\sum_{\hat{s} \in \mathcal{S}_t} clip(\mathop{max}\limits_{a \in \mathcal{A}} \overline{Q}_h^t(\hat{s},a),0,H)\\
&\ge \frac{1}{|\mathcal{S}_t|}\sum_{\hat{s} \in \mathcal{S}_t} \mathop{max}\limits_{a \in \mathcal{A}} {Q}_h^*(\hat{s},a)\\
&\ge \mathop{max}\limits_{a \in \mathcal{A}} \frac{1}{|\mathcal{S}_t|}\sum_{\hat{s} \in \mathcal{S}_t}{Q}_h^*(\hat{s},a)\\
&\ge\mathop{max}\limits_{a \in \mathcal{A}} {Q}_h^*(s,a)=V^*_h(s).
\end{align*}

The proof is completed.
\end{proof}

\section{Proof of Theorem \ref{theorem_1}}
\label{proof_of_regret_bound}
Let $\tilde{n}_h^t(s,a)=\mathop{max} \{n_h^t(s,a),1\}$.
We first give two important lemmas.

\begin{lemma}
\label{all_Q_estimate}
On the event $\mathcal{E}$, with Homeland condition in Section \ref{sec:EMDP-GA}, $\forall t \in [T],\ \forall h \in [H],\ \forall (s,a) \in \mathcal{S}\times \mathcal{A}$, we have,
\begin{align*}
&\ \ \ \ |\mathring{Q}_h^t(s,a)-r_h(s,a)-p_h\mathring{V}_{h,s,a}^t(s,a)|\\
&\le \sqrt{\frac{4}{\tilde{n}_h^t(s,a)} \sum_{k=t(s)}^t\chi_h^k(s,a)Var_{p_h}(V_{h+1}^{\pi^k})(s,a) \frac{\zeta}{\tilde{n}_h^t(s,a)}}+24H^3 \frac{\zeta log(T)}{\tilde{n}_h^t(s,a)}\\
&\ \ \ \ +\frac{2}{Hlog(T)\tilde{n}_h^t(s,a)}\sum_{k=t(s)}^t \chi_h^k(s,a) p_h(\mathring{V}^{k-1}_{h,s,a}-{V}_{h+1}^{\pi_k})(s,a).
\end{align*}
\end{lemma}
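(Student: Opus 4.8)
The plan is to derive Lemma \ref{all_Q_estimate} from the already-proved per-count estimate in Lemma \ref{Q_estimate} by (i) passing from the optimistic value $\mathring{\overline{V}}_{h+1}^{k-1}$ to the \emph{executed-policy} value $V_{h+1}^{\pi^k}$ inside the variance, and (ii) absorbing the resulting residual gaps into the single linear correction term. First I would split on the visitation count. When $n_h^t(s,a)>0$ we have $s\in\mathcal{S}_t$, hence $\mathring{Q}_h^t(s,a)=Q_h^t(s,a)$ and $\tilde n_h^t=n_h^t$, so Lemma \ref{Q_estimate} applies verbatim and supplies a bound in terms of $Var_{p_h}(\mathring{\overline{V}}_{h+1}^{k-1})$ together with the momentum correction $\tfrac{1}{4H\log T\,n}\sum_k\chi_h^k\mathring{\gamma}_h^k p_h(\mathring{V}_{h,s,a}^{k-1}-\mathring{\overline{V}}_{h+1}^{k-1})$. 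When $n_h^t(s,a)=0$ (which covers every $s\in\mathcal{S}-\mathcal{S}_t$), all sums over $k$ vanish and $\tilde n_h^t=1$, so the claim reduces to $|\mathring{Q}_h^t-r_h-p_h\mathring{V}_{h,s,a}^t|\le 24H^3\zeta\log T$; this is trivial because each quantity on the left lies in $[0,H]$, so the left-hand side is $\mathcal{O}(H)$, far below the right-hand side.

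For the main ($n>0$) case, the first key step is the variance replacement via the $L^2$ triangle (Cauchy--Schwarz) inequality $Var_{p_h}(X)\le 2Var_{p_h}(Y)+2\,p_h(X-Y)^2$ with $X=\mathring{\overline{V}}_{h+1}^{k-1}$ and $Y=V_{h+1}^{\pi^k}$, which gives $Var_{p_h}(\mathring{\overline{V}}_{h+1}^{k-1})\le 2Var_{p_h}(V_{h+1}^{\pi^k})+2\,p_h(\mathring{\overline{V}}_{h+1}^{k-1}-V_{h+1}^{\pi^k})^2$. Substituting into the leading square-root of Lemma \ref{Q_estimate} and using $\sqrt{a+b}\le\sqrt a+\sqrt b$ produces exactly the target term $\sqrt{\tfrac{4}{\tilde n}\sum_k\chi_h^k Var_{p_h}(V_{h+1}^{\pi^k})\tfrac{\zeta}{\tilde n}}$ (the factor $2\mapsto 4$) plus a residual square-root of squared gaps. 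The second step linearizes that residual: optimism (Lemma \ref{optimism_lemma}, via $\mathring{\overline{V}}_{h+1}^{k-1}\ge V_{h+1}^*\ge V_{h+1}^{\pi^k}$) gives $0\le \mathring{\overline{V}}_{h+1}^{k-1}-V_{h+1}^{\pi^k}\le H$, so I replace $(\cdot)^2$ by $H(\cdot)$ and apply weighted AM--GM $\sqrt{uv}\le \tfrac{u}{2c}+\tfrac{c}{2}v$ with $c=H\log T$, splitting off a linear piece $\propto \tfrac{1}{H\log T\,\tilde n}\sum_k\chi_h^k p_h(\mathring{\overline{V}}_{h+1}^{k-1}-V_{h+1}^{\pi^k})$ and an additive remainder of order $H^3\zeta\log T/\tilde n$ (which drives the $20\mapsto 24$ inflation).

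Finally I would consolidate the two linear corrections — the one just produced and the original momentum correction — into the single claimed term $\tfrac{2}{H\log T\,\tilde n}\sum_k\chi_h^k p_h(\mathring{V}_{h,s,a}^{k-1}-V_{h+1}^{\pi^k})$. Here the monotonicity of Lemma \ref{V_bound}, namely $\mathring{V}_{h,s,a}^{k-1}\ge \mathring{\overline{V}}_{h+1}^{k-1}\ge V_{h+1}^{\pi^k}$, lets me dominate both $(\mathring{\overline{V}}_{h+1}^{k-1}-V_{h+1}^{\pi^k})$ and $(\mathring{V}_{h,s,a}^{k-1}-\mathring{\overline{V}}_{h+1}^{k-1})$ by the common nonnegative summand $(\mathring{V}_{h,s,a}^{k-1}-V_{h+1}^{\pi^k})$. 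The main obstacle I anticipate is exactly this consolidation: the momentum weight $\mathring{\gamma}_h^k=H\tfrac{n-1}{n+H}$ must be bounded delicately rather than crudely so that the combined correction coefficient stays controlled (and does not grow with $H$), while the scattered $\zeta$-, $H^2\zeta$- and $H^3\zeta$-order remainders must be tracked jointly (using $H^2\le H^3$ and the event $\mathcal{E}$, under the Homeland condition) so that they collapse into the single $24H^3\zeta\log T/\tilde n$ term. Justifying the variance replacement — that optimism really places $V_{h+1}^{\pi^k}$ pointwise below $\mathring{\overline{V}}_{h+1}^{k-1}$ on the support of $p_h$ — is the other point requiring care.
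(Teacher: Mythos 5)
Your overall route coincides with the paper's: split on $n_h^t(s,a)=0$ versus $n_h^t(s,a)>0$, invoke Lemma \ref{Q_estimate} in the nontrivial case, replace $Var_{p_h}(\mathring{\overline{V}}_{h+1}^{k-1})$ by $2\,Var_{p_h}(V_{h+1}^{\pi^k})$ plus a gap term (the paper cites Lemma \ref{Variance_inequality} directly; you rederive the same inequality from the squared version plus optimism), and then linearise the residual square root with $\sqrt{xy}\le x+y$ at scale $H\log(T)$. Your constant bookkeeping ($2\mapsto 4$ in the variance, $20\mapsto 24$ in the $H^3$ term) also matches the paper's.

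There is, however, one genuine gap, and it sits exactly where you yourself flag the ``main obstacle'': converting the momentum correction $\frac{1}{4H\log(T)n_h^t(s,a)}\sum_{k}\chi_h^k(s,a)\mathring{\gamma}_h^k(s,a)\,p_h(\mathring{V}_{h,s,a}^{k-1}-\mathring{\overline{V}}_{h+1}^{k-1})(s,a)$ into a term with coefficient $\frac{2}{H\log(T)\tilde{n}_h^t(s,a)}$. The only tools your outline offers for this are the pointwise ordering $\mathring{V}_{h,s,a}^{k-1}\ge\mathring{\overline{V}}_{h+1}^{k-1}\ge V_{h+1}^{\pi^k}$ and some bound on $\mathring{\gamma}_h^k$. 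But $\mathring{\gamma}_h^k=H\frac{n-1}{n+H}$ is genuinely of order $H$, so the crude estimate $\mathring{\gamma}_h^k\le H$ cancels the $\frac{1}{H}$ in the prefactor and leaves a correction of order $\frac{1}{\log(T)}\cdot\frac{1}{n}\sum_k\chi_h^k p_h(\mathring{V}_{h,s,a}^{k-1}-V_{h+1}^{\pi^k})$ --- larger than the claimed term by a factor of order $H$, which would propagate through Step 4 of the proof of Theorem \ref{theorem_1} and destroy the final regret order. The paper closes this step with eq.~(\ref{part1}), whose derivation relies on the explicit convex-combination representation of the bias-value function in eqs.~(\ref{V_induction})--(\ref{V_induction_2}) (namely that $\mathring{V}_{h,s,a}^{k-1}$ is a $\tilde{\eta}$-weighted average of past optimistic values $\mathring{\overline{V}}_{h+1}^{l-1}$); your proposal never invokes this structural fact, and announcing that the weight ``must be bounded delicately'' is not a substitute for the bound itself. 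A secondary, repairable slip: in the $n_h^t(s,a)=0$ case you justify the trivial bound by asserting each quantity on the left lies in $[0,H]$; this is false for $\mathring{Q}_h^t$, whose momentum term makes it only $H^2$-bounded, and even that bound requires the induction the paper runs at the start of its proof (the conclusion $H^2+H+1\le 24H^3\zeta\log(T)$ still closes the case, but not with your justification).
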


\begin{proof}
we first prove $|\mathring{Q}_h^t(s,a)| \le H^2$ by induction.
When $t=0$, the claim trivially holds.
Assume that the claim holds for $k \le t-1$.
For $s \in \mathcal{S}_t,\ n_h^t(s,a)=0$, we have $$|Q_h^t(s,a)|=|\mathring{Q}_h^t(s,a)| \le H^2.$$
For $s \in \mathcal{S}_t,\ n_h^t(s,a)>0$, we have
\begin{align*}
&\ \ \ \ |Q_h^t(s,a)|\\
&\le |r_h(s,a)|+ \\
&\ \ \ \ \left|\frac{1}{n_h^t(s,a)}\sum_{k=t(s)}^t\chi_h^k(s,a)\left[\mathring{p}_h^k \mathring{\overline{V}}_{h+1}^{k-1}(s,a)+\mathring{\gamma}_h^k(s,a)\mathring{p}_h^k(\mathring{\overline{V}}_{h+1}^{k-1}-\mathring{V}_{h,s,a}^{k-1})(s,a)\right]\right|\\
&\le 1+\left|\frac{1}{n_h^t(s,a)}\sum_{k=t(s)}^t\chi_h^k(s,a)\left[H+\frac{H^2}{2}\right] \right| \le H^2.
\end{align*}
For $s \in \mathcal{S}-\mathcal{S}_t$, we have
$$
|\mathring{Q}_h^t(s,a)|=|\frac{1}{|\mathcal{S}_t|}\sum_{\hat{s} \in \mathcal{S}_t} Q_h^t(s,a)| \le H^2.
$$
Therefore, we can conclude $\forall (s,a) \in \mathcal{S}\times \mathcal{A}:\ |\mathring{Q}_h^t(s,a)| \le H^2$.

For $n_h^t(s,a)=0$, the bound trivially holds because
$$
|\mathring{Q}_h^t(s,a)-r_h(s,a)-p_h\mathring{V}_{h,s,a}^t(s,a)|
\le |\mathring{Q}_h^t(s,a)|+|r_h(s,a)|+|p_h\mathring{V}_{h,s,a}^t(s,a)|
\le H^2+H+1.
$$
For $n_h^t(s,a)>0$, we have on the event $\mathcal{E}$ with Lemma \ref{Q_estimate}
\begin{align*}
&\ \ \ \ |\mathring{Q}_h^t(s,a)-r_h(s,a)-p_h\mathring{V}_{h,s,a}^t(s,a)|\\
&\le \sqrt{\frac{2}{n_h^t(s,a)} \sum_{k=t(s)}^t\chi_h^k(s,a)Var_{p_h}(\mathring{\overline{V}}_h^{k-1})(s,a) \frac{\zeta}{n_h^t(s,a)}}+20H^3 \frac{\zeta log(T)}{n_h^t(s,a)}\\
&\ \ \ \ +\frac{1}{4Hlog(T)n_h^t(s,a)}\sum_{k=t(s)}^t \chi_h^k(s,a)\mathring{\gamma}_h^k(s,a) p_h(\mathring{V}^{k-1}_{h,s,a}-\mathring{\overline{V}}_{h+1}^{k-1})(s,a).
\end{align*}

Applying $\mathring{V}_{h,s,a}^t \ge V^*_{h+1}\ge V_{h+1}^{\pi^k}$ and eq. (\ref{V_induction}), we can get
\begin{align}
&\ \ \ \ \frac{1}{n_h^t(s,a)} \sum_{k=t(s)}^t \chi_h^k(s,a) \mathring{\gamma}_h^k(s,a) p_h (\mathring{V}_{h,s,a}^{k-1}-\mathring{\overline{V}}_{h+1}^{k-1})(s,a) \nonumber\\
&\le \frac{1}{n_h^t(s,a)} \sum_{k=t(s)}^t \chi_h^k(s,a)  p_h (\mathring{\overline{V}}_{h+1}^{k-1}-{V}_{h+1}^{\pi^k})(s,a). \label{part1}
\end{align}

Applying $H \ge \mathring{\overline{V}}_h^k \ge V^*_h \ge V_h^{\pi^{k+1}}$ and Lemma \ref{Variance_inequality}, we have
\begin{align*}
&\ \ \ \ \frac{1}{n_h^t(s,a)} \sum_{k=t(s)}^t \chi_h^k(s,a) Var_{p_h}(\mathring{\overline{V}}_{h+1}^{k-1})(s,a) \\
&\le \frac{2}{n_h^t(s,a)} \sum_{k=t(s)}^t \chi_h^k(s,a) Var_{p_h}({V}_{h+1}^{\pi^k})(s,a) \\
&\ \ \ \ + \frac{2H}{n_h^t(s,a)} \sum_{k=t(s)}^t \chi_h^k(s,a) p_h (\mathring{\overline{V}}_{h+1}^{k-1}-V_{h+1}^{\pi^k})(s,a).
\end{align*}

Applying $\sqrt{x+y} \le \sqrt{x}+ \sqrt{y}$ and $\sqrt{xy} \le x+y$, we get
\begin{align}
&\ \ \ \ \sqrt{\frac{2}{n_h^t(s,a)} \sum_{k=t(s)}^t \chi_h^k(s,a) Var_{p_h}(\mathring{\overline{V}}_{h+1}^{k-1})(s,a) \frac{\zeta}{n_h^t(s,a)}}\nonumber\\
& \le \sqrt{\frac{4}{n_h^t(s,a)} \sum_{k=t(s)}^t \chi_h^k(s,a) Var_{p_h}({V}_{h+1}^{\pi^k})(s,a)\frac{\zeta}{n_h^t(s,a)}} \nonumber\\
&\ \ \ \ +\sqrt{\frac{4H}{n_h^t(s,a)} \sum_{k=t(s)}^t \chi_h^k(s,a) p_h (\mathring{\overline{V}}_{h+1}^{k-1}-V_{h+1}^{\pi^k})(s,a)\frac{\zeta}{n_h^t(s,a)}}\nonumber\\
&\le \sqrt{\frac{4}{n_h^t(s,a)} \sum_{k=t(s)}^t \chi_h^k(s,a) Var_{p_h}({V}_{h+1}^{\pi^k})(s,a)\frac{\zeta}{n_h^t(s,a)}}+4H^2\frac{log(T)\zeta}{n_h^t(s,a)} \nonumber\\
&\ \ \ \ + \frac{1}{Hlog(T)n_h^t(s,a)} \sum_{k=t(s)}^t \chi_h^k(s,a) p_h (\mathring{\overline{V}}_{h+1}^{k-1}-V_{h+1}^{\pi^k})(s,a)\frac{\zeta}{n_h^t(s,a)}. \label{part2}
\end{align}

Combining eq. (\ref{part1}) and eq. (\ref{part2}), we can conclude
\begin{align*}
&\ \ \ \ |\mathring{Q}_h^t(s,a)-r_h(s,a)-p_h\mathring{V}_{h,s,a}^t(s,a)|\\
&\le \sqrt{\frac{4}{n_h^t(s,a)} \sum_{k=t(s)}^t\chi_h^k(s,a)Var_{p_h}(V_{h+1}^{\pi^k})(s,a) \frac{\zeta}{n_h^t(s,a)}}+24H^3 \frac{\zeta log(T)}{n_h^t(s,a)}\\
&\ \ \ \ +\frac{2}{Hlog(T)n_h^t(s,a)}\sum_{k=t(s)}^t \chi_h^k(s,a) p_h(\mathring{\overline{V}}^{k-1}_{h+1}-V_{h+1}^{\pi^k})(s,a).
\end{align*}
\end{proof}

Then, we give an upper bound on the bonus.

\begin{lemma}
\label{beta_upper_bound}
On the event $\mathcal{E}$, under Homeland condition in Section \ref{sec:EMDP-GA}, $\forall t \in [T],\ \forall h \in [H],\ \forall (s,a) \in \mathcal{S} \times \mathcal{A}$, the following holds
\begin{align*}
\beta_h^t(s,a) &\le 2 \sqrt{\frac{3}{\tilde{n}_h^t(s,a)} \sum_{k=t(s)}^t\chi_h^k(s,a)Var_{p_h}(V_{h+1}^{\pi^k})(s,a)\frac{\zeta}{\tilde{n}_h^t(s,a)}}\\
&\ \ \ \ + 106H^3\frac{log(T)\zeta}{\tilde{n}_h^t(s,a)}\\
&\ \ \ \ +\frac{3}{Hlog(T)\tilde{n}_h^t(s,a)}\sum_{k=t(s)}^t\chi_h^k(s,a) p_h(\mathring{\overline{V}}_{h+1}^{k-1}-V_{h+1}^{\pi^k})(s,a).
\end{align*}
\end{lemma}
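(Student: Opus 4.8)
The plan is to establish the stated upper bound on $\beta_h^t(s,a)$ by running the argument of Lemma~\ref{bonus_lower_bound} in reverse: there every concentration and variance estimate was used to push the bonus \emph{down}, whereas here I will use the same events $\mathcal{E}^{v_2}$, $\mathcal{E}^{m_\omega}$ and the inequalities of Lemma~\ref{E_lemma} to push each of the three summands of $\beta_h^t(s,a)$ \emph{up}. First I dispose of the degenerate cases: if $n_h^t(s,a)=0$ or $s\in\mathcal{S}-\mathcal{S}_t$ then $\beta_h^t(s,a)=H$ by definition, $\tilde n_h^t(s,a)=1$, and the sums on the right-hand side are empty, so the claim reduces to $H\le 106H^3\log(T)\zeta$, which holds since $\zeta,\log(T),H\ge 1$. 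Hence I may assume $n_h^t(s,a)>0$ (so $\tilde n_h^t=n_h^t$ and $s\in\mathcal{S}_t$) throughout.

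The crux is an \emph{upper} bound on the variance proxy $W_h^t(s,a)$, complementary to eq.~(\ref{W_term}). Starting from the definition of $W_h^t$, I apply eq.~(\ref{var_1}) to replace $\mathring p_h^k(\mathring{\overline V}_{h+1}^{k-1})^2$ by $p_h(\mathring{\overline V}_{h+1}^{k-1})^2$, and eq.~(\ref{var_2}) to replace $(\tfrac{1}{n_h^t}\sum\chi\mathring p_h^k\mathring{\overline V}_{h+1}^{k-1})^2$ by $(\tfrac{1}{n_h^t}\sum\chi p_h\mathring{\overline V}_{h+1}^{k-1})^2$; each substitution costs $\tfrac14\cdot\tfrac{1}{n_h^t}\sum\chi Var_{p_h}(\mathring{\overline V}_{h+1}^{k-1})+44H^2\zeta/n_h^t$. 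A weighted Jensen inequality then gives $\tfrac{1}{n_h^t}\sum\chi p_h(\mathring{\overline V}_{h+1}^{k-1})^2-(\tfrac{1}{n_h^t}\sum\chi p_h\mathring{\overline V}_{h+1}^{k-1})^2\le\tfrac{1}{n_h^t}\sum\chi Var_{p_h}(\mathring{\overline V}_{h+1}^{k-1})$, so that altogether $W_h^t(s,a)\le \tfrac{3}{2n_h^t}\sum_{k}\chi_h^k Var_{p_h}(\mathring{\overline V}_{h+1}^{k-1})+88H^2\zeta/n_h^t$.

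Next I convert the $\mathring{\overline V}$-variance into a $V^{\pi^k}$-variance. Feeding the above into $2\sqrt{W_h^t\zeta/n_h^t}$ and using $\sqrt{x+y}\le\sqrt x+\sqrt y$ peels off an additive $2\sqrt{88}\,H\zeta/n_h^t$; the variance-difference inequality (Lemma~\ref{Variance_inequality}), in the form $Var_{p_h}(\mathring{\overline V}_{h+1}^{k-1})\le 2Var_{p_h}(V_{h+1}^{\pi^k})+2Hp_h(\mathring{\overline V}_{h+1}^{k-1}-V_{h+1}^{\pi^k})$ already exploited in Lemma~\ref{all_Q_estimate}, then splits the remaining square root into a $2\sqrt{\tfrac{3}{n_h^t}\sum\chi Var_{p_h}(V^{\pi^k})\,\zeta/n_h^t}$ term and a term $2\sqrt{\tfrac{3H}{n_h^t}\sum\chi p_h(\mathring{\overline V}_{h+1}^{k-1}-V_{h+1}^{\pi^k})\,\zeta/n_h^t}$. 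The second of these I absorb by the AM-GM split $2\sqrt{uv}\le u+v$ with $u=\tfrac{1}{H\log(T)n_h^t}\sum\chi p_h(\mathring{\overline V}_{h+1}^{k-1}-V_{h+1}^{\pi^k})$, which produces exactly one copy of the feedback term $\tfrac{1}{H\log(T)n_h^t}\sum\chi p_h(\mathring{\overline V}_{h+1}^{k-1}-V_{h+1}^{\pi^k})$ plus a residual $3H^2\log(T)\zeta/n_h^t$.

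Finally I handle the correction summand of $\beta_h^t$. Applying the first inequality of Lemma~\ref{E_lemma} to swap $\mathring p_h^k$ for $p_h$ inside it bounds this summand by $(1+\tfrac{1}{4H\log T})\tfrac{1}{H\log(T)n_h^t}\sum\chi\mathring\gamma_h^k p_h(\mathring V_{h,s,a}^{k-1}-\mathring{\overline V}_{h+1}^{k-1})+14H^2\zeta/n_h^t$, and eq.~(\ref{part1}) (which uses $\mathring V_{h,s,a}^{k-1}\ge V^{\pi^k}_{h+1}$ together with the convex-combination identity (\ref{V_induction})) upgrades the $\mathring\gamma$-weighted gap into at most $\tfrac{1}{H\log(T)n_h^t}\sum\chi p_h(\mathring{\overline V}_{h+1}^{k-1}-V_{h+1}^{\pi^k})$. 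Collecting the feedback contributions gives coefficient $1+(1+\tfrac{1}{4H\log T})\le 3$ on that term, matching the statement, while the leftover deterministic terms $53H^3\log(T)$, $3H^2\log(T)$, $2\sqrt{88}\,H$ and $14H^2$ (all times $\zeta/n_h^t$) sum to at most $106H^3\log(T)\zeta/n_h^t$ after bounding each by a multiple of $H^3\log(T)$. The main obstacle I anticipate is precisely this bookkeeping: keeping the value-gap coefficient at or below $3/(H\log T)$ forces the AM-GM rescalings to be chosen just so, and verifying that all the $H$- and $\log(T)$-inflated constants still fit under the $106H^3\log(T)$ budget (using $H,\log T\ge 1$) is where the estimate is tightest.
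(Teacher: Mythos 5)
Your overall strategy mirrors the paper's: bound the variance proxy $W_h^t(s,a)$ from above, convert $Var_{p_h}(\mathring{\overline{V}}_{h+1}^{k-1})$ into $Var_{p_h}(V_{h+1}^{\pi^k})$ plus a value-gap term via Lemma \ref{Variance_inequality}, absorb the resulting square roots by AM--GM into the $\frac{1}{H\log(T)}$-weighted feedback term, and control the momentum-correction summand via Lemma \ref{E_lemma} and eq. (\ref{part1}). The degenerate case $n_h^t(s,a)=0$ and the final constant bookkeeping are also handled essentially as in the paper.

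However, there is one genuine gap, and it sits at the crux of your argument: the ``weighted Jensen inequality'' you invoke goes in the wrong direction. Writing $b_k = p_h\mathring{\overline{V}}_{h+1}^{k-1}(s,a)$ and $a_k = p_h(\mathring{\overline{V}}_{h+1}^{k-1})^2(s,a)$, your claimed bound
$$
\frac{1}{n_h^t}\sum_k \chi_h^k a_k - \Bigl(\frac{1}{n_h^t}\sum_k \chi_h^k b_k\Bigr)^2 \le \frac{1}{n_h^t}\sum_k \chi_h^k \bigl(a_k - b_k^2\bigr)
$$
is equivalent to $\frac{1}{n_h^t}\sum_k\chi_h^k b_k^2 \le \bigl(\frac{1}{n_h^t}\sum_k\chi_h^k b_k\bigr)^2$, which is the reverse of Jensen/Cauchy--Schwarz and fails whenever the $b_k$ are not all equal (they are not: $\overline{V}_{h+1}^{k-1}$ is non-increasing in $k$ by Lemma \ref{V_bound}). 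The quantity you are dropping is exactly the nonnegative across-episode empirical variance of the numbers $b_k$, which is genuinely present in $W_h^t(s,a)$ and must be controlled, not discarded with the wrong sign. The paper closes precisely this hole by a different device: it inserts $Var_{p_h}(V_{h+1}^*)(s,a)$, uses optimism ($\mathring{\overline{V}}_{h+1}^{k-1}\ge V_{h+1}^*$ on $\mathcal{E}$ under the Homeland condition, Lemma \ref{optimism_lemma}) to conclude $(p_hV_{h+1}^*(s,a))^2 - \bigl(\frac{1}{n_h^t}\sum_k\chi_h^k b_k\bigr)^2 \le 0$, and bounds $\frac{1}{n_h^t}\sum_k\chi_h^k p_h\bigl((\mathring{\overline{V}}_{h+1}^{k-1})^2-(V_{h+1}^*)^2\bigr)(s,a) \le \frac{2H}{n_h^t}\sum_k\chi_h^k p_h(\mathring{\overline{V}}_{h+1}^{k-1}-V_{h+1}^*)(s,a)$; this is how eq. (\ref{W_upper_bound}) acquires the coefficient $5H$ (rather than your $3H$) on the gap term. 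Without this optimism-based step or an equivalent replacement, your intermediate bound $W_h^t \le \frac{3}{2n_h^t}\sum_k\chi_h^k Var_{p_h}(\mathring{\overline{V}}_{h+1}^{k-1}) + 88H^2\zeta/n_h^t$ is unproven and the rest of the derivation does not go through.
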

\begin{proof}
If $n_h^t(s,a)=0$, the bound is trivially true because $\beta_h^t(s,a)=H$.
We now consider the case $n_h^t(s,a)>0$.
Using eq. (\ref{var_1}), eq. (\ref{var_2}) and $H \ge \mathring{\overline{V}}_h^k \ge V^*_h$, we obtain
\begin{align*}
W_h^t(s,a)&\le \frac{1}{n_h^t(s,a)} \sum_{k=t(s)}^t \chi_h^k(s,a) {p}_h(\mathring{\overline{V}}_{h+1}^{k-1})^2(s,a)\\
&\ \ \ \ -(\frac{1}{n_h^t(s,a)}\sum_{k=t(s)}^t \chi_h^k(s,a) {p}_h\mathring{\overline{V}}_{h+1}^{k-1}(s,a))^2\\
&\ \ \ \ +\frac{1}{2n_h^t(s,a)}\sum_{k=t(s)}^t \chi_h^k(s,a)Var_{p_h}(\mathring{\overline{V}}_{h+1}^{k-1})(s,a)+88H^2\frac{\zeta}{n_h^t(s,a)}\\
&=Var_{p_h}(V_{h+1}^*)(s,a)\\
&\ \ \ \ +\frac{1}{n_h^t(s,a)} \sum_{k=t(s)}^t \chi_h^k(s,a) {p}_h((\mathring{\overline{V}}_{h+1}^{k-1})^2-(V_{h+1}^*)^2)(s,a)\\
&\ \ \ \ +(p_hV_{H+1}^*(s,a))^2-(\frac{1}{n_h^t(s,a)}\sum_{k=t(s)}^t \chi_h^k(s,a) {p}_h\mathring{\overline{V}}_{h+1}^{k-1}(s,a))^2 \\
&\ \ \ \ +\frac{1}{2n_h^t(s,a)}\sum_{k=t(s)}^t \chi_h^k(s,a)Var_{p_h}(\mathring{\overline{V}}_{h+1}^{k-1})(s,a)+88H^2\frac{\zeta}{n_h^t(s,a)}\\
&\le Var_{p_h}(V_{h+1}^*)(s,a)  +\frac{1}{2n_h^t(s,a)}\sum_{k=t(s)}^t \chi_h^k(s,a)Var_{p_h}(\mathring{\overline{V}}_{h+1}^{k-1})(s,a)\\
&\ \ \ \ \frac{2H}{n_h^t(s,a)} \sum_{k=t(s)}^t \chi_h^k(s,a)p_h(\mathring{\overline{V}}_{h+1}^{k-1}-V^*_{h+1})(s,a) +88H^2\frac{\zeta}{n_h^t(s,a)}.
\end{align*}

Applying $H \ge \mathring{\overline{V}}_h^k \ge V_h^* \ge V_h^{\pi^{k+1}}$ and applying Lemma \ref{Variance_inequality} to the terms $Var_{p_h}(V^*_{h+1})(s,a)$ and $Var_{p_h}(\mathring{\overline{V}}_{h+1}^{k-1})(s,a)$, we obtain
\begin{align}
W_h^t(s,a)&\le \frac{3}{n_h^t(s,a)}\sum_{k=t(s)}^t \chi_h^k(s,a) Var_{p_h}(V_{h+1}^{\pi^k})(s,a) \nonumber\\
&\ \ \ \ +\frac{2H}{n_h^t(s,a)}\sum_{k=t(s)}^t \chi_h^k(s,a) p_h (V^*_{h+1}-V_{h+1}^{\pi^k})(s,a)\nonumber\\
&\ \ \ \ +\frac{H}{n_h^t(s,a)}\sum_{k=t(s)}^t \chi_h^k(s,a) p_h (\mathring{\overline{V}}_{h+1}^{k-1}-V_{h+1}^{\pi^k})(s,a)\nonumber\\
&\ \ \ \ +\frac{2H}{n_h^t(s,a)} \sum_{k=t(s)}^t \chi_h^k(s,a)p_h(\mathring{\overline{V}}_{h+1}^{k-1}-V^*_{h+1})(s,a) +88H^2\frac{\zeta}{n_h^t(s,a)}\nonumber\\
&\le \frac{3}{n_h^t(s,a)}\sum_{k=t(s)}^t \chi_h^k(s,a) Var_{p_h}(V_{h+1}^{\pi^k})(s,a)\nonumber\\
&\ \ \ \ +\frac{5H}{n_h^t(s,a)} \sum_{k=t(s)}^t \chi_h^k(s,a)p_h(\mathring{\overline{V}}_{h+1}^{k-1}-V^{\pi^k}_{h+1})(s,a) +88H^2\frac{\zeta}{n_h^t(s,a)}. \label{W_upper_bound}
\end{align}
Combine eq. (\ref{W_upper_bound}) with $\sqrt{x+y} \le \sqrt{x}+ \sqrt{y}$ and $\sqrt{xy} \le x+y$ we can upper-bound the variance term of the bonus
\begin{align}
&\ \ \ \ 2\sqrt{W_h^t(s,a)\frac{\zeta}{n_h^t(s,a)}} \nonumber\\
&\le 2 \sqrt{\frac{3}{n_h^t(s,a)}\sum_{k=t(s)}^t \chi_h^k(s,a) Var_{p_h}(V_{h+1}^{\pi^k})(s,a) \frac{\zeta}{n_h^t(s,a)}} 
\nonumber\\
&\ \ \ \ +\sqrt{\frac{20H}{n_h^t(s,a)} \sum_{k=t(s)}^t \chi_h^k(s,a)p_h(\mathring{\overline{V}}_{h+1}^{k-1}-V^{\pi^k}_{h+1})(s,a) \frac{\zeta}{n_h^t(s,a)}}\nonumber\\
&\ \ \ \ + 2\sqrt{88}H\frac{\zeta}{n_h^t(s,a)}\nonumber\\
&\le 2 \sqrt{\frac{3}{n_h^t(s,a)}\sum_{k=t(s)}^t \chi_h^k(s,a) Var_{p_h}(V_{h+1}^{\pi^k})(s,a) \frac{\zeta}{n_h^t(s,a)}}\nonumber\\
&\ \ \ \ +\frac{1}{Hlog(T)n_h^t(s,a)}\sum_{k=t(s)}^t \chi_h^k(s,a)p_h(\mathring{\overline{V}}_{h+1}^{k-1}-V^{\pi^k}_{h+1})(s,a)+ \frac{20H^2\zeta}{n_h^t(s,a)}\nonumber\\
&\ \ \ \ + 19H\frac{\zeta}{n_h^t(s,a)}\nonumber\\
&\le 2 \sqrt{\frac{3}{n_h^t(s,a)}\sum_{k=t(s)}^t \chi_h^k(s,a) Var_{p_h}(V_{h+1}^{\pi^k})(s,a) \frac{\zeta}{n_h^t(s,a)}}\nonumber\\
&\ \ \ \ +\frac{1}{Hlog(T)n_h^t(s,a)}\sum_{k=t(s)}^t \chi_h^k(s,a)p_h(\mathring{\overline{V}}_{h+1}^{k-1}-V^{\pi^k}_{h+1})(s,a)\nonumber\\
&\ \ \ \ + \frac{39H^2\zeta}{n_h^t(s,a)}. \label{beta_part1}
\end{align}

Applying Lemma \ref{E_lemma} and eq. (\ref{part1}), we have
\begin{align}
&\ \ \ \ \frac{1}{Hlog(T)n_h^t(s,a)}\sum_{k=t(s)}^t \chi_h^k(s,a) \mathring{\gamma}_h^k(s,a) \mathring{p}_h^k(\mathring{V}_{h,s,a}^{k-1}-\mathring{\overline{V}}_{h+1}^{k-1})(s,a)\nonumber\\
&\le 14H^3 \frac{log(T)\zeta}{n_h^t(s,a)}\nonumber\\
&\ \ \ \ +\frac{5}{4Hlog(T)n_h^t(s,a)}\sum_{k=t(s)}^t \chi_h^k(s,a) \mathring{\gamma}_h^k(s,a) p_h(\mathring{V}_{h,s,a}^{k-1}-\mathring{\overline{V}}_{h+1}^{k-1})(s,a)\nonumber\\
&\le 14H^3 \frac{log(T)\zeta}{n_h^t(s,a)}+\frac{5}{4Hlog(T)n_h^t(s,a)}\sum_{k=t(s)}^t \chi_h^k(s,a) p_h(\mathring{\overline{V}}_{h+1}^{k-1}-V_{h+1}^{\pi^k})(s,a). \label{beta_part2}
\end{align}

Combining eq. (\ref{beta_part1}) and eq. (\ref{beta_part2}) we obtain the result of this lemma.
\end{proof}

Now we are ready to prove Theorem \ref{theorem_1}.

\begin{proof}[Proof of Theorem \ref{theorem_1}]
We assume the event $\mathcal{D}$ and Homeland condition in Section \ref{sec:EMDP-GA}.

% \jt{?}
From Lemma \ref{P(D)}, we have that $\mathcal{D}$ and Homeland condition in Section \ref{sec:EMDP-GA} hold with probability at least $1-\delta$.
Fix $(s,a,h) \in \mathcal{S}\times \mathcal{A} \times [H]$ and $t \in [T]$.

\textbf{Step 1: Upper-bound $(\mathring{\overline{Q}}_h^t-Q_h^{\pi^{t+1}})(s,a)$}\ \ \ \ Applying Lemmas \ref{all_Q_estimate} and \ref{beta_upper_bound}, we get
\begin{align}
(\mathring{\overline{Q}}_h^t-Q_h^{\pi^{t+1}})(s,a) \le& p_h(\mathring{V}^t_{h,s,a}-V_{h+1}^{\pi^{t+1}})(s,a)+130H^3\frac{log(T)\zeta}{\tilde{n}_h^t(s,a)} \nonumber\\
&+\sqrt{\frac{30}{\tilde{n}_h^t(s,a)}\sum_{k=t(s)}^t\chi_h^k(s,a)Var_{p_h}(V_{h+1}^{\pi^k})(s,a)\frac{\zeta}{\tilde{n}_h^t(s,a)}}\nonumber\\
&+\frac{5}{Hlog(T)\tilde{n}_h^t(s,a)}\sum_{k=t(s)}^t\chi_h^k(s,a)p_h(\mathring{\overline{V}}_{h+1}^{k-1}-V_{h+1}^{\pi^k})(s,a).\label{local_regret_term_bound}
\end{align}
\textbf{Step 2: Upper-bound the local optimistic regret $\hat{R}_h^T(s,a)$}\ \ \ \ We define
\begin{align*}
\hat{R}_h^T(s,a) &\triangleq \sum_{t=0}^{T-1} \chi_h^{t+1}(s,a)(\mathring{\overline{Q}}_h^t-Q_h^{\pi^{t+1}})(s,a)\\
&=\sum_{t=t(s)-1}^{T-1} \chi_h^{t+1}(s,a)(\mathring{\overline{Q}}_h^t-Q_h^{\pi^{t+1}})(s,a).
\end{align*}

Based on this, eq. (\ref{local_regret_term_bound}) yields the follows,
\begin{align*}
&\ \ \ \ \hat{R}_h^T(s,a) \\
&\le \sum_{t=t(s)-1}^{T-1} \chi_h^{t+1}(s,a)p_h(\mathring{V}^t_{h,s,a}-V_{h+1}^{\pi^{t+1}})(s,a)+130H^3\sum_{t=t(s)-1}^{T-1} \chi_h^{t+1}(s,a)\frac{log(T)\zeta}{\tilde{n}_h^t(s,a)} \\
&+\sum_{t=t(s)-1}^{T-1} \chi_h^{t+1}(s,a)\sqrt{\frac{30}{\tilde{n}_h^t(s,a)}\sum_{k=t(s)}^t\chi_h^k(s,a)Var_{p_h}(V_{h+1}^{\pi^k})(s,a)\frac{\zeta}{\tilde{n}_h^t(s,a)}}\\
&+\sum_{t=t(s)-1}^{T-1} \chi_h^{t+1}(s,a)\frac{5}{Hlog(T)\tilde{n}_h^t(s,a)}\sum_{k=t(s)}^t\chi_h^k(s,a)p_h(\mathring{\overline{V}}_{h+1}^{k-1}-V_{h+1}^{\pi^k})(s,a).
\end{align*}

For the first term, we have the decomposition
\begin{align}
p_h(\mathring{V}^t_{h,s,a}-V_{h+1}^{\pi^{t+1}})(s,a)=p_h(\mathring{V}^t_{h,s,a}-V^*_{h+1})(s,a)+p_h(V^*_{h+1}-V_{h+1}^{\pi^{t+1}})(s,a).\label{term1_decomposition}
\end{align}

Then, applying eq. (\ref{V_induction_2}) and Lemma \ref{tilde_eta_results}, we obtain
\begin{align}
&\ \ \ \ \sum_{t=t(s)-1}^{T-1} \chi_h^{t+1}(s,a)p_h(\mathring{V}^t_{h,s,a}-V_{h+1}^*)(s,a)\nonumber\\
&=\sum_{t=t(s)-1}^{T-1} \chi_h^{t+1}(s,a) \mathbb{I}_{\{n_h^t(s,a)=0\}}p_h(\mathring{V}^t_{h,s,a}-V_{h+1}^*)(s,a)\nonumber\\
&\ \ \ \ +\sum_{t=t(s)-1}^{T-1} \chi_h^{t+1}(s,a) \mathbb{I}_{\{n_h^t(s,a)>0\}}\sum_{k=t(s)}^{t}p_h(\mathring{\overline{V}}^{k-1}_{h+1}-V_{h+1}^*)(s,a)\nonumber\\
&\le H+\sum_{k=t(s)}^{T-1} (\sum_{t=k}^{T-1}\chi_h^{t+1}(s,a) \tilde{\eta}_h^{t,k}(s,a))p_h(\mathring{\overline{V}}^{k-1}_{h+1}-V_{h+1}^*)(s,a)\nonumber\\
&\le H+(1+\frac{1}{H})\sum_{t=t(s)-1}^{T-1}\chi_h^{t+1}(s,a)p_h(\mathring{\overline{V}}_{h+1}^{t-1}-V^*_{h+1})(s,a).\label{term1_upper_bound}
\end{align}

Combining eq. (\ref{term1_decomposition}), eq. (\ref{term1_upper_bound}), and $V^*_{h+1} \ge V_{h+1}^{\pi^{k+1}}$, we get
\begin{align}
&\ \ \ \ \sum_{t=0}^{T-1} \chi_h^{t+1}(s,a)p_h(\mathring{V}^t_{h,s,a}-V_{h+1}^{\pi^{t+1}})(s,a)  \nonumber\\
&\le \sum_{t=0}^{T-1} \chi_h^{t+1}(s,a)p_h(V^*_{h+1}-V_{h+1}^{\pi^{t+1}})(s,a)\nonumber\\
&\ \ \ \ +H+(1+\frac{1}{H})\sum_{t=0}^{T-1}\chi_h^{t+1}(s,a)p_h(\mathring{\overline{V}}_{h+1}^{t-1}-V^*_{h+1})(s,a)\nonumber\\
&\le H+(1+\frac{1}{H})\sum_{t=0}^{T-1}\chi_h^{t+1}(s,a)p_h(\mathring{\overline{V}}_{h+1}^{t-1}-V_{h+1}^{\pi^{t+1}})(s,a). \label{term1_bound}
\end{align}

For the fourth term, we can proceed in a similar way but using Lemma \ref{correction_term_bound} to get
\begin{align*}
&\ \ \ \ \sum_{t=0}^{T-1}\frac{\chi_h^{t+1}(s,a)}{\tilde{n}_h^t(s,a)} \sum_{k=t(s)}^t \chi_h^k(s,a) p_h(\mathring{\overline{V}}_{h+1}^{k-1}-V_{h+1}^{\pi^k})(s,a)\\
&\le \sum_{k=t(s)}^{T-1} (\sum_{t=k}^{T-1} \frac{\chi_h^{t+1}(s,a)}{\tilde{n}_h^t(s,a)}) \chi_h^k(s,a)p_h(\mathring{\overline{V}}_{h+1}^{k-1}-V_{h+1}^{\pi^k})(s,a)\\
& \le 8log(T) \sum_{k=1}^{T-1} \chi_h^k(s,a) p_h(\mathring{\overline{V}}_{h+1}^{k-1}-V_{h+1}^{\pi^k})(s,a).
\end{align*}

Thus, we have
\begin{align}
\ \ \ \ &\sum_{t=t(s)-1}^{T-1}\frac{5\chi_h^{t+1}(s,a)}{Hlog(T)\tilde{n}_h^t(s,a)} \sum_{k=t(s)}^t \chi_h^k(s,a) p_h(\mathring{\overline{V}}_{h+1}^{k-1}-V_{h+1}^{\pi^k})(s,a) \nonumber\\
&\le \frac{40}{H} \sum_{t=t(s)-1}^{T-1}\chi_h^{t+1}(s,a) p_h(\mathring{\overline{V}}_{h+1}^t-V_{h+1}^{\pi^{t+1}})(s,a). \label{term4_bound}
\end{align}

For the third term, combining Cauchy-Schwarz inequality, Lemmas \ref{correction_term_bound} and \ref{weight_lemma}, we get
\begin{align}
&\ \ \ \ \sum_{t=t(s)-1}^{T-1} \chi_h^{t+1}(s,a)\sqrt{\frac{30}{\tilde{n}_h^t(s,a)}\sum_{k=t(s)}^t\chi_h^k(s,a)Var_{p_h}(V_{h+1}^{\pi^k})(s,a)\frac{\zeta}{\tilde{n}_h^t(s,a)}} \nonumber\\
&\le \sqrt{30\sum_{t=t(s)-1}^{T-1}\frac{\chi_h^{t+1}(s,a)}{\tilde{n}_h^t(s,a)}\sum_{k=t(s)}^t Var_{p_h}(V_{h+1}^{\pi^k})(s,a)} \sqrt{\sum_{t=t(s)-1}^{T-1}\chi_h^{t+1}(s,a) \frac{\zeta}{\tilde{n}_h^t(s,a)}}\nonumber\\
&\le44log(T) \sqrt{\zeta \sum_{t=t(s)-1}^{T-1} \chi_h^{t+1}(s,a)Var_{p_h}(V_{h+1}^{\pi^{t+1}})(s,a)}.\label{term3_bound}
\end{align}

For the second term, we use Lemma \ref{weight_lemma} to get
\begin{align}
130H^3\sum_{t=t(s)-1}^{T-1} \chi_h^{t+1}(s,a)\frac{log(T)\zeta}{\tilde{n}_h^t(s,a)} \le 1040H^3log(T)^2\zeta. \label{term2_bound}
\end{align}

Finally, we combine eq. (\ref{term1_bound}), eq. (\ref{term4_bound}), eq. (\ref{term3_bound}) and eq. (\ref{term2_bound}) to conclude
\begin{align}
&\ \ \ \ \hat{R}_h^T(s,a) \nonumber\\
&\le 44log(T)\sqrt{\zeta \sum_{t=t(s)-1}^{T-1} \chi_h^{t+1}(s,a)Var_{p_h}(V_{h+1}^{\pi^{t+1}})(s,a)}+1041H^3log(T)^2\zeta \nonumber\\
&+(1+\frac{41}{H}) \sum_{t=t(s)-1}^{T-1}\chi_h^{t+1}(s,a) p_h(\mathring{\overline{V}}_{h+1}^t-V_{h+1}^{\pi^{t+1}})(s,a).\label{step2_result}
\end{align}

\textbf{Step 3: Replace $\chi_h^t$ with $\overline{p}_h^t$ in the upper-bound on $\hat{R}_h^T(s,a)$}\ \ \ \ Since $\mathcal{G}$ holds, we have
\begin{align*}
&\ \ \ \ \ \sqrt{\sum_{t=t(s)-1}^{T-1}\chi_h^{t+1}(s,a)Var_{p_h}(V_{h+1}^{\pi^{t+1}})(s,a)} \\
&\le \sqrt{2\sum_{t=t(s)-1}^{T-1}\overline{p}_h^{t+1}(s,a)Var_{p_h}(V_{h+1}^{\pi^{t+1}})(s,a)} +\sqrt{8\zeta}H
\end{align*}
and
\begin{align*}
&\ \ \ \ \sum_{t=t(s)-1}^{T-1}\chi_h^{t+1}(s,a)p_h(\mathring{\overline{V}}_{h+1}^t-V_{h+1}^{\pi^{t+1}})(s,a) \\
&\le (1+\frac{1}{H})\sum_{t=t(s)-1}^{T-1} \overline{p}_h^{t+1}(s,a)p_h(\mathring{\overline{V}}_{h+1}^t-V_{h+1}^{\pi^{t+1}})(s,a)+14H^2\zeta.
\end{align*}

Plugging the two inequalities in eq. (\ref{step2_result})
yields
\begin{align}
\hat{R}_h^T(s,a) \le &63log(T)\sqrt{\zeta \sum_{t=t(s)-1}^{T-1} \overline{p}_h^{t+1}(s,a)Var_{p_h}(V_{h+1}^{\pi^{t+1}})(s,a)} \nonumber\\
&+1754H^3log(T)^2\zeta\nonumber\\
&+(1+\frac{83}{H}) \sum_{t=t(s)-1}^{T-1}\overline{p}_h^{t+1}(s,a) p_h(\mathring{\overline{V}}_{h+1}^t-V_{h+1}^{\pi^{t+1}})(s,a).\label{local_regret_result2}
\end{align}

\textbf{Step 4: Upper-bound the step $h$ regret $\hat{R}_h^T$}\ \ \ \ We define
$$
\hat{R}_h^T \triangleq \sum_{s\in S}\sum_{t= max\{t(s)-1, 0\}}^{T-1}\overline{p}_h^{t+1}(s)(\mathring{\overline{V}}_h^{t}-V_h^{\pi^{t+1}})(s).
$$
We use the definition of $\mathring{\overline{V}}_h^k(s)$ and the facts that $\mathcal{D}$ holds, $\mathring{\overline{Q}}_h^k \ge 0$ on $\mathcal{D}$ and  that for all $x>1$ the following holds $\frac{1}{1-\frac{1}{4x}} \le 1+\frac{1}{x}$ to get
\begin{align*}
&\ \ \ \ \sum_{max\{t(s)-1, 0\}}^{T-1}\overline{p}_h^{t+1}(s)(\mathring{\overline{V}}_h^{t}-V_h^{\pi^{t+1}})(s)\\
% &=\sum_{t=0}^{(t(s)-2) \vee 0}\overline{p}_h^{t+1}(s)(\mathring{\overline{V}}_h^{t}-V_h^{\pi^{t+1}})(s)+\sum_{t=t(s)-1}^{T-1}\overline{p}_h^{t+1}(s)(\mathring{\overline{V}}_h^{t}-V_h^{\pi^{t+1}})(s)\\
&\le\frac{1}{1-\frac{1}{4H}}\sum_{t=max\{t(s)-1, 0\}}^{T-1}\chi_h^{t+1}(s)(\mathring{\overline{V}}_h^{t}-V_h^{\pi^{t+1}})(s)+\frac{56}{3}H^2\zeta\\
&\le (1+\frac{1}{H})\sum_{t=max\{t(s)-1, 0\}}^{T-1}\chi_h^{t+1}(s)(\mathring{\overline{V}}_h^{t}-V_h^{\pi^{t+1}})(s)+19H^2\zeta\\
&\le (1+\frac{1}{H})\sum_{t=max\{t(s)-1, 0\}}^{T-1}\chi_h^{t+1}(s)\pi_h^{t+1}(\mathring{\overline{Q}}_h^{t}-Q_h^{\pi^{t+1}})(s)+19H^2\zeta.
\end{align*}

Using Cauchy-Schwarz inequality,eq. (\ref{local_regret_result2}) and the fact that the policy $\pi^t$ is deterministic, we obtain
\begin{align}
\hat{R}_h^T&\le (1+\frac{1}{H}) \sum_{s\in S}\sum_{t=max\{t(s)-1, 0\}}^{T-1}\chi_h^{t+1}(s)\pi_h^{t+1}(\mathring{\overline{Q}}_h^{t}-Q_h^{\pi^{t+1}})(s)+19H^2S\zeta \nonumber\\
&=(1+\frac{1}{H}) \sum_{(s,a)\in S\times A}\sum_{t=max\{t(s)-1, 0\}}^{T-1}\chi_h^{t+1}(s,a)(\mathring{\overline{Q}}_h^{t}-Q_h^{\pi^{t+1}})(s,a)+19H^2S\zeta\nonumber\\
&=(1+\frac{1}{H})\sum_{(s,a)\in S\times A} \hat{R}_h^T(s,a)+19H^2S\zeta\nonumber\\
&\le 126log(T)\sqrt{\zeta}\sum_{(s,a)\in S\times A}\sqrt{\sum_{t=max\{t(s)-1, 0\}}^{T-1}\overline{p}_h^{t+1}(s,a)Var_{p_h}(V_{h+1}^{\pi^{t+1}})(s,a)}\nonumber\\
&\ \ \ \ +(1+\frac{167}{H})\sum_{(s,a)\in S\times A} \sum_{t=max\{t(s)-1, 0\}}^{T-1}\overline{p}_h^{t+1}(s,a) p_h(\mathring{\overline{V}}_{h+1}^t-V_{h+1}^{\pi^{t+1}})(s,a)\nonumber\\
&\ \ \ \ +3527SAH^3log(T)^2\zeta\nonumber\\
&\le HS\sqrt{T}+3527SAH^3log(T)^2\zeta +(1+\frac{167}{H})\hat{R}_{h+1}^T\nonumber\\
&\ \ \ \ +126log(T)\sqrt{\zeta SA \sum_{(s,a)\in S\times A}\sum_{t=max\{t(s)-1, 0\}}^{t-1}\overline{p}_h^{t+1}(s,a)Var_{p_h}(V_{h+1}^{\pi^{t+1}})(s,a)}.\label{step_h_bound}
\end{align}

For the last inequality, we have
\begin{align*}
&\ \ \ \ \sum_{(s,a)\in S\times A} \sum_{t=max\{t(s)-1, 0\}}^{T-1}\overline{p}_h^{t+1}(s,a) p_h(\mathring{\overline{V}}_{h+1}^t-V_{h+1}^{\pi^{t+1}})(s,a)\\
&=\sum_{(s,a,s')\in S\times A \times S} \sum_{t=max\{t(s)-1, 0\}}^{T-1}\overline{p}_h^{t+1}(s,a) p_h(s'|s,a)(\mathring{\overline{V}}_{h+1}^t-V_{h+1}^{\pi^{t+1}})(s')\\
&\le \sum_{s'\in S}\sum_{(s,a)\in S\times A} \sum_{t=0}^{T-1}\overline{p}_h^{t+1}(s,a) p_h(s'|s,a)(\mathring{\overline{V}}_{h+1}^t-V_{h+1}^{\pi^{t+1}})(s')\\
&= \sum_{s'\in S} \sum_{t=0}^{T-1} \overline{p}_{h+1}^{t+1}(s')(\mathring{\overline{V}}_{h+1}^t-V_{h+1}^{\pi^{t+1}})(s')=\hat{R}_{h+1}^T.
\end{align*}

\textbf{Step 5: Upper-bound the regret $\mathcal{R}^T$}\ \ \ \ Unfolding eq. (\ref{step_h_bound}), using Cauchy-Schwarz inequality, Lemma \ref{law_of_total_variance} and the fact that $\hat{R}^T_{H+1}=0$, we get
\begin{align*}
\hat{R}_1^T &\le\sum_{h=1}^H(1+\frac{167}{H})^{H-h}\\
&\ \ \ \ \cdot[126log(T)\sqrt{\zeta SA \sum_{(s,a)\in S\times A}\sum_{t=t(s)-1}^{t-1}\overline{p}_h^{t+1}(s,a)Var_{p_h}(V_{h+1}^{\pi^{t+1}})(s,a)}\\
&\ \ \ \ +3527SAH^3log(T)^2\zeta]\\
&\le e^{167}126log(T)\sqrt{\zeta SAH \sum_{(s,a,h)\in S\times A \times [H]}\sum_{t=0}^{t-1}\overline{p}_h^{t+1}(s,a)Var_{p_h}(V_{h+1}^{\pi^{t+1}})(s,a)}\\
&\ \ \ \ +3527e^{167}SAH^4log(T)^2\zeta\\
&=e^{167}126log(T)\sqrt{\zeta SAH\sum_{t=0}^{T-1} \mathbb{E}_{\pi^{t+1}}[(\sum_{h=1}^Hr_h(s_h,a_h)-V_1^{\pi^{t+1}}(s_1))^2]}\\
&\ \ \ \ +3527e^{167}SAH^4log(T)^2\zeta\\
&\le \tilde{\mathcal{O}}(\sqrt{\zeta H^3SAT})+\tilde{\mathcal{O}}(\zeta H^4SA).
\end{align*}

Applying Lemma \ref{optimism_lemma}, we have
$$
V^*_1(s_1)-V_h^{\pi^{t+1}}(s_1) \le \mathring{\overline{V}}^t_1(s_1)-V_h^{\pi^{t+1}}(s_1).
$$

This allow us to conclude
$$
\mathcal{R}^T \le \hat{R}_1^T \le \tilde{\mathcal{O}}(\sqrt{\zeta H^3SAT})+\tilde{\mathcal{O}}(\zeta H^4SA)+\tilde{\mathcal{O}}(H^2S\sqrt{T}).
$$
\end{proof}

\section{Preparation Lemmas}
The proof of the following results can be seen in M\'{e}nard et al.\cite{menard2021ucb}.
For a deterministic policy $\pi$, we define Bellman-type equations for the variances as follows
$$
\sigma Q_h^{\pi}(s,a) \triangleq Var_{p_h}V_{h+1}^{\pi}(s,a)+p_h \sigma V_{h+1}^{\pi}(s,a)
$$
$$
\sigma V_h^{\pi}(s) \triangleq \sigma Q_h^{\pi}(s,\pi(s))
$$
$$
\sigma V_{H+1}^{\pi}(s) \triangleq 0,
$$
where $Var_{p_h}(f)(s,a) \triangleq\mathbb{E}_{s' \sim p_h(\cdot|s,a)}[(f(s')-p_hf(s,a))^2]$.

The definition above indicates that
$$
\sigma V_1^{\pi}(s_1)=\sum_{h=1}^H \sum_{s,a} p_h^{\pi}(s,a)Var_{p_h}(V_{h+1}^{\pi})(s,a).
$$

We have the following lemmas.

\begin{lemma}
\label{law_of_total_variance}
For any deterministic policy $\pi$ and for all $h\in [H]$,
$$
\mathbb{E}_{\pi}[(\sum_{h'=h}^Hr_{h'}(s_{h'},a_{h'})-Q_h(s_h,a_h))^2|(s_h,a_h)=(s,a)]=\sigma Q_h^{\pi}(s,a).
$$
Particularly,
$$
\mathbb{E}_{\pi}[(\sum_{h=1}^Hr_h(s_h,a_h)-V_1^{\pi}(s_1))^2]=\sigma V_1^{\pi}(s_1)=\sum_{h=1}^H \sum_{s,a}p_h^{\pi}(s,a)Var_{p_h}(V_{h+1}^{\pi})(s,a).
$$
\end{lemma}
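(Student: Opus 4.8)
The plan is to recognise the left-hand side as a conditional variance and then establish the Bellman-type identity by backward induction on $h$. Since the policy $\pi$ is deterministic, $Q_h^\pi(s,a)=\mathbb{E}_\pi[\sum_{h'=h}^H r_{h'}(s_{h'},a_{h'})\mid (s_h,a_h)=(s,a)]$ is exactly the conditional mean of the reward-to-go $G_h \triangleq \sum_{h'=h}^H r_{h'}(s_{h'},a_{h'})$, so the quantity to be computed is precisely $\mathrm{Var}_\pi[G_h \mid (s_h,a_h)=(s,a)]$. I would prove $\mathrm{Var}_\pi[G_h\mid(s_h,a_h)=(s,a)]=\sigma Q_h^\pi(s,a)$ for every $(s,a)$, descending from $h=H$ to $h=1$.

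For the base case $h=H$, conditioned on $(s_H,a_H)=(s,a)$ the reward-to-go equals the deterministic quantity $r_H(s,a)$, so its conditional variance is $0$; this matches $\sigma Q_H^\pi(s,a)=Var_{p_H}(V_{H+1}^\pi)(s,a)+p_H\sigma V_{H+1}^\pi(s,a)=0$, since $V_{H+1}^\pi\equiv 0$ and $\sigma V_{H+1}^\pi\equiv 0$. For the inductive step, assume the claim at $h+1$. Writing $G_h=r_h(s_h,a_h)+G_{h+1}$ and conditioning on $(s_h,a_h)=(s,a)$ makes $r_h(s,a)$ a constant, so $\mathrm{Var}_\pi[G_h\mid(s_h,a_h)=(s,a)]=\mathrm{Var}_\pi[G_{h+1}\mid(s_h,a_h)=(s,a)]$. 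I would then apply the law of total variance, introducing the intermediate state $s_{h+1}$ as the conditioning variable:
\begin{align*}
\mathrm{Var}_\pi[G_{h+1}\mid(s_h,a_h)=(s,a)]
&=\mathbb{E}\!\left[\mathrm{Var}_\pi[G_{h+1}\mid s_{h+1}]\,\middle|\,(s_h,a_h)=(s,a)\right]\\
&\quad+\mathrm{Var}\!\left(\mathbb{E}_\pi[G_{h+1}\mid s_{h+1}]\,\middle|\,(s_h,a_h)=(s,a)\right).
\end{align*}
Here the crucial structural facts are the Markov property, which makes $G_{h+1}$ conditionally independent of $(s_h,a_h)$ given $s_{h+1}$, and determinism of $\pi$, which means $a_{h+1}=\pi_{h+1}(s_{h+1})$ is fixed by $s_{h+1}$; together these give $\mathbb{E}_\pi[G_{h+1}\mid s_{h+1}=s']=V_{h+1}^\pi(s')$ and, by the induction hypothesis applied at $(s',\pi_{h+1}(s'))$, $\mathrm{Var}_\pi[G_{h+1}\mid s_{h+1}=s']=\sigma Q_{h+1}^\pi(s',\pi_{h+1}(s'))=\sigma V_{h+1}^\pi(s')$. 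Since $s_{h+1}\sim p_h(\cdot\mid s,a)$ under the conditioning, the first term becomes $p_h\sigma V_{h+1}^\pi(s,a)$ and the second becomes $Var_{p_h}(V_{h+1}^\pi)(s,a)$; their sum is exactly $\sigma Q_h^\pi(s,a)$, closing the induction.

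The two displayed conclusions then follow readily. Taking $(s,a)=(s_1,\pi_1(s_1))$ in the established identity and using $Q_1^\pi(s_1,\pi_1(s_1))=V_1^\pi(s_1)$ together with $\sigma Q_1^\pi(s_1,\pi_1(s_1))=\sigma V_1^\pi(s_1)$ yields the first ``particularly'' equality; the second is obtained by unrolling the recursion $\sigma V_h^\pi=\sigma Q_h^\pi(\cdot,\pi_h(\cdot))=Var_{p_h}(V_{h+1}^\pi)+p_h\sigma V_{h+1}^\pi$ down from $h=1$ and re-indexing the nested transition expectations as the occupancy weights $p_h^\pi(s,a)$, exactly as already noted after the definition of $\sigma V_1^\pi$. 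I expect the only delicate point to be the careful handling of the conditional independence in the law-of-total-variance step---specifically arguing that conditioning on $s_{h+1}$ suffices rather than on the full history, and that determinism collapses $\sigma Q_{h+1}^\pi(s',\pi_{h+1}(s'))$ to $\sigma V_{h+1}^\pi(s')$---while the remaining manipulations are routine.
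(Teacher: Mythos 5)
Your proof is correct and is essentially the standard argument: the paper does not prove this lemma itself but defers to M\'enard et al., where the identity is established by exactly the backward induction you describe, applying the law of total variance over $s_{h+1}$ and using the Markov property together with determinism of $\pi$ to collapse $\sigma Q_{h+1}^{\pi}(s',\pi_{h+1}(s'))$ to $\sigma V_{h+1}^{\pi}(s')$. No gaps; the base case, the conditional-independence step, and the unrolling to the occupancy-weighted sum are all handled correctly.
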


\begin{lemma}
\label{weight_lemma}
For $T \in \mathbb{N}^*$ and $(u_t)_{t \in \mathbb{N}^*}$, for a sequence where $u_t \in [0,1]$ and $U_t \triangleq \sum_{l=1}^t u_l$, we get
$$
\sum_{t=1}^T \frac{u_{t+1}}{U_t \vee1} \le 4log(U_{T+1}+1).
$$
In particular if $T+1 \ge 2$,
$$
\sum_{t=1}^T \frac{u_{t+1}}{U_t \vee1} \le 8log(T+1).
$$
\end{lemma}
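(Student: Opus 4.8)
The plan is to bound each summand by a telescoping logarithmic increment and then sum. Write $U_t \vee 1 = \max\{U_t, 1\}$ and note first the elementary comparison $U_t \vee 1 \ge \tfrac{1}{2}(U_t+1)$: if $U_t \ge 1$ then $U_t \ge \tfrac{1}{2}(U_t+1)$, while if $U_t < 1$ then $1 \ge \tfrac{1}{2}(U_t+1)$. Consequently
$$
\frac{u_{t+1}}{U_t \vee 1} \le \frac{2u_{t+1}}{U_t+1}.
$$

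Next I would convert the right-hand side into a log-increment. Since $u_{t+1} \in [0,1]$ and $U_t+1 \ge 1$, the ratio $x_t \triangleq u_{t+1}/(U_t+1)$ lies in $[0,1]$, so the bound $\log(1+x) \ge x\log 2$ on $[0,1]$ applies (the difference $\log(1+x)-x\log 2$ vanishes at $x=0$ and $x=1$ and is concave in between, hence nonnegative on the interval). With $U_{t+1}=U_t+u_{t+1}$ this gives
$$
\frac{u_{t+1}}{U_t+1} \le \frac{1}{\log 2}\log\!\left(1+\frac{u_{t+1}}{U_t+1}\right) = \frac{1}{\log 2}\bigl(\log(U_{t+1}+1)-\log(U_t+1)\bigr).
$$

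Summing over $t=1,\dots,T$ the right-hand side telescopes, and since $U_1+1 \ge 1$ implies $\log(U_1+1)\ge 0$,
$$
\sum_{t=1}^T \frac{u_{t+1}}{U_t \vee 1} \le \frac{2}{\log 2}\bigl(\log(U_{T+1}+1)-\log(U_1+1)\bigr) \le \frac{2}{\log 2}\log(U_{T+1}+1) \le 4\log(U_{T+1}+1),
$$
where the last step uses $2/\log 2 \approx 2.885 < 4$ for the natural logarithm; this is the first claim. For the second, each $u_l \le 1$ gives $U_{T+1} \le T+1$, hence $U_{T+1}+1 \le T+2 \le (T+1)^2$ whenever $T \ge 1$, so $\log(U_{T+1}+1) \le 2\log(T+1)$ and the bound $8\log(T+1)$ follows.

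I expect no serious obstacle here: the only points requiring care are verifying $x_t \in [0,1]$ (so the auxiliary inequality $\log(1+x)\ge x\log 2$ is legitimate) and tracking the constants through the $\vee 1$ comparison, both of which are routine once the telescoping idea is in place.
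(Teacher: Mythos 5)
Your proof is correct: the comparison $U_t\vee 1\ge\tfrac12(U_t+1)$, the bound $\log(1+x)\ge x\log 2$ on $[0,1]$, and the telescoping sum all check out, and the constant $2/\log 2<4$ closes the first claim while $U_{T+1}+1\le T+2\le (T+1)^2$ gives the second. The paper itself gives no proof (it defers to M\'enard et al.), and the argument there is essentially this same telescoping-logarithm device, so your write-up matches the intended approach and is self-contained.
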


\begin{lemma}
\label{tilde_eta_results}
For all $(s,a) \in S \times A$ the following holds
$$
\sum_{k=l}^t\chi_h^{k+1}(s,a)\tilde{\eta}_h^{k,l}(s,a) \le (1+\frac{1}{H})\chi_h^l(s,a),
$$
$$
\sum_{k=t(s)}^t \tilde{\eta}_h^{t,k}(s,a)=1\ \ \ \ \text{if $n_h^t(s,a)>0$}.
$$
\end{lemma}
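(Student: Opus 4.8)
The plan is to reduce both identities to the explicit, visit-indexed form of the effective learning rate $\eta_h^t(s,a)=\alpha_h^t(s,a)+\gamma_h^t(s,a)$ and then run a short telescoping argument. The first thing I would record is that $\eta_h^t(s,a)=0$ whenever $\chi_h^t(s,a)=0$, since both $\alpha_h^t$ and $\gamma_h^t$ carry a factor $\chi_h^t(s,a)$. Hence every product and sum collapses onto the subsequence of episodes at which $(s,a)$ is genuinely visited at step $h$. Writing $\tau_1<\tau_2<\cdots$ for those episodes (so $n_h^{\tau_i}(s,a)=i$) and $e_i:=\eta_h^{\tau_i}(s,a)$, a direct simplification yields the crucial identities
$$
e_i=\frac{1}{i}+\frac{H}{H+i}\cdot\frac{i-1}{i}=\frac{H+1}{H+i},\qquad 1-e_i=\frac{i-1}{H+i}.
$$
Once the momentum-corrected rate is shown to collapse to the standard form $\tfrac{H+1}{H+i}$, the remainder is bookkeeping.

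For the normalization (second claim) I would set $S_t:=\sum_{k=t(s)}^t\tilde{\eta}_h^{t,k}(s,a)$ and read off from $\tilde{\eta}_h^{t,k}=\eta_h^k\prod_{l=k+1}^t(1-\eta_h^l)$ the one-step recursion $S_t=(1-\eta_h^t)S_{t-1}+\eta_h^t$. Between $t(s)$ and the first genuine visit $\tau_1$ every $\eta$ vanishes, so $S_t=0$ there; at $\tau_1$ we have $n=\chi=1$, hence $e_1=1$ and $S_{\tau_1}=1$; the recursion then has $1$ as a fixed point, so $S_t=1$ for all $t\ge\tau_1$, i.e. exactly when $n_h^t(s,a)>0$. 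This is equivalently the convex-combination statement already implicit in the unfolding eq. (\ref{V_induction_2}) of the bias-value recursion.

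For the first claim I pass to the visit subsequence and assume $\chi_h^l(s,a)=1$ with $l=\tau_j$ (the case $\chi_h^l=0$ being trivial). The indicator $\chi_h^{k+1}$ then selects $k=\tau_i-1$ with $i>j$, and the summand becomes $e_j\prod_{p=j+1}^{i-1}(1-e_p)$. Setting $c_m:=\prod_{p=j+1}^{m}(1-e_p)$ with $c_j=1$, the left-hand side equals $e_j\sum_{m=j}^{N-1}c_m$, where $N$ counts the visits through episode $t+1$. The engine is the telescoping identity coming from $(H+p)c_p=(p-1)c_{p-1}$, i.e. $Hc_p=(p-1)c_{p-1}-p\,c_p$, which sums to $H\sum_{p=j+1}^{N-1}c_p=j-(N-1)c_{N-1}\le j$. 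Thus $\sum_{m=j}^{N-1}c_m\le 1+j/H$, and the whole sum is at most $e_j\bigl(1+j/H\bigr)=\tfrac{H+1}{H+j}\cdot\tfrac{H+j}{H}=1+\tfrac1H$, as required.

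The main obstacle is genuinely the first algebraic step — verifying that the momentum term $\gamma_h^t$ combines with the averaging rate $\alpha_h^t$ to give precisely $e_i=\tfrac{H+1}{H+i}$ — together with checking that growing awareness does no harm here: because the counts $n_h^t(s,a)$, and hence the rates $\eta_h^t(s,a)$, stay zero until $(s,a)$ is actually visited, the NIVE expansion performed at the aware moment $t(s)$ contributes nothing to these weighted sums, so the computation is identical to the fixed-state-space case. After those two checks, both bounds are one-line telescoping estimates.
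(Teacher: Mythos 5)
Your proof is correct: the collapse of the momentum-corrected rate on the $i$-th visit to $e_i=\frac{H+1}{H+i}$, the fixed-point recursion $S_t=\eta_h^t+(1-\eta_h^t)S_{t-1}$ for the normalisation, and the telescoping identity $Hc_p=(p-1)c_{p-1}-pc_p$ giving $e_j\bigl(1+\frac{j}{H}\bigr)=1+\frac{1}{H}$ all check out, and this is essentially the standard learning-rate argument of M\'enard et al., to which the paper defers this lemma without supplying a proof of its own. The only point specific to this paper --- that growing awareness is harmless because $\eta_h^t(s,a)$ vanishes until $(s,a)$ is genuinely visited at step $h$, so the sums starting at $l$ or at $t(s)$ collapse onto the visit subsequence exactly as in the fixed-state-space case --- is one you verify explicitly, which the paper's bare citation does not.
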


\begin{lemma}
\label{correction_term_bound}
For all $(s,a) \in S \times A$ and $t \le T-1$ (with $T \ge 2$), the following holds
$$
\chi_h^l(s,a)\sum_{k=l}^t \frac{\chi_h^{k+1}(s,a)}{\tilde{n}_h^k(s,a)} \le \sum_{k=0}^{T-1}\frac{\chi_h^{k+1}(s,a)}{\tilde{n}_h^k(s,a)}=\sum_{k=t(s)-1}^{T-1}\frac{\chi_h^{k+1}(s,a)}{\tilde{n}_h^k(s,a)} \le 8log(T).
$$
\end{lemma}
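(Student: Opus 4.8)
The plan is to treat the displayed chain as three separate claims --- a pointwise inequality, an exact cancellation, and a logarithmic bound --- and dispatch each in turn, reserving the only genuine work for the last, where I would invoke the weight bound of Lemma~\ref{weight_lemma}.

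First I would establish the leftmost inequality $\chi_h^l(s,a)\sum_{k=l}^t \frac{\chi_h^{k+1}(s,a)}{\tilde{n}_h^k(s,a)} \le \sum_{k=0}^{T-1}\frac{\chi_h^{k+1}(s,a)}{\tilde{n}_h^k(s,a)}$ by pure monotonicity: the prefactor $\chi_h^l(s,a)\in\{0,1\}$ is at most $1$, every summand $\chi_h^{k+1}(s,a)/\tilde{n}_h^k(s,a)$ is nonnegative, and for $0\le l\le t\le T-1$ the index set $\{l,\dots,t\}$ is contained in $\{0,\dots,T-1\}$, so enlarging the range can only increase a sum of nonnegative terms. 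For the middle equality I would use the aware-moment convention: by the definition of $t(s)$ the state $s$ satisfies $s\notin\mathcal{S}_k$ for every $k<t(s)$, and the paper sets $\chi_h^k(s,a)=0$ whenever $s\notin\mathcal{S}_k$. Hence $\chi_h^{k+1}(s,a)=0$ for all $k\le t(s)-2$, so every summand with index below $t(s)-1$ vanishes and the sum from $k=0$ coincides with the sum from $k=t(s)-1$.

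The substantive step is the rightmost bound $\sum_{k=0}^{T-1}\frac{\chi_h^{k+1}(s,a)}{\tilde{n}_h^k(s,a)}\le 8\log(T)$, which I would obtain by casting the sum into the form covered by Lemma~\ref{weight_lemma}. Taking the weight sequence $u_k=\chi_h^k(s,a)$, which lies in $[0,1]$ since each $\chi$ is an indicator, gives $U_k=\sum_{l=1}^k u_l=n_h^k(s,a)$ and therefore $U_k\vee 1=\tilde{n}_h^k(s,a)$; the summands then read exactly $u_{k+1}/(U_k\vee 1)$, and the explicit form of Lemma~\ref{weight_lemma} with $T\ge 2$ delivers the factor $8\log(T)$. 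The main obstacle here is purely bookkeeping: aligning the index (our sum runs from $k=0$, where $n_h^0(s,a)=0$ forces $\tilde{n}_h^0(s,a)=1$, whereas the cited lemma is phrased over a shifted range) and confirming that the zero-padding before $t(s)$ together with the $\vee 1$ clipping matches the lemma's hypotheses exactly. Since Lemma~\ref{weight_lemma} itself is taken from M\'{e}nard et al.~\cite{menard2021ucb}, I would merely verify that the conventions line up rather than reprove the logarithmic estimate, so no part of the argument is deep --- the whole lemma reduces to elementary accounting once the weight bound is in hand.
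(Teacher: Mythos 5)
Your argument is correct, but note that the paper itself does not prove this lemma: it sits in the ``Preparation Lemmas'' section under the blanket remark that the proofs ``can be seen in M\'enard et al.'', so you are supplying a proof where the paper offers only a citation. Your three-way split is the right one, and each piece checks out: the first inequality is indeed pure monotonicity of a sum of nonnegative terms over a larger index set with a $\{0,1\}$ prefactor; the middle equality follows from the paper's convention $\chi_h^k(s,a)=0$ for $s\notin\mathcal{S}_k$ together with $s\notin\mathcal{S}_{k+1}$ for $k+1<t(s)$; and the substitution $u_k=\chi_h^k(s,a)$, $U_k=n_h^k(s,a)$, $U_k\vee 1=\tilde{n}_h^k(s,a)$ is exactly what makes Lemma~\ref{weight_lemma} applicable. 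The one loose end is the index shift you flag but do not close: the cited lemma bounds $\sum_{t=1}^{T'}u_{t+1}/(U_t\vee 1)$, whereas your sum starts at $k=0$. Either prepend a zero weight ($v_1=0$, $v_{t}=u_{t-1}$ for $t\ge 2$), which turns your sum into the lemma's form verbatim and yields $4\log(U_T+1)\le 4\log(T+1)\le 8\log(T)$ for $T\ge 2$; or keep the $k=0$ term separately, bound it by $1$ (since $\tilde{n}_h^0=1$), and verify $1+4\log(T+1)\le 8\log(T)$, which holds for all $T\ge 2$ but is tight enough at $T=2$ that the check should be written down. With that half-line added, the proof is complete.
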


\begin{lemma}
\label{Variance_inequality}
For $p,q \in \sum_S$, for $f,g: S \mapsto[0,b]$ two functions defined on $S$, we have that
$$
Var_p(f)\le2Var_p(g)+2bp|f-g|,\ \ \ 
Var_p(f^2) \le 2bVar_p(f),
$$
where we denote the absolute operator by $|f|(s)=|f(s)|$ for all $s \in S$.
\end{lemma}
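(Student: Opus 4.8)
The plan is to establish both bounds by elementary manipulations of the variance functional, using only that $f,g$ take values in $[0,b]$ together with the variational identity $Var_p(\phi)=\min_{c\in\mathbb{R}}\mathbb{E}_{s\sim p}[(\phi(s)-c)^2]$, so that evaluating the right-hand side at any convenient constant $c$ already yields an upper bound on $Var_p(\phi)$. No concentration argument is needed; everything reduces to the inequality $(x+y)^2\le 2x^2+2y^2$ and boundedness.

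For the first inequality I would take $c=pg$ (the constant $pg=\mathbb{E}_{s\sim p}[g(s)]$) in the variational bound, giving $Var_p(f)\le \mathbb{E}_p[(f-pg)^2]$. Writing $f-pg=(f-g)+(g-pg)$ and applying $(x+y)^2\le 2x^2+2y^2$ pointwise yields $Var_p(f)\le 2\mathbb{E}_p[(f-g)^2]+2\mathbb{E}_p[(g-pg)^2]$, where the last term is exactly $2Var_p(g)$. Since $f,g\in[0,b]$ give $|f-g|\le b$, we have $(f-g)^2\le b|f-g|$ pointwise, so $\mathbb{E}_p[(f-g)^2]\le b\,p|f-g|$, which delivers $Var_p(f)\le 2Var_p(g)+2b\,p|f-g|$.

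For the second inequality I would use the symmetric two-sample representation $Var_p(\phi)=\tfrac12\mathbb{E}_{s,s'\sim p}[(\phi(s)-\phi(s'))^2]$ with $s,s'$ drawn independently from $p$. Applying it to $\phi=f^2$ and factoring $f(s)^2-f(s')^2=(f(s)-f(s'))(f(s)+f(s'))$, the bound $0\le f(s)+f(s')\le 2b$ gives $(f(s)^2-f(s')^2)^2\le 4b^2(f(s)-f(s'))^2$; taking expectations and recognising the same representation for $Var_p(f)$ produces $Var_p(f^2)\le 4b^2\,Var_p(f)$. Equivalently one may take $c=(pf)^2$ in the variational bound and factor $f^2-(pf)^2=(f-pf)(f+pf)$ with $|f+pf|\le 2b$.

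The genuinely routine part is the algebra; the one point deserving care is the multiplicative constant in the second bound. The factorisation into a difference times a sum unavoidably contributes a factor scaling like $b^2$, and a two-point example with both masses near $b$ shows the sharp universal constant is $4b^2$, approached but not attained. I would therefore pin this constant down precisely and reconcile it with the form in which the inequality is later invoked (with $b=H$ inside the concentration events), since the downstream bounds only need the correct order in $H$ rather than the tightest constant.
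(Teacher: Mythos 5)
Your argument for the first inequality is correct and is the standard one: the variational characterisation $Var_p(f)\le \mathbb{E}_p[(f-pg)^2]$, the elementary bound $(x+y)^2\le 2x^2+2y^2$, and $(f-g)^2\le b|f-g|$ together give exactly $Var_p(f)\le 2Var_p(g)+2b\,p|f-g|$. The paper does not actually prove this lemma -- it defers to M\'{e}nard et al. -- so there is no in-paper argument to compare against, but your route is the same one used there. On the second inequality, your caution about the constant is warranted and, in fact, you have caught a genuine typo: the bound $Var_p(f^2)\le 2b\,Var_p(f)$ as printed is dimensionally inhomogeneous and false in general (your two-point example with masses at $0$ and $b$ gives $Var_p(f^2)=b^4/4$ versus $2b\,Var_p(f)=b^3/2$, which fails for $b>2$). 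The correct statement, which your factorisation $f(s)^2-f(s')^2=(f(s)-f(s'))(f(s)+f(s'))$ with $0\le f(s)+f(s')\le 2b$ delivers, is $Var_p(f^2)\le 4b^2\,Var_p(f)$, and the two-point example with both atoms near $b$ shows $4b^2$ is sharp as a supremum. This is also the version consistent with how the lemma is consumed downstream: the event $\mathcal{E}^{v_2}$ carries the factor $\sqrt{8H^2\zeta\cdots}$, which is precisely what Bernstein's inequality yields from a variance proxy of $4H^2\,Var_{p_h}(\cdot)$ (the intermediate claim ``$\le 2H^2\,Var_{p_h}(\cdot)$'' in the proof of Lemma \ref{P(E)} is likewise off by a factor of $2$, though only the order in $H$ matters there). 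In short: your proof is right, the stated constant should read $4b^2$, and no downstream bound changes beyond absolute constants.
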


%% If you have bib database file and want bibtex to generate the
%% bibitems, please use
%%
%%  \bibliographystyle{elsarticle-num} 
%%  \bibliography{<your bibdatabase>}

%% else use the following coding to input the bibitems directly in the
%% TeX file.

%% Refer following link for more details about bibliography and citations.
%% https://en.wikibooks.org/wiki/LaTeX/Bibliography_Management

% \begin{thebibliography}{00}

%% For numbered reference style
%% \bibitem{label}
%% Text of bibliographic item

% \end{thebibliography}
\end{document}